\newcommand{\hide}[1]{}    
\newtheorem{postulate}{Postulate}    
\newtheorem{corollary}{Corollary}
\newtheorem{proposition}{Proposition}  
\newtheorem{theorem}{Theorem} 
\newtheorem{lemma}{Lemma}  
\newtheorem{definition}{Definition}
\newcommand{\formulalength}{\ensuremath{\textsf{f}\_\textsf{len}}}
\newcommand{\extract}{\textsf{Extract}} 
\newcommand{\translate}{\textsf{Translate}}
\newcommand{\andMeta}{\ensuremath{\wedge^{\dagger}}}
\newcommand{\orMeta}{\ensuremath{\vee^{\dagger}}} 
\newcommand{\impliesMeta}{\ensuremath{\rightarrow^{\dagger}}}
\newcommand{\intFrame}{\ensuremath{\mathfrak{M}}}  
\newcommand{\tintFrame}{\ensuremath{\mathfrak{M}}} 
\newcommand{\lformulasize}{\ensuremath{\textsf{f}\_\textsf{size}}}
\newcommand{\negMax}{\ensuremath{\textsf{neg}\_\textsf{max}}}
\newcommand{\recurseReduce}{\textsf{recursiveReduce}}
\newcommand{\recurseReducet}{\textsf{recursiveReduce2}}
\newcommand{\abbR}{\textsf{R}}
\newcommand{\squash}{\textsf{EXTRACT}}
\newcommand{\exptime}{\textsf{EXPTIME}}
\newcommand{\sat}{\textsf{UNSAT}}  
\newcommand{\rewrite}{\textsf{REWRITE}}  
\newcommand{\unorders}[1]{\ensuremath{\langle#1\rangle}} 
\newcommand{\seqs}[1]{\ensuremath{\lfloor#1\rfloor}}
\newcommand{\struct}{\ensuremath{\textsf{FStruct}}}
\title{Gradual Classical Logic for Attributed Objects - Extended in Re-Presentation}
\author{Ryuta Arisaka\\
ryutaarisaka@gmail.com}
\begin{document}  
    \hide{ 
    Dear Journal of Philosophical Logic Editors, \\\\
    
    I have written a document which focuses on 
    a foundational property of the predicate logic, 
    namely the assumption of atomicity of entities. 
    The logic that this manuscript presents expresses 
    predication without the division of predicates from 
    terms. The logic can capture, 
    very intuitively, some properties of natural expressions 
    that the post-Fregean sentential logic tends to shun away, 
    such as contrarieties and pre-suppositions. 
    It, I believe, is of interest to areas of logic, philosophy,
    linguistics, artificial intelligence and computer science, 
    and I wish it to be considered for publication. Thank you 
    very much for handling this document. \\\\

    \noindent Yours sincerely,\\
    Author, 3.Mar.2015 
}
    \begin{abstract} 
      Our understanding about things is conceptual.  
    By stating that we reason about objects, 
    it is in fact not the objects but concepts 
   referring to them that we manipulate.     
    Now, so long just as we acknowledge 
    infinitely extending notions such as 
    space, time, size, colour, etc, - 
    in short, any reasonable quality - into  
    which an object is subjected, 
    it becomes 
    infeasible to affirm
    atomicity in the concept referring to the object. 
    However, 
    formal/symbolic 
    logics typically presume atomic entities 
    upon which other expressions are built.    
    Can we reflect our intuition about 
    the concept onto formal/symbolic logics at all? 
    I assure that we can, but the usual 
    perspective about the atomicity needs inspected. 
    In this work, I present {\it gradual
        logic} which materialises the observation 
    that we cannot 
    tell apart whether a so-regarded 
    atomic entity is atomic or is just atomic 
    enough not to be considered non-atomic. 
    The motivation 
    is to capture 
    certain phenomena that naturally occur 
    around concepts with 
    attributes, including presupposition and 
    contraries. I present logical particulars 
    of the logic, which is then 
    mapped onto formal semantics. Two 
    linguistically interesting semantics 
    will be considered. 
    Decidability is shown. 
    \end{abstract} 
\maketitle 
\section{Introduction}     
I present a 
logic  
that expresses gradual shifts in domain of discourse.   
The motivation is to capture certain peculiar phenomena 
about concepts/objects and other concepts/objects\footnote{
    These two terms will not be strongly 
    distinguished  in this work. An object may exist by itself, but 
       to reason about relation between objects or 
       just to speak about them, 
       it is, as we presume, 
       concepts referring to the objects that 
       we reason/speak about.} 
 as their attributes. The first such phenomenon is that 
extension of a concept alters when 
it becomes an attribute to other concepts. 
Also a concept that is specified another 
concept as its attribute becomes an intension \cite{Carnap47,Montague74,Church51} of 
the concept which itself is an extensional concept. 
Consider 
for instance `brooch'ed `hat' and `brooch'. 
The `brooch' in the former is an attribute to `hat'.   
By definition, if 
anything is an attribute 
to something at all, it must be found among all that can 
be an attribute to it. 
Whereas the extension of 
`brooch' in the latter is only 
delimited by our understanding about `brooch', 
that in the former as an attribute of 
`hat' is delimited also by our understanding 
about the concept `hat' (needless to say, only if 
the understanding 
of ours should permit `brooch' as an attribute at all). 
But this is not all. The attribute in turn 
specialises the `hat' to which it is an attribute: 
the `brooch'ed `hat' forms an intension of the `hat', 
and itself becomes an extensional concept 
``brooch'ed `hat''. \\
\indent The shift in extension is not typically 
observed in formal logics, be they 
temporal, epistemic, modal {\it etc.}  
Some exceptions that challenge the norm are 
a kind of spatio-temporal 
logics \cite{Gabelaia05,Muller98} and 
some kinds of context logics (\emph{Cf.} 
\cite{McCarthy93,Guha03}) in the line of \cite{Buvac93,Nayak94}. In \cite{Gabelaia05} for instance, 
Gabelaia {\it et al.} consider the definition 
of EU at a point of time and at another point of 
time. If some countries are merged into the current  
EU, then the term EU will remain EU at the 
future time reference as it is now, but the 
spaces that the two EU occupy are not the same. 
Similar phenomena are occurring 
in the relation between concepts/objects and their 
attributes. However, unlike the case of 
the spatio-temporal logics, 
there is no external and global space for them: 
there are only those spaces generated by the 
(extension of) concepts 
themselves. The stated (re-)action of 
intension/extension within an attributed concept/object 
is another intriguing feature that has not been 
formalised before.  \\
\indent Another point about the concept is that 
a concept in itself, which is to say, 
an atomic concept which 
does not itself possess 
any other concepts as its attributes, 
is almost certainly imperceptible,\footnote{Let 
us arbitrarily suppose the concept hat, and let 
us conduct an experiment as to if we could 
perceive the hat in itself as something 
differing from nothingness for which any scenario where it comes with 
an attribute is inconceivable. 
To begin with, if the 
word hat evoked in our mind any specific hat with 
specific colour and shape, we first remove the colour 
out of it. If the process should make it transparent, 
we then remove the transparentness away from it. 
And if there should be still some things that 
are by some means perceivable as having originated 
from it, then because they are an attribute of the hat,
we again remove any one of them. If {\it the humanly 
    no longer detectable 
    something is not nothingness} is not itself 
contradictory, then there must be still some quality
originating in the hat that makes the something 
differ from nothingness. Now the question is whether 
the something can be perceived at all to be different 
from nothingness. Intuition speaks otherwise.} and hence almost certainly cannot be 
reasoned about. Typically, however, formal/symbolic logics assume smallest entities 
upon which other expressions are formed.   
In this work I challenge the assumption, and 
materialise the observation that we cannot tell apart 
- that is, 
{\it we cannot know} - whether 
a so-regarded atomic entity is atomic or is just 
atomic enough not to be considered non-atomic. 
I present a logic in which every entity is non-atomic, 
reflecting our intuition about the concept.\footnote{The utility
    of logical 
    non-atomicity
    is noted in a recent work 
    \cite{Jung15} in programming community. 
    The idea of logical non-atomicity 
    in formal/symbolic logic itself, however, 
    appears previously 
    in 
    the immediately preceding 
    work 
    to the current paper, namely, 
    in \cite{Arisaka14tech1}; as well as, and more bluntly, in its variation as 
    a technical report.} 
    \\
\indent Strikingly we can represent both extensional
shifts and non-atomicity using the familiar classical 
logic only (but 
the results extend to other 
Boolean logics); with 
many domains of discourse.\footnote{The use of 
multiple domains of discourse is also notable 
in contextual logics. 
Connections to those will be mentioned 
at the end of this work. }
The idea is as follows. We shall   
define a binary connective {\small $\gtrdot$} over classical logic 
instances. As an example, {\small $\textsf{Hat} \gtrdot 
    \textsf{Brooch}$} reads as;  Hat is, 
and under the presupposition 
that Hat is, Brooch is as its attribute.
In this simple expression there are  
two domains of discourse: one in which 
{\small $\textsf{Hat}$} in {\small $\textsf{Hat} 
    \gtrdot \textsf{Brooch}$} is being discussed; 
and one in which {\small $\textsf{Brooch}$} in  
{\small $\textsf{Hat} \gtrdot \textsf{Brooch}$} 
is being 
discussed. 
The second domain of discourse 
as a whole is delimited by the (extension of) 
{\small $\textsf{Hat}$} 
that gives rise to it. {\small $\textsf{Hat} \gtrdot 
    \textsf{Brooch}$} is an intension of 
{\small $\textsf{Hat}$}, and itself forms 
an extensional concept. The non-atomicity of concepts 
is captured without breaking the properties 
of classical logic. The ideas are that 
every concept has attributes, 
but that the attributes are not discussed 
in the same domain of discourse as the concept is 
discussed in. From within the domain 
of discourse discussing {\small $\textsf{Hat}$} 
in {\small $\textsf{Hat} \gtrdot \textsf{Brooch}$},  
it cannot be perceived whether it has or has not 
attributes, \emph{i.e.} whether it is atomic or is 
just atomic enough not to be considered non-atomic.   \\
\indent We can also explain some reasonably common 
every-day linguistic 
phenomenon with this connective. Let us turn to 
an example.\\\\
{\it Episode}.
There is 
a tiny hat shop in our town, having the following in stock: 
\begin{enumerate}[leftmargin=0.5cm]
  \item 3 types of hats: orange hats, green hats 
    ornamented with 
    some brooch, and blue hats decorated with 
    some white hat-shaped accessory. Only the green 
    and the blue hats are displayed in the shop. 
  \item 2 types of shirts: yellow and blue shirts. 
    Only the blue shirts are displayed in the shop. 
\end{enumerate}
A young man has come to the hat shop.  
After a while he asks the shop owner, a lady of many a year of experience 
in hat-making; 
``Have you got a yellow hat?'' 
Knowing that it is not in her shop, she answers; 
``No, I do not have it in stock,'' negating 
the possibility that there is one in stock at her shop at the present
point of time.  {\it Period}. \\\\
\indent  
``What is she actually denying about?'' 
is our question, however. 
It is plausible that, in delivering the answer, 
the question posed may have allowed her to infer that the young man was 
looking for 
a hat, a yellow hat in particular. Then the answer
may be followed by she saying; ``\dots but I do have hats with 
different colours including ones not currently displayed.'' 
That is, while she denies the presence of a yellow hat, 
she still presumes the availability of hats of which she 
reckons he would like to learn. 
It does not appear so unrealistic to suppose such a thought 
of hers that he may be ready to 
compromise his preference for a yellow hat with 
some non-yellow one, 
possibly an orange one in stock, given its comparative closeness 
in hue to yellow. \\
\indent Now, what if the young man turned out to be a 
town-famous collector of yellow articles? Then it may be 
that from his question she had divined instead that 
he was looking for something yellow, a yellow hat in 
particular, in which case her answer could have 
been a contracted form of ``No, I do not have it 
in stock, but I do have a yellow shirt nonetheless (as 
you are looking after, I suppose?)"\\
\indent Either way, these somewhat-appearing-to-be partial 
negations contrast with 
the classical negation with which her 
answer can be interpreted only as that she does not have a yellow 
hat, nothing less, nothing more, with no restriction in the range 
of possibilities outside it. \\  
\indent The explanation that I wish to provide  is that in the 
first case she actually means {\small $\textsf{Hat} 
    \gtrdot \neg \textsf{Yellow}$}, presuming 
the main concept {\small $\textsf{Hat}$} 
but negating {\small $\textsf{Yellow}$} as its attribute in a different 
domain of discourse in which its attributes 
can be discussed; and in the second case 
she actually means 
{\small $\textsf{Yellow} \gtrdot \neg \textsf{Hat}$} with the main 
concept {\small $\textsf{Yellow}$} presumed but {\small $
    \textsf{Hat}$} denied as its attribute. 
Like this manner, gradual classical logic 
that I propose here 
can capture partial negation, which is known as 
contrariety in the pre-Fregean term logic 
from the Aristotle's era \cite{Horn89}, 
as well as nowadays more 
orthodox contradictory negation. 
Here we illustrated attribute 
negation. Complementary, we may also consider 
object negation of the kind 
{\small $\neg \textsf{Hat} \gtrdot \textsf{Yellow}$}, 
as well as more orthodox negation of the sort 
{\small $\neg (\textsf{Hat} \gtrdot \textsf{Yellow})$} 
which I call attributed-object negation. \\
\indent My purpose 
is to assume attributed concepts/objects\footnote{In the rest 
I simply write attributed objects, assuming 
that it is clear that we do not strongly distinguish 
the two terms in the context of this work.} as primitive 
entities and to analyse the logical behaviour of 
{\small $\gtrdot$} in interaction with 
the other familiar connectives in classical logic. In 
this logic, 
the sense of `truth', a very fundamental property of classical 
logic, gradually shifts\footnote{This should not 
    be confused with the idea of many 
    truths in a single domain of discourse \cite{Hajek10,Gottwald09}. } 
by domains of discourse moving deeper into 
attributes of (attributed) objects. 
As for 
inconsistency, if there is an inconsistent argument within  
a discourse on attributed objects, wherever it may be that 
it is occurring, 
the reasoning part of which is inconsistent cannot be 
said to be consistent. For this reason it remains in gradual classical logic 
just as strong as is in standard classical logic.   
\hide{
{\it A short description} There is a book. It is on a desk. 
It is titled `Meditations and 
Other Metaphysical Writings'. It, or the document 
from which the English translation was borne, is written by 
Ren{\'e} Descartes. {\it Period} \\ 
\indent I
have just described a book, not some freely arbitrary book  
but one with  
a few pieces of information: that it is on desk, 
that it has the said title, and that it is authored by 
Descartes. 
Let us suppose that I am with a friend of mine. 
If I simply said {\it There is a book} irrespective of being 
fully conscious 
that the book that I have spotted
is the described book and none others, the 
friend of mine, who is here supposed oblivious of any articles on the desk, would have no reason 
to go against imagining 
whatever that is considered a book, say 
`Les Mis{\'e}rables'. The short statement by itself 
does not forestall such a possibility. 
By contrast, 
if, as in the description provided at the beginning, 
I ask him to think of a laid-on-desk Ren{\'e} Descartes  
book titled `Meditations and Other Metaphysical Writings', 
then there would be certain logical dissonance if 
he should still think of `Les Mis{\'e}rables' as a possible
option that conforms to the given description. In innumerable occasions 
like this example, adjectives (or adverbs or whatever terms that 
fulfil the same purpose) are 
utilised to 
disambiguate terms that may denote more than what we intend to communicate.\\
\indent This feature of natural languages, allowing  
formulation 
of a precise enough concept through coordination of 
(1) broad concepts and (2) attributes that narrow down their possibilities, 
is 
a very economical and suitable one for us. For 
imagine otherwise that every word exactly identifies 
a unique and indivisible object around us, then 
we would have no abstract concepts such as 
generalisation or composition 
since generalisation must assume specificity and 
composition decomposability of what result from the process,
neither of which accords with the proposed code of the alternative 
language. While it is certain that 
concepts expressible 
in the alternative language sustain 
no degree of ambiguity in what they refer to, and in this sense  
it may be said to have an advantage to our languages,  
the absence of abstract concepts
that we so often rely upon for 
reasoning is rather grave a backlash that 
would stem its prospect for wide circulation. 
This is easy to see, for presume the alternative language indeed,
and 
suppose some individual who knows an indivisible description, and 
who is then 
attempting to share it with his/her acquaintances. 
However, the description, by the presumption 
that it be 
indivisible, 
cannot depend on any reasonable quality including 
time and space. Meanwhile, the mind of 
the individuals into which 
he/she attempts to transport the knowledge lies 
outside his/hers, which necessitates 
him/her to explain what it is that 
he/she knows to them in case they do not already know it themselves 
(although, reasonably, he/she would not be able to tell 
whether they know it). 
But the explanation cannot be done without violating 
the initial supposition that it be indivisible. \\
\indent Concepts in our languages are, by contrast, 
an identifier of a group rather than an individual, 
allowing 
generation of a vast domain of discourse with a relatively small
number of them in aggregation (\emph{e.g.} `book' and `title' cover anything 
that can be understood as a book and/or a title), and at the same 
time enabling refinement (\emph{e.g.} `title'd `book' 
denotes only those books that are titled). The availability 
of mutually influencing generic concepts adds to so much 
flexibility in our languages. \\
\indent In this document, I will focus on 
primitively
representing the particular relation between objects/concepts (no 
special distinction between the two hereafter\footnote{An object
    may indeed exist by itself, outside our perception. 
    But the thing in itself 
    does not fall within our ken to fathom (Cf. Critique of Pure 
    Reason; \cite{Kant08} for English translation).
        By stating that we reason 
about objects, we in effect mean to reason about concepts 
referring to the objects.}) and 
what may form 
their attributes, which will lead to development of a new 
logic inquiring
into presupposition and atomicity of entities in 
formal/symbolic logic.
Our domain of discourse will range 
over certain set of (attributed) objects (which may themselves 
be an attribute to other (attributed) objects) and pure attributes that 
presuppose existence of some (attributed) object as their host.  
Needless to say, when we talk about or even just imagine
an object with some explicated attribute, the attribute 
must be found among all that can become an 
attribute to it. To this extent it is confined within the presumed
existence of the object. 
\hide{ To this extent it is confined within 
the presumed existence of the object. 
This is in a broader sense co-predication \cite{ }.  
}
The new logic intends to address certain phenomena around attributed 
objects which 
I think are reasonably common to us but which  
are not reasonably expressible in classical logic.  
Let us turn to the following example for illustration 
of the peculiarity. 
\subsection{On peculiarity of attributed objects 
as observed in contrariety, and on the truth `of' classical logic}  
{\it Episode}
Imagine that there is 
a tiny hat shop in our town, having the following in stock: 
\begin{enumerate}[leftmargin=0.5cm]
  \item 3 types of hats: orange hats, green hats 
    ornamented with 
    some brooch, and blue hats decorated with 
    some white hat-shaped accessory, of which only the green 
    and the blue hats are displayed in the shop. 
  \item 2 types of shirts: yellow and blue, 
    of which only the blue shirts are displayed in the shop. 
\end{enumerate}
Imagine also that a young man has come to the hat shop.  
After a while he asks the shop owner, a lady of many a year of experience 
in hat-making; 
``Have you got a yellow hat?'' Well, obviously there are 
no yellow hats to be found in her shop. She answers; 
``No, I do not have it in stock,'' negating 
the possibility that there is one in stock at her shop at the present
point of time.  {\it Period} \\
\indent But ``what is she actually denying about?'' is the inquiry that I 
consider 
pertinent to this writing. We ponder; in delivering the answer, 
the question posed may have allowed her to infer that the young man was 
looking for 
a hat, a yellow hat in particular. Then the answer
may be followed by she saying; ``\dots but I do have hats in 
different colours including ones not currently displayed.'' 
That is, while she denies the presence of a yellow hat, 
she still presumes the availability of hats of which she 
reckons he would like to learn. 
It does not appear so unrealistic to suppose such a thought 
of hers that he may be ready to 
compromise his preference for a yellow hat with 
some non-yellow one, 
possibly an orange one in stock, given its comparative closeness 
in hue to yellow. \\
\indent Now, what if the young man turned out to be a town-famous  
collector of yellow articles? Then it may be that 
from his question she had divined instead 
that he was looking for something yellow, 
a yellow hat in particular, in which case her answer 
could have been a contracted form of ``No, I do not have 
it in stock, but I do have a yellow shirt nonetheless (as you 
are looking after, I suppose?)'' \\
\indent Either way, these somewhat-appearing-to-be partial 
negations 
contrast with 
the classical negation with which her 
answer can be interpreted only as that she does not have a yellow 
hat, nothing less, nothing more, with no restriction in the range 
of possibilities outside it. \\ 
\indent An analysis that I attempt regarding this 
sort of usual every-day phenomenon around concepts and 
their attributes, which leads for example to a case where 
negation of some concept with attributes 
does not perforce entail negation of the concept itself but only that 
of the attributes, 
is that presupposition of a concept 
often becomes too strong 
in our mind to be invalidated. Let us represent 
truth/falsehood in binary number for a short while, in order to 
abstract the core matter. In 
classical 
reasoning that we are familiar with, 
1 - truth - is what we should consider is our truth 
and 0 - falsehood - is what we again should consider is our non-truth.   
When we suppose a set of true atomic propositions 
{\small $p, q, r, \cdots$} under some possible interpretation of them, 
the truth embodied in 
them does - by definition - neither transcend the truth  
 that the 1 signifies nor go below it. The innumerable 
true propositions miraculously sit on the given definition 
of what is true, 1. By applying alternative interpretations, we may
have a different set of innumerable true propositions possibly 
differing from the {\small $p, q, r, \cdots$}. However, no  
interpretations
are meant to modify the perceived significance of the truth  
which   
remains immune to them. 
Here what renders it 
so immutable is the assumption of classical logic that 
no propositions that cannot be given a truth value by means of 
the laws 
of classical logic may appear as a proposition: there is nothing that is 30 \% true, and also nothing that 
is true by the probability of 30 \% unless, of course, 
the probability of 
30 \% should mean to ascribe to our own confidence level, which I here 
assume is 
not part of the logic, of the proposition 
being true. 
\hide{A proposition is a constituent in a classical discourse 
only insofar as it satisfies all the conditions for it 
to be recognised a constituent in the classical discourse.}\\
\indent However, one curious fact is that the observation made so far 
can by no means 
preclude a 
deduction that, {\it therefore} and no 
matter how controversial it may appear, 
the meaning of the truth, so long as it can be observed only through 
the interpretations that force the value of propositions to go 
coincident with it and only through examination on the nature\footnote{Philosophical, that is, real, reading 
of the symbols {\small $p, q, r,\dots$}.} 
 of 
those propositions that were 
made true by them, must invariably depend on 
the delimiter of our domain of discourse, the 
set of propositions; 
on the presupposition of which are sensibly meaningful the 
interpretations; 
on the presupposition of which, 
in turn, is possible classical logic.
Hence, quite despite 
the actuality that for any set of propositions as can form 
a domain of discourse for classical logic it is sufficient that 
there 
be only one truth, it is not {\it a priori} possible 
that we find by certainty any relation to hold between such 
individual truths and the universal truth, if any, whom we cannot 
hope to successfully 
invalidate. Nor is it {\it a priori} possible to sensibly impose a 
restriction on 
any domain of discourse for classical reasoning to one that is 
consistent with the universal truth, provided again that 
such should exist. But, then, it is not by the force of necessity 
that, having a pair of domains of discourse, we find one 
individual truth and the other wholly interchangeable.  
In tenor, suppose that 
truths are akin to existences, then just as 
there are many existences, so are many truths, every one of which 
can be subjected to classical reasoning, but no distinct pairs of which
{\it a priori} exhibit a trans-territorial compatibility.  
But the lack of compatibility also gives rise 
to a possibility of dependency among them within  
a meta-classical-reasoning 
that recognises the many individual truths at once. In situations 
where some concepts in a domain of discourse over which reigns 
a sense of truth become too strong an assumption to be 
feasibly falsified, the existence of the concepts becomes 
non-falsifiable during the discourse of existences of 
their attributes (which form another domain of discourse); it becomes a delimiter of classical reasoning, that is, it sets a `truth' for them.    
\subsection{Gradual classical logic: a logic for attributed objects}
It goes hopefully without saying that 
what I wished 
to impart through the above fictitious episode 
was not so much about which negation 
should take a precedence over the others as about   
the distinction of objects and what may form their attributes, 
\emph{i.e.} about  
the inclusion relation to hold between the two and about how 
it could restrict domains of discourse. 
If we are 
to assume attributed objects as primitive entities 
in logic, we for example do not just have the negation 
that negates the presence of an attributed object (attributed-object 
negation); on the other hand, 
the logic should be able to express the negation that 
applies to an attribute only (attribute negation) and, complementary, 
we may also consider the negation 
that applies to an object only (object negation).   
We should also consider 
what it may mean to conjunctively/disjunctively have several 
attributed objects and should attempt a construction of the logic 
according to the analysis. I call the logic derived from 
all these analysis {\it gradual classical logic} in which 
the `truth', a very fundamental property of classical 
logic, gradually shifts
by domains of discourse moving deeper into 
attributes of (attributed) objects. For a special emphasis, here the gradation in truth occurs only 
in the sense that 
is spelled out in 
the previous sub-section. 
One in particular should not confuse this logic with multi-valued logics 
(Cf. \cite{Gottwald09,Hajek10} for summarizations)
that have multiple truth values in the same domain of discourse, for 
any (attributed) object in gradual classical logic assumes 
only one out of the two usual possibilities: either it is true 
(that is, because we shall essentially consider conceptual 
existences, it is synonymous to saying that it exists) or it is false (it 
does not exist). In this sense it is indeed classical logic. 
However, 
in some sense - because we can observe transitions in the sense of 
the `truth' within the logic 
itself - it has a bearing of meta-classical logic.  
As for 
inconsistency, if there is an inconsistent argument within  
a discourse on attributed objects, wherever it may be that 
it is occurring, 
the reasoning part of which is inconsistent cannot be 
said to be consistent. For this reason it remains in gradual classical logic 
just as strong as is in standard classical logic.  \\ 
} 
\subsection{Structure of this work}    
Shown below is the organisation of this work. The basic 
conceptual 
core is formed in Section 1, Section 2, which 
is put into formal semantics in Section 3. 
Decidability of the logic is proved in Section 4. 
After the foundation is laid down, 
more advanced observations will be made 
about the object-attribute relation. They will be 
found in Section 5. 
Section 6 concludes with 
prospects.  
\begin{itemize}[leftmargin=0.3cm]
  \item Development of gradual classical logic 
      (Sections 1 and Section 2). 
  \item A formal semantics 
    of gradual classical logic and a proof that it is 
    not para-consistent/inconsistent 
    (Section 3). 
  \item Decidability of gradual classical logic 
      (Section 4).  
  \item Advanced materials: the notion 
      of recognition cut-offs, and an alternative presentation 
      of 
      gradual classical logic (Section 5). 
  \item Conclusion (Section 6). 
\end{itemize} 
\section{Gradual Classical Logic: Logical Particulars} 
In this section we shall look into logical particulars of 
gradual classical logic. Some familiarity 
with 
propositional classical logic, in particular with 
how the logical connectives behave, is presumed. 
Mathematical transcriptions of 
gradual classical logic are found in the next section.  
\subsection{Logical connective for object/attribute and 
    interactions with negation} 
We shall dedicate the symbol {\small $\gtrdot$} to represent the object-attribute relation.
The usage 
of the new connective is fixed to take the form 
{\small $\textsf{Object}_1 \gtrdot \textsf{Object}_2$}. 
It denotes an attributed object. 
{\small $\textsf{Object}_1$} is more generic 
an object than {\small $\textsf{Object}_1 \gtrdot \textsf{Object}_2$} 
({\small $\textsf{Object}_2$} acting as an attribute to 
{\small $\textsf{Object}_1$}  
makes {\small $\textsf{Object}_1$} more specific).  
The schematic reading is as follows:  
``It is true that {\small $\textsf{Object}_1$} is, 
and it is true that it has {\small $\textsf{Object}_2$} 
as its attribute." Now, this really is a short-form 
of the following expression: 
``It is true by some sense of truth X reigning over 
the domain of discourse discussing {\small $\textsf{Object}_1$}
that {\small $\textsf{Object}_1$} is judged existing in 
the domain of discourse,\footnote{As must be 
    the case, 
    a domain of discourse defines what can be talked about, 
    which itself does not dictate that all the elements 
    that are found in the domain are judged existing.} 
and it is true by some sense of truth Y reigning over 
the domain of discourse discussing {\small $\textsf{Object}_2$}
as an attribute to {\small $\textsf{Object}_1$} that 
{\small $\textsf{Object}_1$} is judged having 
{\small $\textsf{Object}_2$} as its attribute."
Also, this reading is what is meant when we say 
that ``It is true that $\textsf{Object}_1 \gtrdot 
\textsf{Object}_2$ is," where the sense of 
the truth Z
judging this statement has relation to X and Y, 
in order for compatibility. I take 
these side-remarks for granted in the rest without explicit 
stipulation. 
Given 
an attributed object {\small $\textsf{Object}_1 \gtrdot 
\textsf{Ojbect}_2$},  
{\small $\neg (\textsf{Object}_1 \gtrdot \textsf{Object}_2)$} 
expresses its attributed object negation, 
{\small $\neg \textsf{Object}_1 \gtrdot \textsf{Object}_2$} 
its object negation and 
{\small $\textsf{Object}_1 \gtrdot \neg \textsf{Object}_2$} 
its attribute negation. Again the schematic readings for them 
are, respectively;   
\begin{itemize}[leftmargin=0.3cm] 
  \item
It is false 
that 
{\small $\textsf{Object}_1 \gtrdot \textsf{Object}_2$} is. 
\item 
It is false that {\small $\textsf{Object}_1$} is, but 
it is true that some non-{\small $\textsf{Object}_1$} is  
which has an attribute of {\small $\textsf{Object}_2$}. 
\item 
It is true that {\small $\textsf{Object}_1$} is, 
but it is false that it has an attribute 
of {\small $\textsf{Object}_2$}.
\end{itemize}
The presence 
of negation flips ``It is true that \ldots'' into 
``It is false that \ldots'' and vice versa. 
But it should be also 
noted how negation acts in attribute negations and 
object/attribute negations. Several specific examples 
constructed parodically from the items in the hat shop episode are;  
\begin{enumerate}[leftmargin=0.5cm]
  \item {\small $\text{Hat} \gtrdot \text{Yellow}$}:        
    It is true that hat is, and it is true that it has 
    the attribute of being yellow (that is, it is yellow). 
\item {\small $\text{Yellow} \gtrdot \text{Hat}$}:     
  It is true that yellow is, and it is true that it 
  has hat as its attribute. 
\item {\small $\text{Hat} \gtrdot \neg \text{Yellow}$}:       
    It is true that hat is, but it is false that it is yellow.
\item {\small $\neg \text{Hat} \gtrdot \text{Yellow}$}:  
    It is false that hat is, but it is true that yellow object (which 
    is not hat) is.  
      \item {\small $\neg (\text{Hat} \gtrdot \text{Yellow})$}:          
  Either it is false that hat is, or if it is true that hat is,
  then it is false that it is 
  yellow.    
\end{enumerate} 
\subsection{Object/attribute relation and 
    conjunction} 
We examine specific examples first involving 
{\small $\gtrdot$} and {\small $\wedge$} (conjunction), 
and then observe what the readings imply.  
\begin{enumerate}[leftmargin=0.5cm]
  \item {\small $\text{Hat} \gtrdot (\text{Green} \wedge 
          \text{Brooch})$}: It is true that 
    hat is, and it is true that it is green and brooched. 
  \item {\small $(\text{Hat} \gtrdot \text{Green}) 
    \wedge (\text{Hat} \gtrdot \text{Brooch})$}: 
    for one, it is true that 
    hat is, and it is true that it is 
    green; for one, it is true that 
    hat is, and it is true that it is 
    brooched.  
  \item {\small $(\text{Hat} \wedge \text{Shirt}) 
    \gtrdot \text{Yellow}$}: 
    It is true that 
    hat and shirt are, and it is true that 
    they are yellow.  
  \item {\small $(\text{Hat} \gtrdot \text{Yellow}) 
    \wedge (\text{Shirt} \gtrdot \text{Yellow})$}:  
    for one, it is true that 
    hat is, and it is true that 
    it is yellow; for one, it is true that 
    shirt is, and it is true that 
    it is yellow.  
\end{enumerate} 
By now it has hopefully become clear that by 
{\it existential facts as truths} 
I do not mean how many of a given 
(attributed) object exist: in 
gradual classical logic, cardinality of 
objects (Cf. Linear Logic \cite{DBLP:journals/tcs/Girard87})
is not what it must be responsible for, but 
only the facts themselves of whether any of them 
exist in a given domain of discourse, which 
is in line with classical logic.\footnote{
That proposition A is true and that proposition A is true 
mean that proposition A is true; the subject of this sentence 
is equivalent to the object of its.}
Hence they univocally assume a singular 
rather than a plural form, as in the examples inscribed so far. 
The first and the second, and the third and the fourth, 
then equate.\footnote{I will also touch upon an 
    alternative
    interpretation in Section 5, as an advanced material: 
    just as there are many 
modal logics with a varying degree of strength of modalities, 
so does it seem 
that more than one interpretations about {\small $\gtrdot$} in 
interaction with the other logical connectives can be studied.} Nevertheless, 
it is still important that we analyse them with a sufficient 
precision. In the third and the fourth where 
the same attribute is shared among several objects, the attribute 
of being yellow ascribes to all of them. 
Therefore those expressions are a true statement only if  
(1) there is an existential fact that both hat and shirt are  
and (2) being yellow is true for the existential fact (formed 
by existence of hat and that of shirt).  
Another example is in Figure \ref{first_figure}.    
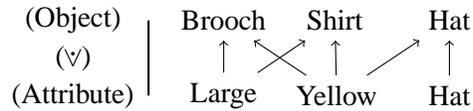
\begin{figure}[!h]
  \begin{center}  
    \begin{tikzpicture}     
      \draw[thick] (-2, 0) -- (-2, -1);   
      \node (a) at (-1, 0) {$\text{Brooch}$}; 
      \node (b) at (0.48, 0) {$\text{Shirt}$}; 
      \node (c) at (2, 0) {$\text{Hat}$};  
      \node (d) at (-1, -1) {$\text{Large}$}; 
      \node (e) at (0.5, -1) {$\text{Yellow}$};  
      \node (f) at (2, -1) {$\text{Hat}$};  
      \draw [->] (d) -- (a); 
      \draw [->] (d) -- (b); 
      \draw [->] (e) -- (a); 
      \draw [->] (e) -- (b); 
      \draw [->] (e) -- (c); 
      \draw [->] (f) -- (c); 
      \node (g) at (-3, 0) {$\text{(Object)}$};   
      \node (h) at (-3, -1) {$\text{(Attribute)}$};  
      \node (i) at (-3, -0.5) {(\reflectbox{\rotatebox[origin=c]{-90}{$\gtrdot$}})};  
    \end{tikzpicture} 
  \end{center}
  \caption{Illustration of an expression 
  {\small $((\text{Brooch} \wedge 
  \text{Shirt}) \gtrdot  
  \text{Large}) \wedge 
  ((\text{Brooch} \wedge 
  \text{Shirt} \wedge \text{Hat}) \gtrdot 
  \text{Yellow}) \wedge 
  (\text{Hat} \gtrdot \text{Hat})$}: the existential 
  fact of the attribute large 
  depends on the existential facts 
  of brooch and shirt; the existential 
  fact of the attribute of being yellow 
  depends on the existential facts 
  of brooch, shirt and hat; and 
  the existential fact of the attribute 
  hat depends on the existential fact of 
  hat to which it is an attribute.}
  \label{first_figure}
\end{figure}
   
\subsection{Object/attribute relation and disjunction} 
We look at examples first.   
\begin{enumerate}[leftmargin=0.5cm]
  \item {\small $\text{Hat} \gtrdot (\text{Hat} \vee 
    \text{Brooch})$}: 
    It is true that hat is, and it is true that 
    it is either hatted or brooched.  
  \item {\small $(\text{Hat} \gtrdot \text{Hat}) 
    \vee (\text{Hat} \gtrdot \text{Brooch})$}: 
    At least either that it is true that hat is and 
    it is true that it is hatted, or that
    it is true that hat is and it is true that 
    it is brooched.  
  \item {\small $(\text{Hat} \vee \text{Shirt}) \gtrdot 
    \text{Yellow}$}: It is true that 
    at least either hat or shirt is, and 
    it is true that whichever is existing (or both) is (or are) yellow. 
  \item {\small $(\text{Hat} \gtrdot \text{Yellow}) \vee 
    (\text{Shirt} \gtrdot \text{Yellow})$}:  
    At least either it is true that hat is and it is 
    true that it is yellow, or 
    it is true that shirt is and it is true that 
    it is yellow.  
\end{enumerate}
Just as in the previous sub-section, here again 1) and 2), 
and 3) and 4) are equivalent. 
However, in the cases of 3) and 4), we 
have that the existential fact of the attribute yellow 
depends on that of hat or shirt, whichever is existing, or 
that of both if they both exist.\footnote{In classical logic, that proposition 
A or proposition B is true means 
that at least one of the proposition A or the proposition B is true though 
both can be true. 
Same goes here.}
\subsection{Nestings of object/attribute relations}
An expression of the kind\linebreak {\small $(\textsf{Object}_1 \gtrdot 
\textsf{Object}_2) \gtrdot \textsf{Object}_3$} 
is ambiguous. But we begin by listing examples 
and then move onto analysis of the readings of the nesting of 
the relations. 
\begin{enumerate}[leftmargin=0.5cm]
\item  {\small $(\text{Hat} \gtrdot \text{Brooch}) \gtrdot 
    \text{Green}$}: It is true that hat is, and it is true that 
    it is brooched. It is true that the object thus 
    described is green.  
  \item {\small $\text{Hat} \gtrdot (\text{Hat} \gtrdot \text{White})$}:
    It is true that hat is, and it is true that 
    it has the attribute of which 
    it is true that hat is and that it is 
    white. (More simply, it is true that hat is, 
    and it is true that it is white-hatted.)
  \item {\small $\neg (\text{Hat} \gtrdot \text{Yellow}) 
    \gtrdot \text{Brooch}$}:   
    Either it is false that hat is, or else it is true that 
    hat is but it is false that it is yellow.\footnote{
    This is the reading of 
    {\small $\neg (\text{Hat} \gtrdot \text{Yellow})$}.}
    If it is false that hat is, then it is true that 
    brooched object (which obviously cannot be hat) is. 
    If it is true that hat is but it is false that it is yellow,  
    then it is true that the object thus described is brooched. 
\end{enumerate}     
Note that to say that Hat {\small $\gtrdot$} Brooch (brooched hat) 
is being green, we must mean to say that the object to 
the attribute of being green, \emph{i.e.} hat, is green. It 
is on the other hand 
unclear if green brooched hat should or should not 
mean that the brooch, an accessory to hat, is also green. 
But common sense about adjectives dictates that 
such be simply indeterminate. It is reasonable
for (Hat {\small $\gtrdot$} Brooch) {\small $\gtrdot$} 
Green, while if we have 
(Hat {\small $\gtrdot$} Large) {\small $\gtrdot$} 
Green, ordinarily speaking it cannot be the case that the attribute of being 
large is green. Therefore 
we enforce that {\small $(\textsf{Object}_1 \gtrdot \textsf{Object}_2) 
\gtrdot \textsf{Object}_3$} amounts to 
{\small $(\textsf{Object}_1 \gtrdot \textsf{Object}_3) 
\wedge ((\textsf{Object}_1 \gtrdot \textsf{Object}_2)
\vee 
(\textsf{Object}_1 \gtrdot (\textsf{Object}_2 \gtrdot \textsf{Object}_3)))$} in which disjunction as usual captures the 
indeterminacy. No ambiguity is posed in 2), 
and 3) is understood in the same 
way as 1). 
\subsection{Two nullary logical connectives} 
Now we examine the nullary logical connectives  
{\small $\top$} and {\small $\bot$} which 
denote, in classical logic, the concept of the truth and that of 
the inconsistency. In gradual classical logic 
{\small $\top$} denotes the concept of the presence 
and {\small $\bot$} denotes that of the absence. 
Several examples for
the readings are;
\begin{enumerate}[leftmargin=0.5cm]
  \item {\small $\top \gtrdot \text{Yellow}$}:   
    It is true that yellow object is. 
  \item {\small $\text{Hat} \gtrdot (\top \gtrdot \text{Yellow})$}:   
    It is true that hat is, and it is true that it has the 
    following attribute of which it is true that 
    it is yellow object. 
  \item {\small $\bot \gtrdot \text{Yellow}$}:   
    It is true that nothingness is, and it is true that 
    it is yellow. 
  \item {\small $\text{Hat} \gtrdot \top$}:  
    It is true that hat is. 
  \item {\small $\text{Hat} \gtrdot \bot$}:    
    It is true that hat is, and it is true that it has no attributes. 
  \item {\small $\bot \gtrdot \bot$}: It is  
    true that nothingness is, and it is true that it has no attributes. 
\end{enumerate}      
It is illustrated in 1) and 2) how the sense of the `truth'
is delimited by the object to which it acts as 
an attribute. For the rest, however, 
there is a point  
which is not so vacuous as not to merit 
a consideration, and to which I in fact append the following postulate. 
\begin{postulate} 
  That which does not have any attribute cannot be  
  distinguished from nothingness for which
  any scenario where it comes with an attribute is 
  inconceivable. 
  Conversely, anything that remains once all the attributes 
  have been removed from a given object is  
  nothingness. 
  \label{axiomZero}
\end{postulate}   
With it, the item 3) which asserts the existence of 
nothingness is contradictory. The item 4) then behaves as expected in 
that Hat which is asserted with the presence of attribute(s)  
is just as generic a term as Hat itself is. 
The item 5) which asserts the existence of an object with 
no attributes again contradicts Postulate \ref{axiomZero}.  
The item 6) illustrates that any attributed object in some part of which 
has turned out to be contradictory remains contradictory no matter 
how it is to be extended: a {\small $\bot$}  
cannot negate another {\small $\bot$}. Cf. 
the footnote 2 for the plausibility of the postulate. 
\section{Mathematical mappings: syntax and semantics}   
In this section a semantics 
of gradual classical logic is formalised. 
We assume in the rest of this document;
\begin{itemize}[leftmargin=0.3cm]
  \item {\small $\mathbb{N}$} denotes the set of natural numbers 
    including 0.   
  \item {\small $\wedge^{\dagger}$} and {\small $\vee^{\dagger}$} 
    are two binary 
    operators on Boolean arithmetic. 
    The following laws hold; 
    {\small $1 \vee^{\dagger} 1 = 1 \vee^{\dagger} 0 = 0 
    \vee^{\dagger} 1 = 1$}, 
    {\small $0 \wedge^{\dagger} 0 
    = 0 \wedge^{\dagger} 1 = 1 \wedge^{\dagger} 0 = 0$}, 
    {\small $1 \wedge^{\dagger} 1 = 1$}, 
    and {\small $0 \vee^{\dagger} 0 = 0$}. 
  \item {\small $\wedge^{\dagger}$}, {\small $\vee^{\dagger}$} 
      {\small $\rightarrow^{\dagger}$}, {\small $\neg^{\dagger}$}, {\small $\exists$} 
    and {\small $\forall$}
    are meta-logical connectives: conjunction, disjunction,\footnote{
    These two symbols are overloaded. Save whether 
    truth values or  the ternary values are supplied as arguments, 
    however, the distinction is clear from the context 
    in which they are used. 
    }
    material implication, negation, existential quantification 
    and universal quantification, whose semantics  
    follow those of standard classical logic.    
    We abbreviate {\small $(A \rightarrow^{\dagger} B) 
    \wedge^{\dagger} (B \rightarrow^{\dagger} A)$} 
    by {\small $A \leftrightarrow^{\dagger} B$}.  
  \item Binding strength of logical or meta-logical connectives is  
        {\small $[\neg]\! \gg \!
    [\wedge \ \ \vee]\! \gg \!
    [\gtrdot] \gg [\forall \ \ \exists]\! \gg \!
    [\neg^{\dagger}]\! \gg\! [\wedge^{\dagger} \ \  \vee^{\dagger}]
    \! \gg\! [\rightarrow^{\dagger}] 
    \! \gg\! [\leftrightarrow^{\dagger}]$} 
in the order of 
    decreasing precedence. Those that belong to the same 
    group are assumed having the same precedence. 
  \item For any binary connectives {\small $?$}, 
    for any {\small $i, j \in \mathbb{N}$} and 
    for {\small $!_0, !_1, \cdots, !_j$} that are some recognisable 
    entities, 
    {\small $?_{i = 0}^j !_i$} is an abbreviation of\linebreak
    {\small $(!_0) ? (!_1) ? \cdots ? (!_j)$}.   
  \item 
    For the unary connective {\small $\neg$},   
    {\small $\neg \neg !$} for some recognisable entity 
    {\small $!$} is an abbreviation of 
    {\small $\neg (\neg !)$}. Further, 
    {\small $\neg^k !$} for some {\small $k \in \mathbb{N}$}  
    and some recognisable entity {\small $!$} 
    is an abbreviation of 
    {\small $\underbrace{\neg \cdots \neg}_k
    !$}.
  \item For the binary connective {\small $\gtrdot$}, 
    {\small $!_0 \gtrdot !_1 \gtrdot !_2$} 
    for some three recognisable entities 
    is an abbreviation of 
    {\small $!_0 \gtrdot (!_1 \gtrdot !_2)$}.  
\end{itemize}
On this preamble we shall begin. 
\subsection{Development of semantics}
The set of literals in gradual classical logic is denoted by {\small $\mathcal{A}$} 
whose elements are referred to by {\small $a$} with or without 
a sub-script. This set has a countably many 
number of literals. 
Given a literal {\small $a \in \mathcal{A}$}, 
its complement is denoted by {\small $a^c$} which is 
in {\small $\mathcal{A}$}. As usual, we have {\small $\forall 
a \in \mathcal{A}.(a^c)^c = a$}. 
The set {\small  $\mathcal{A} \cup \{\top\} \cup \{\bot\}$} where 
{\small $\top$} and {\small $\bot$} are the two nullary logical 
connectives is denoted by {\small $\mathcal{S}$}. Its elements 
are referred to by {\small $s$} with or without 
a sub-script. Given {\small $s \in \mathcal{S}$}, its 
complement is denoted by {\small $s^c$} which 
is in {\small $\mathcal{S}$}. Here we have    
{\small $\top^c = \bot$} and 
{\small $\bot^c = \top$}.  
The set of formulas is denoted by {\small $\mathfrak{F}$} 
whose elements, {\small $F$} with or without 
a sub-/super-script,
are finitely constructed from the following grammar; \\
\indent {\small $F := s \ | \ F \wedge F \ | \ 
F \vee F \ | \ \neg F \ | \  F \gtrdot F$}\\
\hide{
\begin{definition}[Properties of gradual classical logic]{\ } 
  \begin{enumerate}
  \item {\small $(F_1 \gtrdot F_2) \gtrdot F_3 
      = F_1 \gtrdot (F_2 \gtrdot F_3)$}. 
     \item {\small $\neg \neg a = a$}.      
     \item {\small $\neg \neg \top = \top$}.  
     \item {\small $\neg \neg \bot = \bot$}. 
     \item {\small $(F_1 \wedge F_2) \gtrdot F_3 = 
       (F_1 \gtrdot F_3) \wedge (F_2 \gtrdot F_3)$}.  
     \item {\small $(F_1 \vee F_2) \gtrdot F_3 = 
       (F_1 \gtrdot F_2) \vee (F_2 \gtrdot F_3)$}.  
     \item {\small $F_1 \gtrdot (F_2 \wedge F_3) 
       = (F_1 \gtrdot F_2) \wedge (F_1 \gtrdot F_3)$}. 
     \item {\small $F_1 \gtrdot (F_2 \vee F_3) 
       = (F_1 \gtrdot F_2) \vee (F_1 \gtrdot F_3)$}. 
       \hide{
    \item {\small $\neg (\circled{F_1}\ F_2) = \neg (\circled{F_1}^+ F_2) 
      = \circled{F_1}^- F_2$}. 
      \hide{
    \item {\small $\circled{F_1 \wedge F_2}\ F_3 = 
      \circled{F_1}\ F_2 \wedge \circled{F_1}\ F_3$}. 
    \item {\small $\circled{F_1 \vee F_2}\ F_3 = 
      \circled{F_1}\ F_2 \vee \circled{F_1}\ F_3$}.   
    \item {\small $\circled{F_1 \wedge F_2}^+ F_3 = 
      \circled{F_1}^+ F_3 \wedge
      \circled{F_2}^+ F_3$}. 
    \item {\small $\circled{F_1 \vee F_2}^+ F_3 = 
      \circled{F_1}^+ F_3 \vee
      \circled{F_2}^+ F_3$}. 
    \item {\small $\circled{F_1 \wedge F_2}^- F_3 = 
      \circled{F_1}^- F_3 \vee \circled{F_2}^- F_3$}.  
    \item {\small $\circled{F_1 \vee F_2}^- F_3 = 
      \circled{F_1}^- F_3 \wedge \circled{F_2}^- F_3$}. 
      \hide{
    \item {\small $\circled{\circled{F_1}\ F_2}\ F_3 =  
      \circled{F_1}(\circled{F_2}\ F_3)$}.  
    \item {\small $\circled{\circled{F_1}^+ F_2}\ F_3 = 
      \circled{F_1}^+ (\circled{F_2}\ F_3)$}. 
    \item {\small $\circled{\circled{F_1}^- F_2}\ F_3 = 
      \circled{F_1}^- (\circled{F_2}\ F_3)$}.  
    \item {\small $\circled{F_1 \wedge F_2}^+ F_3 = 
      \circled{F_1}^+ F_3 \wedge \circled{F_2}^+ F_3$}. 
    \item {\small $\circled{F_1 \vee F_2}^+ F_3 = 
      \circled{F_1}^+ F_3 \vee \circled{F_2}^+ F_3$}.  
    \item {\small $\circled{\circled{F_1}^+ F_2}^+ F_3 =  
      \circled{F_1}^+ (\circled{F_2}\ F_3)$}.    
    \item {\small $\circled{F_1 \wedge F_2}^- F_3 = 
      \circled{F_1}^- F_3 \vee \circled{F_2}^- F_3$}. 
    \item {\small $\circled{F_1 \vee F_2}^- F_3 = 
      \circled{F_1}^- F_3 \wedge \circled{F_2}^- F_3$}.  
    \item {\small $\circled{\circled{F_1}^- F_2}^- F_3  
      = \circled{F_1}^+ (\circled{F_2}\ F_3)$}.   
      } 
      } 
}
  \end{enumerate}
\end{definition}    
} 
We now develop semantics. This is done in two parts: we do not outright 
jump to the definition 
of valuation (which we could, but which we simply do not 
choose for succinctness of the
proofs of the main results). 
Instead, just as we only need 
consider negation normal form in classical logic because every 
classical logic formula definable 
has a reduction into a normal form, so 
shall we first define rules for formula reductions 
(for any {\small $F_1, F_2, F_3 \in 
\mathfrak{F}$}):  
\begin{itemize}[leftmargin=0.3cm]
  \item {\small $\forall s \in \mathcal{S}.\neg s \mapsto s^c$} 
    ({\small $\neg$} reduction 1). 
  \item {\small $\neg (F_1 \wedge F_2) \mapsto
    \neg F_1 \vee \neg F_2$} ({\small $\neg$} reduction 2). 
  \item {\small $\neg (F_1 \vee F_2) \mapsto
    \neg F_1 \wedge \neg F_2$} ({\small $\neg$} reduction 3). 
  \item {\small $\neg (s \gtrdot F_2) \mapsto
    s^c \vee (s \gtrdot \neg F_2)$} ({\small $\neg$} 
    reduction 4).    
  \item {\small $(F_1 \gtrdot F_2) \gtrdot F_3
     \mapsto (F_1 \gtrdot F_3) \wedge  
     ((F_1 \gtrdot F_2) \vee (F_1 \gtrdot F_2 \gtrdot F_3))$} 
     ({\small $\gtrdot$} reduction 1).
   \item {\small $
     (F_1 \wedge F_2) \gtrdot F_3 \mapsto
     (F_1 \gtrdot F_3) \wedge (F_2 \gtrdot F_3)$} 
     ({\small $\gtrdot$} reduction 2). 
   \item {\small $
     (F_1 \vee F_2) \gtrdot F_3 \mapsto 
     (F_1 \gtrdot F_3) \vee (F_2 \gtrdot F_3)$} 
     ({\small $\gtrdot$} reduction 3).  
   \item {\small $F_1 \gtrdot (F_2 \wedge F_3) 
     \mapsto (F_1 \gtrdot F_2) \wedge (F_1 \gtrdot F_3)$} 
     ({\small $\gtrdot$} reduction 4). 
   \item {\small $F_1 \gtrdot (F_2 \vee F_3) 
     \mapsto (F_1 \gtrdot F_2) \vee (F_1 \gtrdot F_3)$} 
     ({\small $\gtrdot$} reduction 5). 
\end{itemize}    
\hide{
\begin{definition}[Binary sequence and concatenation] 
  We denote by {\small $\mathfrak{B}$} the set 
  {\small $\{0, 1\}$}. 
  We then denote by {\small $\mathfrak{B}^*$}  
  the set\linebreak union 
  of (A) the set of finite sequences of elements of 
  {\small $\mathfrak{B}$} and 
  (B) a singleton set {\small $\{\epsilon\}$} denoting 
  an empty sequence. 
  A concatenation operator 
  {\small $\textsf{CONCAT}: \mathfrak{B}^{*} \times 
  \mathfrak{B} \rightarrow \mathfrak{B}$} 
  is defined to satisfy
  for all {\small $B_1 \in \mathfrak{B}^{*}$} and 
  for all {\small $b \in \mathfrak{B}$};  
  \begin{enumerate} 
    \item {\small $\textsf{CONCAT}(B_1, b) = 0$}
      if {\small $0$} occurs in {\small $B_1$} 
      or if {\small $b = 0$}. 
    \item {\small $\textsf{CONCAT}(B_1, b) =  
      \underbrace{11\dots1}_{k+1}$} for  
      {\small $|B_1| = k$}, 
      otherwise. Here 
      {\small $|\dots|$} indicates the size of the set: 
      {\small $|\{\epsilon\}| = 0$}, 
      {\small $|\{b\}| = 1$} for {\small $b \in \mathfrak{B}$}, 
      and so on. 
%
  \end{enumerate}  
  \hide{
  The following properties hold; 
  \begin{itemize}[leftmargin=0.3cm]
    \item {\small $\forall n \in \mathbb{N}.[0 < 
      \underbrace{11\ldots 1}_{n+1}]$}. 
    \item {\small $\forall n, m \in \mathbb{N}. 
      [n < m] \rightarrow^{\dagger} 
      [\underbrace{11\ldots1}_{n + 1} < 
      \underbrace{11\ldots1}_{m + 1}]$}.  
  \end{itemize} 
  }
\end{definition} 
}  
\begin{definition}[Domain function/valuation frame]        
  Let {\small $\mathcal{S}^*$} denote the set\linebreak union of (A) the set 
  of finite sequences of elements 
  of {\small $\mathcal{S}$}  and 
  (B) a singleton set {\small $\{\epsilon\}$} denoting 
  an empty sequence.  
  We define a domain function
  {\small $D: \mathcal{S}^* \rightarrow  2^{\mathcal{S}}$}. 
    We define a valuation frame as a 2-tuple: 
  {\small $(\mathsf{I}, \mathsf{J})$}, where 
  {\small $\mathsf{I}: \mathcal{S}^* \times \mathcal{S} \rightarrow 
      \{0,1\}$} is what we call  
  local interpretation 
  and 
  {\small $\mathsf{J}: \mathcal{S}^* \backslash 
      \{\epsilon\} \rightarrow \{0,1\}  
       $} is what we call gloal interpretation. 
       The following are defined to satisfy 
       for all {\small $k \in \mathbb{N}$} 
       and for all {\small $s_0, \ldots, s_{k} \in 
           \mathcal{S}$}.    
       \begin{description}  
           \item[Regarding domains of discourse]{\ } 
               \begin{itemize}[leftmargin=-0.3cm] 
                   \item For 
                       all {\small $s^* \in \mathcal{S}^*$},
                       {\small $D(s^*)$} is closed 
                       under complementation,  
                       and has at least 
                       {\small $\top$} and 
                       {\small $\bot$}. 
                \end{itemize}
           \item[Regarding local interpretations]{\ }
      \begin{itemize}[leftmargin=-0.3cm] 
          \item {\small $[\mathsf{I}(s_0.s_1.\dots.s_{k-1}, \top) = 1]$}\footnote{Simply 
  for a presentation purpose, 
  we use a dot such as {\small $s_1^*.s_2^*$} for 
  {\small $s_1^*, s_2^* \in \mathcal{S}^*$} to show that 
  {\small $s_1^*.s_2^*$} is an element of 
  {\small $\mathcal{S}^*$} in which 
  {\small $s_1^*$} is the preceding constituent and {\small $s_2^*$} 
  the following constituent of 
  {\small $s_1^*.s_2^*$}. 
        When {\small $k = 0$}, 
        we assume that 
        {\small $s_0.s_1.\dots.s_{k-1} 
        = \epsilon$}. Same applies 
    in the rest.} 
          ({\small $\mathsf{I}$} valuation of $\top$).
      \item {\small $[\mathsf{I}(s_0.s_1.\dots.s_{k-1}, \bot) = 0]$} 
          (That  of $\bot$).  
      \item {\small $\forall a_k \in  
              D(s_0.s_1.\dots.s_{k-1}). 
              [\mathsf{I}(s_0.s_1.\dots.s_{k-1}, a_{k})
              = 0] \vee^{\dagger}$}\\
          {\small $
              [\mathsf{I}(s_0.s_1.\dots.s_{k-1}, a_{k})
= 1]$} 
(That of a literal).  
\item {\small $\forall a_k \in  
              D(s_0.s_1.\dots.s_{k-1}). 
[\mathsf{I}(s_0.s_1.\dots.s_{k-1},a_{k}) = 0]
        \leftrightarrow^{\dagger}$}\\
        {\small $[\mathsf{I}(s_0.s_1.\dots.s_{k-1},a^c_{k}) = 1]$}
    (That of a complement).   
\item {\small $
[{\mathsf{I}(s_0.s_1.\dots.s_{k-1}, s_{k}) 
            = \mathsf{I}(s'_0.s'_1.\dots.s'_{k-1}, s_{k})}]$}
     (Synchronization condition on {\small $\mathsf{I}$} 
     interpretation; this reflects  
     the dependency of the existential fact of an attribute to 
     the existential fact of objects to which 
     it is an attribute).   
 \end{itemize}
 \item[Regarding global interpretations]{\ }  
     \begin{itemize}[leftmargin=-0.3cm]
\item {\small $[\mathsf{J}(s_0.s_1.\dots.s_k) =  
        1] \leftrightarrow^{\dagger}$} {\small $\forall i \in \mathbb{N}.
        \bigwedge_{i=0}^{\dagger k} 
        [\mathsf{I}(s_0.s_1.\dots.s_{i-1},s_i) = 1]$}\\    (Non-contradictory {\small $\mathsf{J}$} 
    valuation). 
\item {\small $[\mathsf{J}(s_0.s_1.\dots.s_k) = 
        0] \leftrightarrow^{\dagger}$} 
    {\small $
        \exists i \in \mathbb{N}.[i \le k] \wedge^{\dagger}
        [\mathsf{I}(s_0.s_1.\dots.s_{i-1}, s_i) = 0]$} \\
    (Contradictory {\small $\mathsf{J}$} valuation).   
  \end{itemize} 
  \end{description}  
  \label{interpretations}
\end{definition}    
Note that the global interpretation is completely characterised by the local interpretation. 
What we will need in the end are global interpretations; local 
interpretations are for intermediate value calculations 
for the ease of presentation of the semantics and 
of proofs of the main results. 
In the rest, we assume that any literal that appears 
in a formula is in a domain of discourse. 
\begin{definition}[Valuation]    
  Suppose a valuation frame {\small $\intFrame = (\mathsf{I}, \mathsf{J})$}. The following are defined to hold 
  for all  
  {\small $F_1, F_2\in \mathfrak{F}$} and for all 
  {\small $k \in \mathbb{N}$}:    
\begin{itemize}[leftmargin=0.3cm]
    \item {\small $[\intFrame \models s_0 \gtrdot s_1 \gtrdot \dots \gtrdot s_k] =  
  \mathsf{J}(s_0.s_1.\dots.s_k)$}.    
\item {\small $[\intFrame \models F_1 \wedge F_2] = 
  [\intFrame \models F_1] \wedge^{\dagger} 
  [\intFrame \models F_2]$}. 
\item {\small $[\intFrame \models F_1 \vee F_2] = 
  [\intFrame \models F_1] 
  \vee^{\dagger} [\intFrame \models F_2]$}. 
  \hide{
\item {\small $
  (F \not\in \mathcal{S}) \rightarrow^{\dagger} 
    ([\models_{\psi} F] = *)$}. 
    \item {\small $\forall v_1, v_2 \in \{0, 1, *\}.   
     [v_1 = *] \vee^{\dagger} [v_2 = *]  
      \rightarrow^{\dagger} [v_1 \oplus v_2 = v_1 \odot v_2 = *]$} 
\item {\small $\forall v_1, v_2 \in \{0, 1, *\}. 
  [v_1 \not= *] \wedge^{\dagger} [v_2 \not= *] 
  \rightarrow^{\dagger} 
  ((v_1 \oplus v_2) = (v_1 \wedge^{\dagger} v_2))$}.  
\item {\small $\forall v_1, v_2 \in \{0, 1, *\}. 
  [v_1 \not= *] \wedge^{\dagger} [v_2 \not= *] 
  \rightarrow^{\dagger} 
  ((v_1 \odot v_2) = (v_1 \vee^{\dagger} v_2))$}.  
  }
      \hide{
    \item {\small $\forall 
      x_{0\cdots k} \in \mathfrak{X}.x_0 \odot x_1 \odot \dots \odot x_{k} = 
      \lceil x_0 \rfloor^{\overrightarrow{x_0}}
      \vee^{\dagger} \lceil x_1 \rfloor^{\overrightarrow{x_1}} 
      \vee^{\dagger} \dots 
      \vee^{\dagger} \lceil x_{k}\rfloor^{\overrightarrow{x_k}}$} if 
      {\small $\bigwedge^{\dagger\ k}_{j = 0} ( 
      \exists m \in \mathbb{N}\ \exists 
      y_{j0}, y_{j1}, \dots, y_{jm} \in 
      \mathfrak{Y}.[x_j = \oplus_m 
      y_{jm}])$}. 
      \footnote{Just to ensure 
      of no confusion on the part of readers though 
      not ambiguous, this   
      {\small $\bigwedge^{\dagger}$}  
      is a meta-logical connective
      operating on true/false.}  
        \item {\small $\forall x \in \mathfrak{X}.
      \lceil [\models_{\psi} s] \oplus 
      x \rfloor^{i} =  
      \lceil [\models_{\psi} s] \rfloor^{i} \wedge^{\dagger}
      \lceil x \rfloor^{i}$}.\footnote{And this {\small $\wedge^{\dagger}$} 
      operates on 1/0, though again not ambiguous.} 
      }
       \end{itemize}  
   \label{model}
\end{definition}    
\hide{  Given 
  any formula {\small $F \in \mathfrak{F}$}, 
  we denote by {\small $[F]_p$}  
  of {\small $\mathfrak{F}$} such that 
  it contains only those formulas (A) that result 
  from applying rules of transformations 
  
  \indent {\small $\{F' (\in \mathfrak{F})\ | \ F' \text{ is 
  expanded in primary chains via \textbf{Transformations} in Definition 3.}\}$}. 
  Likewise, we denote by {\small $[F]_u$} the set;\\
  \indent {\small $\{F' (\in \mathfrak{F})\ | \ F' \text{ is 
  expanded in unit chains via Definition 3.}\}$}.\footnote{
  Given a formula {\small $F$}, it may be that {\small $[F]_u$} 
  is always a singleton set, 
  {\small $F$} always leading to a unique unit expansion 
  via Definition 3. But we do not verify this in this paper, as 
  no later results strictly the stronger statement.}
  }
\noindent The 
notions of validity and satisfiability are 
as usual. 
\begin{definition}[Validity/Satisfiability]
    A formula {\small $F \in \mathfrak{F}$} is said 
  to be satisfiable in a valuation frame {\small $\mathfrak{M}$} 
  iff 
  {\small $1 = [\intFrame \models F]$}; 
  it is said to be valid iff it is satisfiable 
  for all the valuation frames; 
  it is said to be invalid iff 
  {\small $0 = [\intFrame \models F]$} for 
  some valuation frame {\small $\intFrame$}; 
  it is said to be unsatisfiable 
  iff it is invalid for all the valuation frames.    
  \label{universal_validity}
\end{definition}         
\subsection{Study on the semantics} 
We have not yet formally verified some important points.
Are there, firstly, any formulas {\small $F \in 
    \mathfrak{F}$} that do not reduce into 
some value-assignable formula? Secondly, what if 
both {\small $1 = [\mathfrak{M} \models F]$} 
and {\small $1 = [\mathfrak{M} \models \neg F]$}, 
or both {\small $0 = [\mathfrak{M} \models F]$} 
and {\small $0 = [\mathfrak{M} \models \neg F]$} 
for some {\small $F \in \mathfrak{F}$} under 
some {\small $\mathfrak{M}$}? Thirdly, should it happen 
that {\small $[\mathfrak{M} \models F] = 0 = 1$} 
for any formula {\small $F$}, given a valuation frame? \\
\indent If the first should hold, the semantics - 
the reductions and valuations as were presented in the previous 
sub-section - would not assign a value (values) to every member 
of {\small $\mathfrak{F}$} even with the reduction rules 
made available. If the second should hold, 
we could gain {\small $1 = [\mathfrak{M} \models F \wedge 
    \neg F]$}, which would relegate this gradual logic 
to a family of para-consistent logics - quite out of keeping 
with my intention. And the third should never hold, clearly. \\
\indent Hence it must be shown that these unfavoured situations
do not arise. An outline to the completion of the proofs is; 
\begin{enumerate} 
    \item to establish that every formula has a reduction 
        through {\small $\neg$} and {\small $\gtrdot$} reductions 
        into some formula {\small $F$} for which 
        it holds that {\small $\forall \mathfrak{M}. 
        [\mathfrak{M} \models F] \in \{0,1\}$}, to 
    settle down the first inquiry. 
\item to prove that
      any formula {\small $F$} 
      to which a value 0/1
      is assignable {\it without the use of 
      the reduction 
      rules} satisfies for every valuation frame (a) 
      that {\small $[\mathfrak{M} \models F] \vee^{\dagger} 
    [\mathfrak{M}\models
    \neg F] = 1$} and 
{\small $[\mathfrak{M} \models F] \wedge^{\dagger} [\mathfrak{M} \models
    \neg F] = 0$}; and (b) either that 
    {\small $0 \not= 1 = [\intFrame \models F]$} 
    or that {\small $1 \not= 0 = [\intFrame \models F]$},
    to settle down the other inquiries partially. 
\item to prove that
      the reduction through 
    {\small $\neg$} reductions and {\small $\gtrdot$} reductions 
    on any formula {\small $F \in \mathfrak{F}$} 
    is normal in that, 
    in whatever order those reduction rules are 
    applied to {\small $F$}, any {\small $F_{\textsf{reduced}}$} in 
    the set of possible formulas it reduces into satisfies 
    for every valuation frame either 
    that {\small $[\mathfrak{M} \models 
        F_{\textsf{reduced}}] = 1$}, or 
    that {\small $[\mathfrak{M} \models 
        F_{\textsf{reduced}}] = 0$}, 
    for all such {\small $F_{\textsf{reduced}}$}, 
    to conclude. 
\end{enumerate} 
\subsubsection{Every formula is 0/1-assignable} 
\vspace{-0.1mm}
We state several definitions for the first objective of ours.  
\hide{ 
\begin{definition}[Group formula]
    Let {\small $\mathfrak{G}_F: \mathfrak{F} \rightarrow 
        \mathfrak{F}$} be a function, defined 
    for all the sub-formulas {\small $F_a$} of 
    {\small $F$} as follows.
    \begin{itemize} 
        \item {\small $\mathfrak{G}_F(F_a) = F_a \wedge F_b$}
            if there exists {\small $F_b$} 
            such that {\small $F_a \wedge F_b$} 
            is a sub-formula of {\small $F$}. 
        \item {\small $\mathfrak{G}_F(F_a) = F_b \wedge F_a$}   
if there exists 
{\small $F_b$} 
            such that {\small $F_a \wedge F_b$} 
            is a sub-formula of {\small $F$}. 
        \item {\small $\mathfrak{G}_F(F_a) = F_a \vee F_b$} 
            if there exists 
{\small $F_b$} 
            such that {\small $F_a \vee F_b$} 
            is a sub-formula of {\small $F$}.  
        \item {\small $\mathfrak{G}_F(F_a) = F_b \vee F_a$} 
if there exists 
{\small $F_b$} 
            such that {\small $F_a \vee F_b$} 
            is a sub-formula of {\small $F$}.   
        \item {\small $\mathfrak{G}_F(F_a) = F_a$}, otherwise. 
    \end{itemize}
    Then the group formula of {\small $F_a$} 
    in {\small $F$} is defined 
    to be {\small $\mathfrak{G}_F^{k}(F_a)$} for some 
    {\small $k \in \mathbb{N}$} such that 
    {\small $\mathfrak{G}_F^{k}(F_a) = \mathfrak{G}^{k+1}_F(F_a)$}.
    We assume that {\small $\mathfrak{G}^0(F_a) = F_a$}. 
\end{definition}  
}
\begin{definition}[Chains/Unit chains/Unit chain expansion]
  A chain is defined to be any formula 
  {\small $F \in \mathfrak{F}$} such 
  that 
  {\small $F = F_0 \gtrdot F_1 \gtrdot \dots \gtrdot F_{k+1}$} 
  for {\small $k \in \mathbb{N}$}. 
  A unit chain is defined to be a chain 
  for which {\small $F_i \in \mathcal{S}$} for 
  all {\small $0 \le i \le k+1$}. We denote 
  the set of 
  unit chains by {\small $\mathfrak{U}$}.   
  By the head of a chain {\small $F_a \gtrdot F_b \in 
      \mathfrak{F}$}, we mean {\small $F_a$}; 
  and by the tail {\small $F_b$}.  
  and by the  tail 
  some formula {\small $F_a \in 
  \mathfrak{F}$} satisfying
  (1) that {\small $F_a$} is not in the form 
  {\small $F_b \gtrdot F_c$} for some 
  {\small $F_b, F_c \in \mathfrak{F}$} and (2) that 
  {\small $F = F_a \gtrdot F_d$} 
  for some {\small $F_d \in \mathfrak{F}$}.   
  By the tail of a chain {\small $F \in 
  \mathfrak{F}$}, we then mean some formula 
  {\small $F_d \in \mathfrak{F}$} such that  
  {\small $F = F_a \gtrdot F_d$} for 
  some {\small $F_a$} as the head of {\small $F$}.  
Given any {\small $F \in \mathfrak{F}$}, we 
  say that {\small $F$} is expanded in 
  unit chains only if 
  any chain that occurs in {\small $F$} is a unit chain.  
\end{definition}    
\hide{
\noindent Just as the upper chain, I need define 
an inner chain, given a formula {\small $F$}.  
\begin{definition}[Upper chain]   
    Let {\small $\mathfrak{D}_F: \mathfrak{F} \rightarrow 
        \mathfrak{F}$} be a function defined for
    all the sub-formulas {\small $F_a$} of {\small $F \in 
        \mathfrak{F}$} as follows. 
    \begin{enumerate} 
        \item If there exists no {\small $F_b$} 
            such that {\small $\mathfrak{G}_F(F_a) \gtrdot F_b$} 
            is a sub-formula of {\small $F$}, 
            then {\small $\mathfrak{D}_F(F_a) := \mathfrak{G}_F(F_a)$}. 
        \item Otherwise, 
            {\small $\mathfrak{D}_F(F_a) := \mathfrak{G}_F(F_a) \gtrdot F_b$}. 
    \end{enumerate} 
    Then the down formula of {\small $F_a$} in 
    {\small $F \in \mathfrak{F}$} is defined to 
    be {\small $\mathfrak{D}_F^k(F_a)$} for some 
    {\small $k \in \mathbb{N}$} such that 
        {\small $\mathfrak{D}_F^k(F_a) = 
            \mathfrak{D}_F^{k+1}(F_a)$}. 
        For brevity, we refer to 
        the down formula of {\small $F_a$} in 
        {\small $F$} simply by 
        {\small $\mathbf{\mathfrak{D}_F}(F_a)$}, 
        using a bold symbol. \\
        \indent Now, let us define another function {\small $\mathfrak{C}_F: \mathfrak{F} \rightarrow 
        \mathfrak{F}$}, which is defined for 
    all the sub-formulas of {\small $F$} recursively as follows.\\
    \textbf{Description of} {\small $\mathfrak{C}_F(F_a:\mathfrak{F})$} \textbf{outputting} 
{\small $F_x: \mathfrak{F}$}
    \begin{enumerate}[leftmargin=0.5cm] 
        \item  If there exists no {\small $F_b$} 
            such that {\small $F_b \gtrdot 
                \mathbf{\mathfrak{D}_F}(F_a)$} 
            is a sub-formula of {\small $F$}, then 
            return {\small $F_a$}. 
        \item Otherwise, return 
            {\small $\mathfrak{C}_F(F_b \gtrdot \mathbf{\mathfrak{D}_F}(F_a))$}. 
    \end{enumerate} 
    We call {\small $\mathfrak{C}_F(F_a)$} the upper 
    chain of {\small $F_a$} relative to {\small $F$}. 
    It may or may not occur as a sub-formula 
    in {\small $F$}. 
\end{definition} 
}
\hide{ 
\begin{definition}[Formula structures]  
    A formula structure is defined to be 
    a triple {\small $(n_1, n_2, n_3)$} of 
    natural numbers. We define 
    that {\small $(n_1, n_2, n_3) + (m_1, m_2, m_3) = 
        (n_1, n_2 + n_3, n_3 + m_3)$}, 
    provided {\small $n_1 = m_1$}. The operator 
    is 
    defined to ensure both associativity 
    and commutativity. For the comparisons 
    we assume the following: 
    \begin{itemize}[leftmargin=0.5cm]
        \item {\small $\forall n_{0,1,2}, m_{0,1,2} 
                \in \mathbb{N}.
                ([n_0 < m_0] \orMeta 
                ([n_0 = m_0] \andMeta  
                ([n_1 < m_1] \orMeta 
                ([n_1 = m_1] \andMeta 
                [n_2 < m_2])))) \impliesMeta 
                 (n_0, n_1, n_2) < 
                 (m_0, m_1, m_2)$} (Dictionary ordering). 
             \item {\small $\forall l \in \mathbb{N}\ 
                     \forall I \in 
                 \mathcal{I}(l) \ \forall 
                 n_{j \in I}, m_{j \in I}, 
                 n_z, m_z \in \mathbb{N}. 
                 [(\textsf{max}(I), n_{\textsf{max}(I)}, 
                 m_{\textsf{max}(I)}) < \\ 
                 (\textsf{max}(I), n_z, m_z)] 
                 \rightarrow^{\dagger} 
                 [+_{j \in I} (j, n_j, m_j) < 
                 (\textsf{max}(I), n_z, m_z)]$}. 
   \end{itemize}  
   We refer to the set of formula structures 
   by {\small $\Phi$}. 
   Now, let us define another set that 
   contains intermediate representations of 
   formula structures, which, instead of 
   {\small $+$}, use {\small $\oplus$}. 
   Unlike the case of {\small $+$}, the summation is not permitted for {\small $\oplus$}. We refer to an element in 
   the set by {\small $X$} with or without a sub-script. 
   Based on these defintions, we define a function 
   {\small $\struct = 
       \extract \circ \translate: \mathbb{N} \times \mathfrak{F} \rightarrow 
       \Phi$} recursively as follows, for all {\small $l 
       \in \mathbb{N}$}, 
   for all {\small $s \in \mathcal{S}$} 
   and for all {\small $F_1, F_2 \in \mathfrak{F}$}. 
   \begin{itemize} 
       \item {\small $\translate(l, s) = (l, 0, 1)$}. 
        \item {\small $\translate(l, \neg F_1) = 
            \textsf{Guard}(\translate(l, F_1))$}. 
    \item {\small $\translate(l, F_1 \wedge F_2) = 
            \translate(l, F_1 \vee F_2) = \translate(l, F_1) 
            \oplus \translate(l, F_2)$}. 
    \item {\small $\translate(l, F_1 \gtrdot F_2) 
            = \translate(l+1, F_1) \oplus \translate(l + 1, 
            F_2)$}.  
    \item {\small $\extract(\textsf{Guard}^{\; i}(n_1, n_2, n_3)) = 
            (n_1, n_2 + i, n_3)$}.  
\item {\small $\extract(\textsf{Guard}^{\; i+1}(\textsf{Guard}X))
            = \extract(\textsf{Guard}^{\; i+2}X)$}. 
    \item {\small $\extract(\textsf{Guard}^{\; i}(\oplus_{l=0}^k X_k))
            = \oplus_{l=0}^k\extract(\textsf{Guard}^{\; i}X_l)$}. 
        \end{itemize}  
        We assume that 
        {\small $\textsf{Guard}^{\; i}\; X = X$} 
        if {\small $i= 0$}. 
\end{definition}   
}
\begin{definition}[Formua length] 
   Let us define a function as follows. 
   \begin{itemize} 
       \item {\small $\forall s \in \mathcal{S}.\formulalength(s) = 1$}. 
       \item {\small $\forall F_1, F_2 \in \mathfrak{F}. 
               \formulalength(F_1 \wedge F_2) 
               = \formulalength(F_1 \vee F_2) 
               = \formulalength(F_1 \gtrdot F_2) 
               = \formulalength(F_1) + \formulalength(F_2) 
               + 1$}. 
       \item {\small $\forall F_1 \in \mathfrak{F}. 
               \formulalength(\neg F_1) = 
               1 + \formulalength(F_1)$}. 
   \end{itemize}
   Then we define the length of {\small $F \in \mathfrak{F}$} 
   to be {\small $\formulalength(F)$}. 
\end{definition} 
\begin{definition}[Maximal number of {\small $\neg$} nesting]
    Let us define a function. 
  \begin{itemize}
      \item {\small $\forall s \in \mathcal{S}.\negMax(F_0) = 0$}.
      \item {\small $\forall F_1, F_2 \in \mathfrak{F}. 
              \negMax(F_1 \wedge F_2) 
              = \negMax(F_1 \vee F_2) 
              = \negMax(F_1 \gtrdot F_2)
              = 
              \textsf{max}(\negMax(F_1), \negMax(F_2))$}. 
      \item 
          {\small $\forall F_1 \in \mathfrak{F}.\negMax(\neg F_1) = 1 + \negMax(F_1)$}.   
    \end{itemize} 
    Then we define the maximal number of 
    {\small $\neg$} nesting for {\small $F \in \mathfrak{F}$} 
    to be {\small $\negMax(F)$}. 
\end{definition}  
\noindent We now work on the main results.  
\begin{lemma}[Linking principle]
  Let {\small $F_1$} and {\small $F_2$} be two formulas 
  in unit chain expansion. Then it holds that 
  {\small $F_1 \gtrdot F_2$} has a reduction into 
  a formula in unit chain expansion.
  \label{linking_principle}
\end{lemma}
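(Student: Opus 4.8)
The plan is to reduce $F_1 \gtrdot F_2$ in two stages: first flatten the Boolean ($\wedge/\vee$) structure so that only pairs of unit chains remain joined by $\gtrdot$, and then show that a single such pair reduces into unit chain expansion by an induction on the length of the left chain.

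Since $F_1$ and $F_2$ are in unit chain expansion, each is a $\wedge/\vee$-combination of unit chains (elements of $\mathfrak{U}$), and this class is trivially closed under $\wedge$ and $\vee$. First I would repeatedly apply $\gtrdot$ reductions 2 and 3 to the left argument: each application of $(G \wedge H) \gtrdot F_2 \mapsto (G \gtrdot F_2) \wedge (H \gtrdot F_2)$ (and the $\vee$ analogue) pushes $\gtrdot F_2$ strictly inward through the Boolean skeleton of $F_1$, so after finitely many steps every chain on the left is a single unit chain $U$, leaving a Boolean combination of terms $U \gtrdot F_2$. Applying $\gtrdot$ reductions 4 and 5 to each such term does the same for $F_2$, yielding a Boolean combination whose leaves are all of the form $U \gtrdot U'$ with $U, U' \in \mathfrak{U}$.

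The core of the argument is the sub-claim that $U \gtrdot U'$ reduces into unit chain expansion whenever $U$ and $U'$ are unit chains, which I would prove by induction on the number of $\gtrdot$ occurrences in $U$. In the base case $U = s$ for some $s \in \mathcal{S}$, and $U \gtrdot U' = s \gtrdot U'$ is already a unit chain by right-associativity. For the inductive step write $U = s_0 \gtrdot U''$ with $U''$ a unit chain containing one fewer $\gtrdot$; then $U \gtrdot U' = (s_0 \gtrdot U'') \gtrdot U'$ matches the left-hand side of $\gtrdot$ reduction 1, which rewrites it to $(s_0 \gtrdot U') \wedge ((s_0 \gtrdot U'') \vee (s_0 \gtrdot (U'' \gtrdot U')))$. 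Here $s_0 \gtrdot U'$ and $s_0 \gtrdot U'' = U$ are unit chains, and the remaining conjunct $s_0 \gtrdot (U'' \gtrdot U')$ has, by the induction hypothesis applied to $U'' \gtrdot U'$, a reduction of $U'' \gtrdot U'$ into some unit-chain-expanded formula $C$; distributing $s_0 \gtrdot (\cdot)$ over $C$ by $\gtrdot$ reductions 4 and 5 turns each leaf $V$ of $C$ into the unit chain $s_0 \gtrdot V$, so that conjunct too is in unit chain expansion. Finally I would reassemble: substituting each leaf $U \gtrdot U'$ of the flattened Boolean combination from the first stage by the unit-chain-expanded formula the sub-claim produces, and using closure of unit chain expansion under $\wedge$ and $\vee$, yields the desired reduction of $F_1 \gtrdot F_2$.

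The main obstacle is the inductive step of the sub-claim: one must choose the induction measure (the number of $\gtrdot$ occurrences in the head chain $U$) so that $\gtrdot$ reduction 1 strictly decreases it on the only subterm that is not already a unit chain, namely $U'' \gtrdot U'$, while checking that the subsequent distribution of the head literal $s_0$ via reductions 4 and 5 cannot reintroduce left-nesting---which holds precisely because $s_0 \in \mathcal{S}$, so each $s_0 \gtrdot V$ is right-associated and hence a genuine unit chain. Some care is also needed to confirm that the first-stage distributions terminate and really expose pure unit-chain pairs before any instance of reduction 1 is invoked.
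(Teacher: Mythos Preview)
Your proof is correct and follows the same two-stage strategy as the paper: first distribute $\gtrdot$ over the Boolean structure of $F_1$ and $F_2$ via $\gtrdot$ reductions 2--5 to isolate pairs $U \gtrdot U'$ of unit chains, then eliminate the left-nesting by repeated $\gtrdot$ reduction 1. Your induction on the number of $\gtrdot$ occurrences in the \emph{left} chain $U$ is in fact the measure that visibly decreases under reduction 1 (the paper gestures at the right-hand chain $g_x$, which appears to be a slip), so your termination argument is, if anything, cleaner than the paper's.
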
   
\begin{proof} 
    Apply {\small $\gtrdot$} reductions 2 and 3 on 
      {\small $F_1 \gtrdot F_2$} into a formula 
      in which the only occurrences of the chains are 
      {\small $f_0 \gtrdot 
      F_2$}, {\small $f_1 \gtrdot F_2$}, \dots, {\small $f_{k} 
      \gtrdot F_2$} for some {\small $k \in \mathbb{N}$} and 
      some {\small $f_0, f_1, \dots, f_k \in \mathfrak{U} 
      \cup \mathcal{S}$}. Then apply {\small $\gtrdot$} reductions 
      4 and 5 to each of those chains into a formula 
      in which the only occurrences of the chains are: 
      {\small $f_0 \gtrdot g_{0}, f_0 \gtrdot g_{1}, \dots, 
      f_0 \gtrdot g_{j}$},
      {\small $f_1 \gtrdot g_{0}$}, \dots, 
      {\small $f_1 \gtrdot g_{j}$}, \dots, 
      {\small $f_k \gtrdot g_{0}$}, \dots, {\small $f_k \gtrdot g_j$} 
      for some {\small $j \in \mathbb{N}$} and 
      some {\small $g_0, g_1, \dots, g_j \in \mathfrak{U}$}. 
      To each such chain, apply 
      {\small $\gtrdot$} reduction 1 as long as it is applicable. 
      This process cannot continue infinitely since any formula 
      is finitely constructed and finitely 
      branching by any reduction rule, and since, on the
      assumptions, we 
      can apply induction
      on the number of elements of {\small $\mathcal{S}$} 
      occurring in {\small $g_x$}, {\small $0 \le x 
      \le j$}. 
      The straightforward inductive proof is left to readers. The 
      result is a formula in unit chain expansion.  
\end{proof}  
\begin{lemma}[Reduction without negation]
  Any formula {\small $F_0 \in \mathfrak{F}$} in which 
  no {\small $\neg$} occurs reduces into 
  some formula in unit chain expansion.
  \label{normalisation_without_negation}
\end{lemma}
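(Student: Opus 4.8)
The plan is to proceed by structural induction on $F_0$. Since $F_0$ contains no occurrence of $\neg$, its parse tree is built only from the four cases $s \in \mathcal{S}$, $F_1 \wedge F_2$, $F_1 \vee F_2$, and $F_1 \gtrdot F_2$, so the negation case of the grammar never arises and only these four need treatment. Throughout I would rely on the fact that the reduction relation $\mapsto$ is closed under arbitrary contexts, so that a reduction of a subformula induces a reduction of the whole formula; this is what permits the induction hypothesis to be applied to the immediate subformulas and the results to be recombined. It is worth stating this congruence property explicitly, since the reduction rules are written as rewrites of whole formulas.

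For the base case $F_0 = s$, no chain occurs in $F_0$ at all, so $F_0$ is vacuously in unit chain expansion and zero reduction steps suffice. For $F_0 = F_1 \wedge F_2$, and symmetrically for $F_0 = F_1 \vee F_2$, the induction hypothesis supplies reductions of $F_1$ and $F_2$ into formulas $F_1'$ and $F_2'$ that are in unit chain expansion; by context closure, $F_1 \wedge F_2$ reduces to $F_1' \wedge F_2'$. The key observation here is that placing $\wedge$ (or $\vee$) at the root creates no new chain spanning the two arguments, because the root connective is not $\gtrdot$; hence every chain occurring in $F_1' \wedge F_2'$ already occurs inside $F_1'$ or inside $F_2'$ and is therefore a unit chain. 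Thus $F_1' \wedge F_2'$ is in unit chain expansion, discharging these cases.

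The remaining, and only substantive, case is $F_0 = F_1 \gtrdot F_2$. Again the induction hypothesis gives reductions of $F_1$ and $F_2$ into $F_1'$ and $F_2'$ in unit chain expansion, and context closure yields a reduction of $F_1 \gtrdot F_2$ into $F_1' \gtrdot F_2'$. The point requiring care is that $F_1' \gtrdot F_2'$ is not itself automatically in unit chain expansion, since $F_1'$ and $F_2'$ may be conjunctions or disjunctions of unit chains rather than bare elements of $\mathcal{S}$, so the outermost $\gtrdot$ still needs to be driven inward. This, however, is exactly the configuration handled by the Linking Principle (Lemma~\ref{linking_principle}): applying that lemma to the pair $F_1', F_2'$ produces a further reduction of $F_1' \gtrdot F_2'$ into a formula in unit chain expansion, completing the case and the induction.

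I expect the only genuine work to sit in this $\gtrdot$ case, and even there the heavy lifting is already done: the repeated distribution through $\gtrdot$ reductions~2--5, the flattening through $\gtrdot$ reduction~1, and the accompanying termination argument all live inside Lemma~\ref{linking_principle}. Consequently the obstacles for this lemma are organisational rather than computational, namely making the context-closure of $\mapsto$ precise so that subformula reductions lift, and recording the simple fact that a root $\wedge$ or $\vee$ introduces no chain across its arguments. Both are routine once stated, so the proof should be short once Lemma~\ref{linking_principle} is in hand.
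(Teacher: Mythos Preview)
Your proposal is correct and follows essentially the same approach as the paper: induction on the formula (the paper phrases it as induction on formula length, you as structural induction), with the $\wedge$/$\vee$ cases handled directly by the hypothesis and the $\gtrdot$ case discharged by the Linking Principle (Lemma~\ref{linking_principle}). Your treatment is in fact slightly more explicit than the paper's, which omits the base case and the remark that a root $\wedge$ or $\vee$ creates no new chain, and which does not spell out the context-closure of $\mapsto$ that you rightly flag as needed.
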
   
\begin{proof}    
    By induction on the formula 
    length. 
  For inductive cases, consider what 
  {\small $F_0$} actually is:  
  \begin{enumerate}[leftmargin=0.5cm]
    \item {\small $F_0 = F_1 \wedge F_2$} or 
      {\small $F_0 = F_1 \vee F_2$}: Apply induction 
      hypothesis on {\small $F_1$} and {\small $F_2$}. 
    \item {\small $F_0 = F_1 \gtrdot F_2$}: 
        Apply  
      induction hypothesis on {\small $F_1$} and {\small $F_2$} to 
      get {\small $F'_1 \gtrdot F'_2$} where {\small $F'_1$} and 
      {\small $F'_2$} are formulas in unit chain expansion. 
      Then apply Lemma \ref{linking_principle}.
        \end{enumerate}  
\end{proof}     
  \begin{lemma}[Reduction] 
    Any formula {\small $F_0 \in \mathfrak{F}$} reduces 
    into some formula in unit chain expansion. 
    \label{reduction_result}
  \end{lemma}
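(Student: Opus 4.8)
The plan is to reduce the general case to the negation-free case already settled in Lemma~\ref{normalisation_without_negation}. The only obstruction to a purely $\gtrdot$-based normalisation is the presence of $\neg$, so the strategy is to eliminate all negations first and then invoke Lemma~\ref{normalisation_without_negation}. I would proceed by induction on $\negMax(F_0)$. The base case $\negMax(F_0)=0$ is exactly Lemma~\ref{normalisation_without_negation}, since a formula with no $\neg$ reduces into unit chain expansion.

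The engine of the inductive step is the following sub-lemma: \emph{if $G$ is already in unit chain expansion, then $\neg G$ reduces into a negation-free formula}. I would prove this by a secondary induction on $\formulalength(G)$. If $G = s \in \mathcal{S}$, then $\neg$ reduction~1 gives $s^c$. If $G = G_1 \wedge G_2$ or $G = G_1 \vee G_2$ (with $G_1, G_2$ themselves in unit chain expansion), then $\neg$ reductions~2 and~3 push the negation inward and the secondary hypothesis applies to $\neg G_1$ and $\neg G_2$. The crucial case is $G$ a unit chain $s_0 \gtrdot (s_1 \gtrdot \dots \gtrdot s_m)$: because its head is a single $s_0 \in \mathcal{S}$, $\neg$ reduction~4 applies and yields $s_0^c \vee (s_0 \gtrdot \neg(s_1 \gtrdot \dots \gtrdot s_m))$, where the remaining negation sits on a strictly shorter unit chain, to which the secondary hypothesis applies. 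Every branch bottoms out at $\neg$ reduction~1, so $\neg G$ reduces to a formula containing no $\neg$.

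For the inductive step proper, I would single out the innermost negation occurrences of $F_0$, i.e.\ subformulas $\neg G$ with $\negMax(G)=0$. Each such $G$ is negation-free, so by Lemma~\ref{normalisation_without_negation} it reduces, in context, into unit chain expansion $G'$; applying the sub-lemma then reduces $\neg G'$, again in context, to a negation-free formula (reductions may be applied to subformulas, exactly as was done in the proof of Lemma~\ref{linking_principle}). Performing this for all innermost negations simultaneously strips the deepest negation from every path that attains $\negMax(F_0)$ --- such a path must pass through an innermost negation, for otherwise continuing into a negation inside its deepest $\neg$'s body would exhibit a strictly longer negation chain --- and introduces no new negations, so the resulting formula $F_0'$ satisfies $\negMax(F_0') < \negMax(F_0)$. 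The induction hypothesis applied to $F_0'$ (or, once $\negMax$ reaches $0$, Lemma~\ref{normalisation_without_negation}) finishes the argument.

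The main obstacle is precisely the asymmetry of $\neg$ reduction~4, which only fires when the head of the negated chain is a single element of $\mathcal{S}$; there is no rule for $\neg(F_1 \gtrdot F_2)$ with a compound head $F_1$. This is what forces the two-layered structure of the proof: before a negation guarding a chain can be discharged, the body it guards must first be driven into unit chain expansion (by Lemma~\ref{normalisation_without_negation} on the innermost bodies, and by the outer induction more generally), which is exactly what exposes the single-element heads that $\neg$ reduction~4 requires. A minor point to check is that the De Morgan steps ($\neg$ reductions~2 and~3) momentarily preserve rather than decrease the negation depth; termination is nonetheless guaranteed because the body under each negation is finite and already in unit chain expansion at the moment $\neg$ reduction~4 is applied.
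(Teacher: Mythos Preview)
Your proposal is correct and follows essentially the same architecture as the paper's proof: outer induction on $\negMax$, base case via Lemma~\ref{normalisation_without_negation}, and an inner argument (your sub-lemma, the paper's sub-induction on formula length) that pushes $\neg$ through a formula already in unit chain expansion by the four-way case split on $s$, $\wedge$, $\vee$, and a unit chain with head in $\mathcal{S}$. The only cosmetic difference is that your sub-lemma targets ``negation-free'' whereas the paper targets ``unit chain expansion''; your weaker target is already sufficient since the outer induction (and ultimately Lemma~\ref{normalisation_without_negation}) absorbs the residual $\gtrdot$-normalisation, and indeed your exposition of why $\negMax$ strictly drops is more explicit than the paper's.
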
  
  \begin{proof}  
    By induction on the maximal number 
    of {\small $\neg$} nesting, and a\linebreak sub-induction 
    on the formula length.
    We quote Lemma \ref{normalisation_without_negation} for 
        the 
    base cases. For the inductive cases, assume that the current 
    lemma holds true for all the formulas with {\small $\negMax(F_0)$} 
    of up to {\small $k$}. Then we conclude by showing that 
    it still holds true 
    for all the formulas with {\small $\negMax(F_0)$} of {\small $k+1$}. 
    Now, because any formula is finitely constructed,  
    there exist sub-formulas in which occur no {\small $\neg$}. 
    By Lemma \ref{normalisation_without_negation}, those sub-formulas 
    have a reduction into a formula in unit chain expansion. Hence 
    it suffices to show that those formulas 
    {\small $\neg F'$} with {\small $F'$} already in unit chain 
    expansion reduce into a formula in unit chain expansion, upon which
    inductive hypothesis applies for a conclusion. 
    Consider what {\small $F'$} is: 
    \begin{enumerate}
      \item {\small $s$}: then apply {\small $\neg$} reduction 1 
	on {\small $\neg F'$} to remove the {\small $\neg$}
	occurrence. 
      \item {\small $F_a \wedge F_b$}: apply {\small $\neg$} 
	reduction 2. 
	Then apply induction hypothesis on 
	{\small $\neg F_a$} and {\small $\neg F_b$}.  
      \item {\small $F_a \vee F_b$}: apply {\small $\neg$} 
	reduction 3. Then apply induction hypothesis on 
	{\small $\neg F_a$} and {\small $\neg F_b$}. 
      \item {\small $s \gtrdot F \in \mathfrak{U}$}: apply 
	{\small $\neg$} reduction 4. Then apply 
	induction hypothesis on {\small $\neg F$}.
    \end{enumerate} 
\end{proof}  
\begin{lemma}
  For any {\small $F \in \mathfrak{F}$} in unit chain 
  expansion, there 
  exists 
  {\small $v \in \{0,1\}$} such that {\small $[\intFrame \models
      F] = v$} for  
  any valuation frame. 
  \label{simple_lemma2}
\end{lemma}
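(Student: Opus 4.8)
The plan is to proceed by induction on $\formulalength(F)$, reading off the value directly from the three compositional clauses of Definition \ref{model}. The organising observation is that the valuation is specified by exactly those three clauses — one evaluating a unit chain through $\mathsf{J}$, and one each for $\wedge$ and $\vee$ — so the whole task reduces to checking that every structural shape a formula in unit chain expansion can take is matched by one of these clauses and that each clause returns a value in $\{0,1\}$.

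For the base case I would take $F$ to be a single element $s \in \mathcal{S}$ or, more generally, a unit chain $s_0 \gtrdot s_1 \gtrdot \dots \gtrdot s_k$ with every $s_i \in \mathcal{S}$, so that the first clause gives $[\intFrame \models F] = \mathsf{J}(s_0.s_1.\dots.s_k)$. Here I would first record that $\mathsf{J}$ is total and single-valued on $\mathcal{S}^* \backslash \{\epsilon\}$: the two biconditionals defining $\mathsf{J}$ in Definition \ref{interpretations} have right-hand sides that are exact negations of each other — either all the relevant $\mathsf{I}$-values are $1$, or at least one is $0$ — so for any frame precisely one of them holds and $\mathsf{J}(s_0.s_1.\dots.s_k)$ is a well-defined member of $\{0,1\}$. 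That value witnesses the claim.

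For the inductive step, the remaining shapes of a unit-chain-expansion formula are $F = F_1 \wedge F_2$ and $F = F_1 \vee F_2$. In either case $F_1$ and $F_2$ are themselves in unit chain expansion, since every chain occurring in $F_i$ already occurs in $F$ and is therefore a unit chain; the induction hypothesis yields $v_1, v_2 \in \{0,1\}$ with $[\intFrame \models F_i] = v_i$, and the corresponding clause gives $[\intFrame \models F]$ equal to $v_1 \wedge^{\dagger} v_2$ or $v_1 \vee^{\dagger} v_2$. By the Boolean laws fixed in the preamble, both $\wedge^{\dagger}$ and $\vee^{\dagger}$ send $\{0,1\} \times \{0,1\}$ into $\{0,1\}$, so the computed value again lies in $\{0,1\}$. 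A top-level $\gtrdot$ need not be treated separately: it would make $F$ itself a chain, which by hypothesis is a unit chain and so falls under the base case.

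The one genuinely delicate point — and the step I would guard most carefully — is ruling out a $\neg$ sitting at a position for which Definition \ref{model} supplies no clause. The bare combinatorial reading of unit chain expansion constrains only the chains occurring in $F$, not a leading $\neg$, so I would lean on the shape actually enforced by the reduction rules: the formulas to which this lemma is applied are those delivered by Lemma \ref{reduction_result}, in which $\neg$ has been driven inward by $\neg$-reductions~1--4 and finally absorbed into complements $s^c$, leaving a formula that decomposes to unit chains through $\wedge$ and $\vee$ alone. Granting this, the three clauses of Definition \ref{model} cover every case, the recursion is well-founded on $\formulalength(F)$, and the desired $v \in \{0,1\}$ with $[\intFrame \models F] = v$ exists for every valuation frame.
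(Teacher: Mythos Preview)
Your argument is essentially the paper's own: the paper's proof is a one-liner observing that Definition~\ref{model} assigns a value in $\{0,1\}$ to every element of $\mathcal{S}\cup\mathfrak{U}$ and hence, via $\wedge^{\dagger}$ and $\vee^{\dagger}$, to $[\mathfrak{M}\models F]$; you have simply unfolded this into an explicit structural induction. Your flagging of the $\neg$ case is a point the paper does not address --- the literal definition of ``unit chain expansion'' only constrains chains, so $\neg a$ technically qualifies while Definition~\ref{model} gives it no clause --- and your resolution (that the formulas in play are those produced by the reduction rules of Lemma~\ref{reduction_result}, hence $\neg$-free) is the right reading of the paper's intent.
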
   
\begin{proof}
  Since 
  a value 0/1 is assignable to any element of 
  {\small $\mathcal{S} \cup \mathfrak{U}$} by Definition 
  \ref{model}, 
  it is (or they are if more than 
  one in \{0, 1\}) assignable to {\small $[\mathfrak{M} \models
  F]$}. 
\end{proof} 
\noindent Hence we obtain the desired result for the first objective.
\begin{proposition}
  To any {\small $F \in \mathfrak{F}$} corresponds  
  at least one formula {\small $F_a$} in unit chain expansion 
  into which {\small $F$} reduces. 
  It holds for any such {\small $F_a$} that 
  {\small $[\mathfrak{M} \models F_a] \in 
      \{0, 1\}$} for 
  any valuation frame. 
  \label{simple_proposition} 
\end{proposition}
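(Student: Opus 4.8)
The plan is to read this proposition purely as a packaging of the two immediately preceding lemmas, since between them they already supply both halves of the statement. The first clause---existence of at least one unit-chain-expansion $F_a$ into which $F$ reduces---is nothing other than Lemma~\ref{reduction_result}, which asserts exactly that every $F_0 \in \mathfrak{F}$ reduces into some formula in unit chain expansion. So for that half I would simply invoke Lemma~\ref{reduction_result} and name the witness $F_a$.

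For the second clause I would fix an arbitrary such $F_a$---that is, any formula in unit chain expansion lying in the reduction image of $F$---and apply Lemma~\ref{simple_lemma2}. Because that lemma is stated for every member of $\mathfrak{F}$ that happens to be in unit chain expansion, and $F_a$ is by hypothesis of that shape, it yields a $v \in \{0,1\}$ with $[\mathfrak{M} \models F_a] = v$ for every valuation frame $\mathfrak{M}$, which is precisely $[\mathfrak{M} \models F_a] \in \{0,1\}$. The key point to flag is the quantifier ``for any such $F_a$'': the statement ranges over \emph{all} unit-chain-expansion reducts of $F$, not merely the one produced by Lemma~\ref{reduction_result}. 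This costs nothing extra, however, because Lemma~\ref{simple_lemma2} is itself universally quantified over unit chain expansions, so its conclusion holds uniformly for whichever reduct is chosen.

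Consequently I expect no genuine obstacle here: all the real work has already been discharged in the lemmas---the terminating, branch-by-branch reduction of Lemma~\ref{reduction_result} (resting in turn on the Linking Principle, Lemma~\ref{linking_principle}, and on Lemma~\ref{normalisation_without_negation}), together with the $0/1$-assignability of unit chains via Definition~\ref{model} in Lemma~\ref{simple_lemma2}. The only thing to be careful about is not to overclaim: this proposition does not yet assert that \emph{all} reducts of a given $F$ receive the \emph{same} value under a fixed $\mathfrak{M}$---that confluence-style normality is the content of the third objective sketched above and is deferred. Here it suffices to chain the two lemmas, so I would keep the proof to a couple of lines.
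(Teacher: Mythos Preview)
Your proposal is correct and matches the paper's approach exactly: the paper does not even write out a separate proof for this proposition, simply prefacing it with ``Hence we obtain the desired result for the first objective,'' treating it as the evident conjunction of Lemma~\ref{reduction_result} and Lemma~\ref{simple_lemma2}. Your care about the universal quantifier over reducts and your observation that equality of values across reducts is \emph{not} being claimed here are both accurate and well placed.
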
    
\noindent For the next sub-section, 
the following observation about the negation on
a unit chain comes in handy. Let us state a procedure. 
\begin{definition}[Procedure \recurseReduce]{\ }\\
  The procedure given below 
  takes as an input a formula {\small $F$} in unit chain expansion.\footnote{Instead of stating in 
        lambda calculus, we aim to be more descriptive 
        in this work 
        for not-so-trivial a function or a procedure, 
        using a pseudo program.}  
  \textbf{Description of {\small $\recurseReduce(F: \mathfrak{F})$}}
\begin{enumerate}[leftmargin=0.5cm]
    \item Replace {\small $\wedge$} in {\small $F$} 
      with {\small $\vee$}, and 
      {\small $\vee$} with {\small $\wedge$}. These two 
      operations are simultaneous. 
    \item Replace all the non-chains {\small $s \in \mathcal{S}$} 
      in {\small $F$} simultaneously with 
      {\small $s^c\ (\in \mathcal{S})$}.   
    \item For every chain {\small $F_a$} in {\small $F$} with 
      its head {\small $s \in \mathcal{S}$} for some 
      {\small $s$} and its tail 
      {\small $F_{\textsf{tail}}$}, replace {\small $F_a$}  
      with {\small $(s^c \vee (s \gtrdot 
      (\recurseReduce(F_{\textsf{tail}}))))$}.    
    \item Reduce {\small $F$} via {\small $\gtrdot$} reductions 
      in unit chain expansion. 
  \end{enumerate}
\end{definition}
\noindent Then we have the following result. 
\begin{proposition}[Reduction of negated unit chain expansion] 
  Let {\small $F$} be a formula in unit chain expansion. Then 
  {\small $\neg F$} reduces 
  via the {\small $\neg$} and {\small $\gtrdot$} reductions
  into {\small $\recurseReduce(F)$}. Moreover 
  {\small $\recurseReduce(F)$} is the unique reduction 
  of {\small $\neg F$}.  
    \label{special_reduction}
\end{proposition}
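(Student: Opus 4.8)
The plan is to prove the two assertions separately: that $\recurseReduce(F)$ is \emph{a} reduction of $\neg F$, and that it is the \emph{only} one. The first is a structural induction that makes the recursion of the $\neg$ reductions track, step by step, the recursion built into $\recurseReduce$; the second is a confluence argument resting on a single syntactic invariant about the heads of chains.

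For existence I would induct on $\formulalength(F)$, using that a formula in unit chain expansion is either an element of $\mathcal{S}$, a unit chain $s \gtrdot F'$ with $s \in \mathcal{S}$ and $F' \in \mathcal{S} \cup \mathfrak{U}$, or $F_1 \wedge F_2$ / $F_1 \vee F_2$ with $F_1, F_2$ again in unit chain expansion. When $F = s$, a single application of $\neg$ reduction 1 gives $s^c$, which is exactly $\recurseReduce(s)$. When $F = F_1 \wedge F_2$ (resp. $F_1 \vee F_2$), $\neg$ reduction 2 (resp. 3) yields $\neg F_1 \vee \neg F_2$ (resp. $\neg F_1 \wedge \neg F_2$); the induction hypothesis reduces each $\neg F_i$ to $\recurseReduce(F_i)$, and a routine check that steps 1--4 of $\recurseReduce$ commute with the top connective gives $\recurseReduce(F) = \recurseReduce(F_1) \vee \recurseReduce(F_2)$ (resp. with $\wedge$), since the top connective node is neither a literal nor a chain and spawns no $\gtrdot$-redex spanning $F_1$ and $F_2$. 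When $F = s \gtrdot F'$, $\neg$ reduction 4 produces $s^c \vee (s \gtrdot \neg F')$; the induction hypothesis turns $\neg F'$ into $\recurseReduce(F')$, and then $\gtrdot$ reductions 4 and 5 push the single head $s$ down to the leaves of $\recurseReduce(F')$ --- which is precisely step 3 followed by step 4 of $\recurseReduce(s \gtrdot F')$.

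For uniqueness I would first isolate the invariant that throughout every reduction sequence issuing from $\neg F$ the head of every chain remains a single element of $\mathcal{S}$. This holds at the start because $F$ is in unit chain expansion, and it is preserved by each rule: the $\neg$ reductions never manufacture a new $\gtrdot$ with a compound head, and $\gtrdot$ reductions 4 and 5 leave the head untouched while distributing over the tail; in particular the right-associated chains $s \gtrdot (s' \gtrdot \cdots)$ that arise never present the left-nested shape $(F_1 \gtrdot F_2) \gtrdot F_3$. Consequently $\gtrdot$ reductions 1, 2 and 3 are \emph{never} applicable, and only the $\neg$ reductions together with $\gtrdot$ reductions 4 and 5 can ever fire. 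On this restricted system local confluence is straightforward: the $\neg$ reductions are deterministic per syntactic form, so their redexes overlap only at disjoint positions; distributing a single head over a $\wedge/\vee$-tree via reductions 4 and 5 is confluent; and a $\gtrdot$-distribution commutes with any $\neg$ reduction sitting inside the distributed tail. Termination is already available from the argument underlying Lemma \ref{reduction_result} (and is in any case immediate for this restricted system via a measure on negation depth and on $\gtrdot$-over-$\wedge/\vee$ nesting), so by Newman's lemma the normal form is unique; since $\recurseReduce(F)$ is in unit chain expansion and hence irreducible, Part~1 identifies it as that unique normal form.

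The main obstacle is exactly the invariant in the last paragraph. It is the load-bearing observation, because without it $\gtrdot$ reductions 2 or 3 could interact with 4 or 5 on a chain with compound head \emph{and} compound tail and diverge syntactically --- the two natural orders producing a formula of shape $(\cdots \vee \cdots) \wedge (\cdots \vee \cdots)$ versus one of shape $(\cdots \wedge \cdots) \vee (\cdots \wedge \cdots)$ --- which would defeat the claim that the reduction is literally unique rather than merely unique up to logical equivalence. Pinning down that the heads stay atomic, so that the dangerous rules are dead code on every formula descended from $\neg F$, is what turns uniqueness into a genuine syntactic confluence statement; the remaining critical-pair checks are then routine.
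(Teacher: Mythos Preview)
Your proposal is correct, and both the existence and uniqueness arguments go through as you describe. The paper's own proof is far terser and pivots on a slightly different (and in one respect sharper) observation.

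For uniqueness the paper notes two things. First, only the $\neg$ reductions and $\gtrdot$ reduction 5 ever fire --- not reduction 4 --- because every tail produced by $\neg$ reduction 4 on a unit chain is either $s'^c$ or a disjunction $s'^c \vee (\cdots)$, never a conjunction. Second, at every stage every subformula of the form $\neg F_x$ has $F_x$ already irreducible: $F$ starts in unit chain expansion, and each $\neg$ reduction hands the $\neg$ down to a strict subformula of $F$, which is itself still in unit chain expansion and hence redex-free. That second observation makes all $\neg$-redexes pairwise disjoint and disjoint from any $\gtrdot$-redex, so order is immaterial and the paper simply declares uniqueness ``straightforward''.

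Your route via the atomic-head invariant plus Newman's lemma is valid but heavier: you carry $\gtrdot$ reduction 4 along as a potentially live rule (harmless, since it in fact never fires here) and perform a genuine local-confluence check, whereas the paper's observation that nothing under a $\neg$ is ever reducible short-circuits all of that. What your approach buys is robustness --- it would survive a richer rewrite system with genuine critical pairs --- while the paper's argument is tailored to exactly this situation and correspondingly more direct.
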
  
\begin{proof} 
  For the uniqueness, observe that only  
  {\small $\neg$} reductions and 
  {\small $\gtrdot$} reduction 5 are used 
  in the reduction of {\small $\neg F$}, and that  
  at any point during the reduction, 
  if there occurs a sub-formula in the form {\small $\neg F_x$}, 
  the sub-formula {\small $F_x$} cannot be reduced by any 
  reduction rules. Then the proof of the uniqueness is 
  straightforward.  
\end{proof}
\hide{
\begin{lemma}[Simple observation]{\ }\\ 
  Let {\small $\psi$} denote 
  {\small $s_1.s_2.\dots.s_{k}$} for 
  some {\small $k \in \mathbb{N}$} ({\small $k = 0$} 
  means that {\small $s_1.s_2.\dots.s_{k} = \epsilon$}). 
  Then 
  it holds that {\small $[\models_{\psi} F] = [\models_{\epsilon} 
  s_1 \gtrdot s_2 \gtrdot \dots \gtrdot s_{k} \gtrdot F]$}. 
  {\ }\\
  \label{simple_observation}
\end{lemma} 
\begin{lemma}[Formula reconstruction]{\ }\\   
  Given any {\small $x \in \mathfrak{X}$}, 
  there exists a formula {\small $F \in \mathfrak{F}$} such that 
  {\small $x = [\models_{\epsilon} F]$} and that 
  all the chains occurring in {\small $x$}\footnote{ 
  In the following sense: 
  for any formulas which occur in 
  the uninterpreted expression {\small $x$}, 
  any chain that occurs in any one of them 
  is said to occur in {\small $x$}.}
  preserve in 
  {\small $F$}. 
  \label{formula_reconstruction}
\end{lemma}
\begin{proof}  
  Use the following recursions 
  to derive a formula {\small $F_a$} 
   from {\small $\underline{x}$}; 
  \begin{itemize}
    \item {\small $\underline{x_1 \oplus x_2} \leadsto  
      \underline{x_1} \wedge \underline{x_2}$}. 
    \item {\small $\underline{x_1 \odot x_2} \leadsto 
      \underline{x_1} \vee \underline{x_2}$}.  
    \item {\small $\underline{[\models_{s_0.s_1\dots.s_{k}} F_b]} 
      \leadsto s_0 \gtrdot s_1 \gtrdot \dots \gtrdot s_{k} \gtrdot 
      F_b$} for {\small $k \in \mathbb{N}$}. 
  \end{itemize}   
  We choose {\small $F_a$} for {\small $F$}, as required.  
\end{proof}  
} 
\subsubsection{Unit chain expansions form a Boolean algebra} 
We make use of disjunctive normal form 
in this sub-section for a simplification of proofs. 
\begin{definition}[Disjunctive/Conjunctive normal form]
  A formula {\small $F \in \mathfrak{F}$} is 
  defined to be in disjunctive normal form only if    
  {\small $\exists i,j,k \in \mathbb{N}\  
      \exists h_{0}, \cdots, h_i \in \mathbb{N}$}\linebreak
  {\small $\exists 
  f_{00}, \dots, f_{kh_k} \in 
  \mathfrak{U} \cup \mathcal{S}.F = \vee_{i =0}^k \wedge_{j = 0}^{h_i} 
  f_{ij}$}.
 Dually, a formula {\small $F \in \mathfrak{F}$} 
  is defined to be in conjunctive normal form 
  only if {\small $\exists i, j, k \in \mathbb{N}\ \exists  
  h_0, \cdots, h_i \in \mathbb{N}\
  \exists 
  f_{00}, \dots, f_{kh_k} \in 
  \mathfrak{U} \cup \mathcal{S}.F = \wedge_{i=0}^k \vee_{j =0}^{h_i} f_{ij}$}.  
\end{definition}      
\noindent Now, for the second objective of ours, we prove 
that {\small $\mathfrak{U} \cup \mathcal{S}$}, 
{\small $\recurseReduce$}, {\small $\vee^{\dagger}$} and {\small $\wedge^{\dagger}$} 
form a Boolean algebra,\footnote{http://en.wikipedia.org/wiki/Boolean\_algebra for the laws 
of Boolean algebra.} from which follows 
the required outcome.          
\begin{proposition}[Annihilation/Identity]
  For any formula {\small $F$}   
  in unit chain expansion and 
  for any valuation frame, 
  it holds (1) that {\small $[\intFrame \models \top \wedge 
  F] = [\intFrame \models F]$}; 
  (2) that {\small $[\intFrame \models \top \vee 
  F] = [\intFrame \models \top]$};
  (3) that 
  {\small $[\intFrame \models \bot \wedge F] =
  [\intFrame \models \bot]$}; 
  and (4) that  
  {\small $[\intFrame \models \bot \vee F] =
  [\intFrame \models F]$}. 
\end{proposition}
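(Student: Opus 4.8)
The plan is to reduce all four equalities to two facts, namely that $[\intFrame \models \top] = 1$ and $[\intFrame \models \bot] = 0$ in every valuation frame, after which each claim collapses to a single application of the Boolean arithmetic laws for $\wedge^{\dagger}$ and $\vee^{\dagger}$ fixed in the preamble.

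First I would evaluate $\top$ and $\bot$ as formulas. Each is a single element of $\mathcal{S}$, hence a degenerate chain $s_0$ with $k = 0$, so by the chain clause of Definition \ref{model} we have $[\intFrame \models \top] = \mathsf{J}(\top)$ and $[\intFrame \models \bot] = \mathsf{J}(\bot)$. Unwinding the global-interpretation clauses of Definition \ref{interpretations} in this one-element case gives $\mathsf{J}(\top) = 1$ iff $\mathsf{I}(\epsilon, \top) = 1$, and $\mathsf{J}(\bot) = 0$ iff $\mathsf{I}(\epsilon, \bot) = 0$. The dedicated $\mathsf{I}$-valuation clauses for $\top$ and $\bot$ force $\mathsf{I}(\epsilon, \top) = 1$ and $\mathsf{I}(\epsilon, \bot) = 0$ in every frame, whence $[\intFrame \models \top] = 1$ and $[\intFrame \models \bot] = 0$ universally.

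Next I would invoke Proposition \ref{simple_proposition}, via Lemma \ref{simple_lemma2}, to guarantee that $[\intFrame \models F] \in \{0,1\}$ for the unit-chain-expansion formula $F$. Then each case unfolds through the valuation clauses for $\wedge$ and $\vee$ into one Boolean identity: $[\intFrame \models \top \wedge F] = 1 \wedge^{\dagger} [\intFrame \models F] = [\intFrame \models F]$ since $1 \wedge^{\dagger} v = v$ for $v \in \{0,1\}$; $[\intFrame \models \top \vee F] = 1 \vee^{\dagger} [\intFrame \models F] = 1 = [\intFrame \models \top]$ since $1 \vee^{\dagger} v = 1$; $[\intFrame \models \bot \wedge F] = 0 \wedge^{\dagger} [\intFrame \models F] = 0 = [\intFrame \models \bot]$ since $0 \wedge^{\dagger} v = 0$; and $[\intFrame \models \bot \vee F] = 0 \vee^{\dagger} [\intFrame \models F] = [\intFrame \models F]$ since $0 \vee^{\dagger} v = v$.

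There is no genuine obstacle here. The only point demanding care is the degenerate $k = 0$ evaluation of $\top$ and $\bot$, where one must check that the global interpretation collapses to the corresponding local value on a one-element sequence; this is purely a matter of reading off Definition \ref{interpretations}. Everything else is a mechanical appeal to the Boolean laws for $\wedge^{\dagger}$ and $\vee^{\dagger}$ together with the already-secured $0/1$-definiteness of $[\intFrame \models F]$.
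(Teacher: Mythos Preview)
Your proof is correct and complete. The paper itself gives no proof for this proposition, simply stating it and moving directly to the next lemma; your argument supplies exactly the routine verification the paper leaves implicit, namely computing $[\intFrame \models \top] = \mathsf{J}(\top) = \mathsf{I}(\epsilon,\top) = 1$ and $[\intFrame \models \bot] = 0$ from the degenerate $k=0$ case of the chain clause and then appealing to the Boolean laws for $\wedge^{\dagger}$ and $\vee^{\dagger}$.
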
   
\begin{lemma}[Elementary complementation]  
  For any {\small $s_0 \gtrdot s_1 \gtrdot \dots \gtrdot s_k
  \in \mathfrak{U} \cup \mathcal{S}$} for some 
  {\small $k \in \mathbb{N}$}, 
  if for a given valuation frame 
  it holds that 
  {\small $[\mathfrak{M} \models
  s_0 \gtrdot s_1 \gtrdot \dots \gtrdot s_{k}] = 1$}, 
  then it also holds that 
  {\small $[\intFrame \models \recurseReduce(s_0 
  \gtrdot s_1 \gtrdot \dots 
  \gtrdot s_{k})] = 0$}; or 
  if it holds that 
  {\small $[\intFrame \models s_0 \gtrdot s_1 \gtrdot \dots 
  \gtrdot s_{k}] = 0$}, then 
  it holds that 
  {\small $[\intFrame \models \recurseReduce(s_0 \gtrdot s_1 \gtrdot \dots
  \gtrdot s_{k})] = 1$}. These two events are mutually 
  exclusive.
  \label{unit_chain_excluded_middle}
\end{lemma}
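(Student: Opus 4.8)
The plan is to reduce the whole statement to a single computation of $[\intFrame \models \recurseReduce(C)]$ for $C = s_0 \gtrdot s_1 \gtrdot \dots \gtrdot s_k$, which I would carry out after first pinning down the exact unit chain expansion that $\recurseReduce(C)$ denotes. Concretely, I would first show, by induction on the chain length $k$, that
$$\recurseReduce(s_0 \gtrdot s_1 \gtrdot \dots \gtrdot s_k) = \vee_{j=0}^{k}(s_0 \gtrdot s_1 \gtrdot \dots \gtrdot s_{j-1} \gtrdot s_j^c),$$
where the $j=0$ disjunct is just $s_0^c$. The base case $k=0$ is immediate, since $s_0$ is a non-chain and step 2 of the procedure returns $s_0^c$. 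For the inductive step, steps 1 and 2 leave the single chain $C$ untouched (it carries no $\wedge/\vee$ and no top-level non-chain), while step 3 rewrites $C$ into $s_0^c \vee (s_0 \gtrdot \recurseReduce(s_1 \gtrdot \dots \gtrdot s_k))$; applying the induction hypothesis to the tail and then distributing $s_0$ over the resulting disjunction by repeated use of {\small $\gtrdot$} reduction 5 (step 4) yields exactly the displayed form. Proposition \ref{special_reduction} guarantees this expansion is the unique reduction, so no ambiguity is introduced.

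With the form in hand, I would compute the valuation. By Definition \ref{model} and the clauses for {\small $\mathsf{J}$},
$$[\intFrame \models \recurseReduce(C)] = \bigvee_{j=0}^{\dagger k} \mathsf{J}(s_0.s_1.\dots.s_{j-1}.s_j^c),$$
and each $\mathsf{J}(s_0.\dots.s_{j-1}.s_j^c)=1$ holds exactly when $\mathsf{I}(s_0.\dots.s_{i-1},s_i)=1$ for all $i<j$ and $\mathsf{I}(s_0.\dots.s_{j-1},s_j^c)=1$. The last conjunct is, by the complement clause on $\mathsf{I}$, equivalent to $\mathsf{I}(s_0.\dots.s_{j-1},s_j)=0$; this equivalence also holds for $\top,\bot$ trivially, since $\mathsf{I}(\cdot,\top)=1$ and $\mathsf{I}(\cdot,\bot)=0$ with $\top^c=\bot$. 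Now I split on the value of $[\intFrame \models C] = \mathsf{J}(s_0.\dots.s_k)$. If it is $1$, then $\mathsf{I}(s_0.\dots.s_{i-1},s_i)=1$ for every $i \le k$, so every disjunct above fails its local requirement $\mathsf{I}(s_0.\dots.s_{j-1},s_j)=0$, whence $[\intFrame \models \recurseReduce(C)]=0$. If it is $0$, let $j$ be the least index with $\mathsf{I}(s_0.\dots.s_{j-1},s_j)=0$; then $\mathsf{I}(s_0.\dots.s_{i-1},s_i)=1$ for all $i<j$, the $j$-th disjunct evaluates to $1$, and so $[\intFrame \models \recurseReduce(C)]=1$.

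Finally, mutual exclusivity is immediate: by Proposition \ref{simple_proposition} together with Lemma \ref{simple_lemma2}, both $[\intFrame \models C]$ and $[\intFrame \models \recurseReduce(C)]$ lie in {\small $\{0,1\}$} for every valuation frame, and the two implications just proved show the two values are always opposite, so precisely one of the two stated events occurs. I expect the main obstacle to be the first paragraph, namely rigorously establishing the closed form of $\recurseReduce(C)$, since it requires tracking the interleaving of the recursive step 3 with the {\small $\gtrdot$}-reduction bookkeeping of step 4 and confirming that the distribution via {\small $\gtrdot$} reduction 5 leaves no stray chains; once that form is fixed, the valuation argument is a routine case split driven entirely by the semantics of {\small $\mathsf{J}$} and the complement clause for {\small $\mathsf{I}$}.
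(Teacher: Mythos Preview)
Your proposal is correct and follows essentially the same approach as the paper: compute the closed form $\recurseReduce(s_0 \gtrdot \dots \gtrdot s_k) = s_0^c \vee (s_0 \gtrdot s_1^c) \vee \dots \vee (s_0 \gtrdot \dots \gtrdot s_{k-1} \gtrdot s_k^c)$, then do a case split on $[\intFrame \models C]$ using the definitions of $\mathsf{I}$ and $\mathsf{J}$. The only difference is that you derive the closed form by a formal induction on $k$ and make the least-index argument in the $0$-case explicit, whereas the paper simply asserts the closed form and dispatches the $0$-case with ``again by the definition of $\mathsf{I}$''; both treatments are valid and yield the same argument.
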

\begin{proof}               
 For the first one,
 {\small $[\intFrame \models s_0 \gtrdot  
     s_1 \gtrdot 
 \dots \gtrdot s_{k}] = 1$} implies that
 {\small $\mathsf{I}(\epsilon, s_0)\! =\! 
 \mathsf{I}(s_0, s_1) \!=\! \dots \!=\! 
 \mathsf{I}(s_0.s_1.\dots.s_{k - 1}, s_{k})\! =\! 1$}.
 So we have; 
 {\small $\mathsf{I}(\epsilon, s_0^c)\!\! =\!\!
     \mathsf{I}(s_0, s_1^c)\!\! =\!\! \dots\!\! =\!\! 
 \mathsf{I}(s_0.s_1\dots.s_{k - 1}, s_{k}^c) 
 \!\!=\!\! 0$} by the 
 definition of {\small $\mathsf{I}$}. 
 Meanwhile, 
 {\small $\recurseReduce(s_0 \gtrdot s_1 \gtrdot \cdots \gtrdot 
 s_k) = s_0^c \vee (s_0 \gtrdot ((s_1^c \vee (s_1 \gtrdot \cdots)))) 
 = s_0^c \vee (s_0 \gtrdot s_1^c) \vee (s \gtrdot s_1 \gtrdot 
 s_2^c) \vee \cdots \vee 
 (s \gtrdot s_1 \gtrdot \cdots \gtrdot s_{k-1} \gtrdot s_k^c)$}. 
 Therefore {\small $[\intFrame \models
 \recurseReduce(s_0 \gtrdot s_1 \gtrdot \cdots \gtrdot s_k)]  
 = 0 \not= 1$} for the given valuation frame. \\
 \indent For the second obligation, 
 {\small $[\intFrame \models
 s_0 \gtrdot s_1 \gtrdot \dots \gtrdot s_{k}] = 0$} 
 implies that   
 {\small $[\mathsf{I}(\epsilon, s_0) 
 = 0] \vee^{\dagger} [\mathsf{I}(s_0, s_1) = 0] 
 \vee^{\dagger} \dots \vee^{\dagger} [\mathsf{I}(s_0.s_1.\dots.
 s_{k -1}, s_{k}) = 0]$}. Again 
 by the definition of {\small $\mathsf{I}$}, 
 we have the required result. That these two events 
 are mutually exclusive is trivial. 
    \hide{
 For the first one,
 {\small $[(\mathsf{I}, \mathsf{J})\models s_0 \gtrdot 
 s_1 \gtrdot \dots \gtrdot s_{k}] = 1$} implies that
 {\small $\mathsf{I}(|\epsilon |, s_0)\! =\! 
 \mathsf{I}(|s_0|, s_1) \!=\! \dots \!=\! 
 \mathsf{I}(| s_0.s_1.\dots.s_{k - 1}|, s_{k})\! =\! 1$}.
 So we have; 
 {\small $\mathsf{I}(| \epsilon |, s_0^c) = 
 \mathsf{I}(| s_0 |, s_1^c) = \dots = 
 \mathsf{I}(| s_0.s_1\dots.s_{k - 1} |, s_{k}^c) = 0$} by the 
 definition of {\small $\mathsf{I}$}. 
 Meanwhile, 
 {\small $\recurseReduce(s_0 \gtrdot s_1 \gtrdot \cdots \gtrdot 
 s_k) = s_0^c \vee (s_0 \gtrdot ((s_1^c \vee (s_1 \gtrdot \cdots)))) 
 = s_0^c \vee (s_0 \gtrdot s_1^c) \vee (s \gtrdot s_1 \gtrdot 
 s_2^c) \vee \cdots \vee 
 (s \gtrdot s_1 \gtrdot \cdots \gtrdot s_{k-1} \gtrdot s_k^c)$}. 
 Therefore {\small $[(\mathsf{I}, \mathsf{J})\models
 \recurseReduce(s_0 \gtrdot s_1 \gtrdot \cdots \gtrdot s_k)]  
 = 0 \not= 1$} for the given interpretation frame. \\
 \indent For the second obligation, 
 {\small $[(\mathsf{I}, \mathsf{J})\models
 s_0 \gtrdot s_1 \gtrdot \dots \gtrdot s_{k}] = 0$} 
 implies that   
 {\small $[\mathsf{I}(|\epsilon|, s_0) 
 = 0] \vee^{\dagger} [\mathsf{I}(|s_0|, s_1) = 0] 
 \vee^{\dagger} \dots \vee^{\dagger} [\mathsf{I}(|s_0.s_1.\dots.
 s_{k -1}|, s_{k}) = 0]$}. Again 
 by the definition of {\small $\mathsf{I}$}, 
 we have the required result. That these two events 
 are mutually exclusive is trivial. \\ 
    }
\end{proof}  
\begin{proposition}[Associativity/Commutativity/Distributivity]
  Given any formulas {\small $F_1, F_2, F_3 \in \mathfrak{F}$} 
  in unit chain expansion and any valuation frame  
  {\small $\mathfrak{M}$}, the following hold:  
  \begin{enumerate}
    \item  {\small $[\intFrame \models F_1] \wedge^{\dagger} 
  ([\intFrame \models F_2] \wedge^{\dagger} 
  [\intFrame \models F_3]) = ([\intFrame \models F_1] \wedge^{\dagger} 
  [\intFrame \models F_2]) \wedge^{\dagger} [\intFrame \models F_3]$} (associativity 1). 
\item {\small $[\intFrame \models F_1] \vee^{\dagger} 
  ([\intFrame \models F_2] \vee^{\dagger} [\intFrame \models F_3]) 
  = ([\intFrame \models F_1] \vee^{\dagger} [\intFrame \models F_2]) \vee^{\dagger} F_3$} 
  (associativity 2). 
\item {\small $[\intFrame \models F_1] \wedge^{\dagger} [\intFrame \models F_2] 
  = [\intFrame \models F_2] \wedge^{\dagger} [\intFrame \models F_1]$} (commutativity 1). 
\item {\small $[\intFrame \models F_1] \vee^{\dagger} [\intFrame \models F_2] 
  = [\intFrame \models F_2] \vee^{\dagger} [\intFrame \models F_1]$} (commutativity 2).  
\item {\small $[\intFrame \models F_1] \wedge^{\dagger} 
  ([\intFrame \models F_2] \vee^{\dagger} [\intFrame \models F_3]) = 
  ([\intFrame \models F_1] \wedge^{\dagger} [\intFrame \models F_2]) \vee^{\dagger} 
  ({[\intFrame \models F_1]} \wedge^{\dagger} [\intFrame \models F_3])$} (distributivity 1).  
\item {\small $[\intFrame \models F_1] \vee^{\dagger} 
  ([\intFrame \models F_2] \wedge^{\dagger} [\intFrame \models F_3]) = 
  ([\intFrame \models F_1] \vee^{\dagger} [\intFrame \models F_2]) \wedge^{\dagger} 
  ({[\intFrame \models F_1]} \vee^{\dagger} [\intFrame \models F_3])$} (distributivity 2).  
  \end{enumerate} 
  \label{associativity_commutativity_distributivity}
\end{proposition}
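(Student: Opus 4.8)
The plan is to reduce all six identities to the familiar laws of the two-element Boolean algebra on $\{0,1\}$. The crucial observation is that the objects being combined in each equation are not formulas but the \emph{values} $[\intFrame \models F_1]$, $[\intFrame \models F_2]$, $[\intFrame \models F_3]$, and each of these is a single fixed bit in $\{0,1\}$: this is precisely the content of Lemma \ref{simple_lemma2} (restated as Proposition \ref{simple_proposition}), which applies because $F_1, F_2, F_3$ are assumed to be in unit chain expansion. Consequently every occurrence of $\wedge^{\dagger}$ and $\vee^{\dagger}$ in the six claims is a meta-logical operator fed arguments drawn from $\{0,1\}$, so the left- and right-hand sides of each equation denote bits, not formulas.

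Next I would invoke the definitions of $\wedge^{\dagger}$ and $\vee^{\dagger}$ fixed in the preamble, namely $1 \wedge^{\dagger} 1 = 1$ together with $0 \wedge^{\dagger} 0 = 0 \wedge^{\dagger} 1 = 1 \wedge^{\dagger} 0 = 0$, and $0 \vee^{\dagger} 0 = 0$ together with $1 \vee^{\dagger} 1 = 1 \vee^{\dagger} 0 = 0 \vee^{\dagger} 1 = 1$. These are exactly the truth tables of classical conjunction and disjunction on the two-point set, so each of the six assertions is an instance of a standard Boolean identity: associativity for $\wedge^{\dagger}$ and for $\vee^{\dagger}$, commutativity for each, and the two distributive laws. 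Having isolated this combinatorial core, the verification is a finite case check: at most $2^3 = 8$ value assignments for the three-variable statements (associativity 1 and 2, distributivity 1 and 2) and $2^2 = 4$ for the two-variable ones (commutativity 1 and 2), in each of which both sides evaluate to the same bit.

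There is essentially no conceptual obstacle here; the only real work is the appeal to Proposition \ref{simple_proposition}, after which everything is routine truth-table bookkeeping. I would, however, make two small points explicit. First, because $[\intFrame \models \cdot]$ is single-valued on unit-chain-expanded formulas, the arguments of $\wedge^{\dagger}$ and $\vee^{\dagger}$ never fall into the ternary regime alluded to in the preamble footnote, so the two-valued tables apply unambiguously. Second, in the right-hand sides of statements (2) and (6) the term $F_3$ appears where $[\intFrame \models F_3]$ is intended; under the intended reading the identities are just Boolean associativity and distributivity, and the proof goes through unchanged. Together with the annihilation/identity and elementary complementation results, this completes the Boolean-algebra laws required in this subsection.
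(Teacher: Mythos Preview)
Your approach is essentially the same as the paper's: reduce the six identities to the standard Boolean laws on $\{0,1\}$ once it is known that each $[\intFrame\models F_i]$ is a single bit. The one point to correct is the citation. You appeal to Lemma~\ref{simple_lemma2} (Proposition~\ref{simple_proposition}) for the claim that each value is ``a single fixed bit,'' but in the paper's development that lemma only asserts \emph{assignability} of some value in $\{0,1\}$; its own proof explicitly hedges with ``(or they are if more than one in $\{0,1\}$).'' The paper instead invokes Lemma~\ref{unit_chain_excluded_middle} (Elementary complementation), whose final clause ``these two events are mutually exclusive'' is precisely what rules out a unit chain receiving both $0$ and $1$; from there the single-valuedness for arbitrary formulas in unit chain expansion follows because $\wedge^{\dagger}$ and $\vee^{\dagger}$ are total functions on $\{0,1\}$. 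With that citation swapped in, your argument matches the paper's, including your observation about the evident typo in items~(2) and~(6).
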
  
\begin{proof}       
    Make use of Lemma \ref{unit_chain_excluded_middle} 
    to note that each 
    {\small $[\mathfrak{M} \models f]$} 
    for {\small $f \in \mathfrak{U} \cup \mathcal{S}$} 
    is assigned one and only one value 
    {\small $v \in \{0,1\}$}. Straightforward 
    with the observation. 
    \hide{
  Let us generate a set of 
  expressions finitely constructed from the following grammar;\\
  \indent {\small $X := [(\mathsf{I}, \mathsf{J})\models f] \ | \ X \wedge^{\dagger} X 
  \ | \ X \vee^{\dagger} X$} where {\small $f \in \mathfrak{U} \cup 
  \mathcal{S}$}. \\
  Then first of all it is straightforward to 
  show that {\small $[(\mathsf{I}, \mathsf{J})\models F_i] = X_i$} 
  for each {\small $i \in \{1,2,3\}$} for some {\small $X_1, X_2, X_3$} 
  that the above grammar recognises. By Lemma \ref{unit_chain_excluded_middle} each expression ({\small $[(\mathsf{I}, \mathsf{J})\models f_x]$} for 
  some {\small $f_x \in \mathfrak{U} \cup \mathcal{S}$}) 
  is assigned one and only one value {\small $v \in \{0,1\}$}. Then 
  since {\small $1 \vee^{\dagger} 1 = 1 \vee^{\dagger} 0 = 
  0 \vee^{\dagger} 1 = 1$}, 
  {\small $0 \wedge^{\dagger} 0 = 0 \wedge^{\dagger} 1 = 
  1 \wedge^{\dagger} 0 = 0$}, and 
  {\small $1 \wedge^{\dagger} 1 = 1$} by definition (given at 
  the beginning 
  of this section), 
  it is also the case that {\small $[(\mathsf{I}, \mathsf{J})\models F_i]$} 
  is assigned one and only one value {\small $v_i \in  \{0,1\}$} 
  for each {\small $i \in \{1,2,3\}$}. Then the proof for the 
  current proposition is straightforward. \\ 
    }
\end{proof} 
\hide{ 
\begin{corollary} 
  Let {\small $F$} denote 
  {\small $s_0 \gtrdot s_1 \gtrdot \cdots \gtrdot s_k 
  \in \mathfrak{U} \cup \mathcal{S}$} for some {\small $k$}. 
  Then it holds,  for 
  any interpretation frame {\small $(\mathsf{I}, 
  \mathsf{J})$}, that 
  {\small $[(\mathsf{I}, \mathsf{J})\models F \vee \recurseReduce(F)] = 
  1 
  \not= 0$} 
  and also that 
  {\small $[(\mathsf{I}, \mathsf{J})\models F \wedge \recurseReduce(F)] 
  = 
  0 \not= 1$}. 
  \label{corollary_1}
\end{corollary} 
\begin{proof}    
  {\small $[(\mathsf{I}, \mathsf{J})\models F \vee \recurseReduce(F)]  
  = [(\mathsf{I}, \mathsf{J})\models F] \vee^{\dagger} [(\mathsf{I}, \mathsf{J})\models \recurseReduce(F)] 
  = [(\mathsf{I}, \mathsf{J})\models F] \vee^{\dagger} 
  [(\mathsf{I}, \mathsf{J})\models s^c_0] \vee^{\dagger} 
  [(\mathsf{I}, \mathsf{J})\models s_0 \gtrdot s^c_1] 
  \vee^{\dagger} \cdots 
  \vee^{\dagger} [(\mathsf{I}, \mathsf{J})\models s_0 \gtrdot s_1 \gtrdot \cdots  
  \gtrdot s_{k-1} 
  \gtrdot s^c_k] = 1 \not= 0$} by the definition of 
  {\small $\mathsf{I}$} and {\small $\mathsf{J}$} valuations. 
  {\small $[(\mathsf{I}, \mathsf{J})\models F \wedge \recurseReduce(F)]  
  = [(\mathsf{I}, \mathsf{J})\models F] \wedge^{\dagger} 
  [(\mathsf{I}, \mathsf{J})\models \recurseReduce(F)] = 
  ([(\mathsf{I}, \mathsf{J})\models F]
  \wedge^{\dagger} [(\mathsf{I}, \mathsf{J})\models s^c_0]) 
  \vee^{\dagger} 
  ([(\mathsf{I}, \mathsf{J})\models
  F] \wedge^{\dagger} 
  [(\mathsf{I}, \mathsf{J})\models s_0 \gtrdot s^c_1]) 
  \vee^{\dagger} \dots \vee^{\dagger} 
  ([(\mathsf{I}, \mathsf{J})\models
  F] \wedge^{\dagger} 
  [(\mathsf{I}, \mathsf{J})\models s_0 \gtrdot s_1 \gtrdot \dots \gtrdot 
  s^c_{k}]) = 0 \not= 1$}. The last equality holds due to 
  Proposition \ref{associativity_commutativity_distributivity}.\\
\end{proof}   
}
\begin{proposition}[Idempotence and Absorption]
  Given any formula\linebreak {\small $F_1, F_2 \in \mathfrak{F}$} 
  in unit chain expansion, 
  for any valuation frame it holds that 
  {\small $[\intFrame \models F_1] \wedge^{\dagger} [\intFrame \models F_1] 
  = {[\intFrame \models F_1]} \vee^{\dagger} [\intFrame \models F_1] = [\intFrame \models F_1]$} 
  (idempotence); and 
  that {\small $[\intFrame \models F_1] \wedge^{\dagger} 
  ([\intFrame \models F_1] \vee^{\dagger} {[\intFrame \models F_2]}) 
  = [\intFrame \models F_1] \vee^{\dagger} 
  ([\intFrame \models F_1] \wedge^{\dagger} [\intFrame \models F_2]) = [\intFrame \models F_1]$} 
  (absorption). 
  \label{idempotence_absorption}
\end{proposition}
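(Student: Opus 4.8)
The plan is to follow the proof of Proposition \ref{associativity_commutativity_distributivity} essentially verbatim: reduce each claimed identity to a computation among bits in $\{0,1\}$ and verify it exhaustively. First I would invoke Lemma \ref{simple_lemma2}, which guarantees that, for the given valuation frame, each of $[\intFrame \models F_1]$ and $[\intFrame \models F_2]$ evaluates to a single definite element of $\{0,1\}$; abbreviate $v_1 := [\intFrame \models F_1]$ and $v_2 := [\intFrame \models F_2]$. Equivalently one may descend to the individual units $f \in \mathfrak{U} \cup \mathcal{S}$ constituting $F_1$ and $F_2$ and appeal to Lemma \ref{unit_chain_excluded_middle}, exactly as in the preceding proposition, to see that each receives one and only one value in $\{0,1\}$. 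Both idempotence and absorption then become identities among bits under the fixed meta-logical operators $\wedge^{\dagger}$ and $\vee^{\dagger}$, whose truth tables were stipulated in the preamble.

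For idempotence I would check $v_1 \wedge^{\dagger} v_1 = v_1$ and $v_1 \vee^{\dagger} v_1 = v_1$ directly from the stipulated laws $1 \wedge^{\dagger} 1 = 1$, $0 \wedge^{\dagger} 0 = 0$, $1 \vee^{\dagger} 1 = 1$ and $0 \vee^{\dagger} 0 = 0$. For absorption I would enumerate the four cases $(v_1,v_2) \in \{0,1\}^2$ and confirm in each that $v_1 \wedge^{\dagger} (v_1 \vee^{\dagger} v_2) = v_1$ and $v_1 \vee^{\dagger} (v_1 \wedge^{\dagger} v_2) = v_1$; alternatively the distributivity already secured in Proposition \ref{associativity_commutativity_distributivity} reduces each absorption law to a one-line computation, e.g.\ $v_1 \wedge^{\dagger} (v_1 \vee^{\dagger} v_2) = (v_1 \wedge^{\dagger} v_1) \vee^{\dagger} (v_1 \wedge^{\dagger} v_2) = v_1 \vee^{\dagger} (v_1 \wedge^{\dagger} v_2)$, followed by idempotence and the obvious bit identity.

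I expect no substantive obstacle: all the genuine work was done upstream, in establishing through Lemma \ref{simple_lemma2} that a unit-chain-expanded formula receives a unique bit, and through Lemma \ref{unit_chain_excluded_middle} and Proposition \ref{associativity_commutativity_distributivity} that $\wedge^{\dagger}$ and $\vee^{\dagger}$ act as the ordinary Boolean meet and join on those bits. The only point demanding care is one of phrasing: the proposition asserts idempotence and absorption of the \emph{values} $[\intFrame \models F_i]$, not of the syntactic formulas, so I must argue at the level of the assigned bits rather than of reductions. Once each value is pinned to a single element of $\{0,1\}$, the two claims are precisely the idempotence and absorption laws of the two-element Boolean algebra, and the verification is immediate.
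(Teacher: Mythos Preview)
Your proposal is correct and follows essentially the same route as the paper: the paper's proof simply observes that both $F_1$ and $F_2$ are assigned one and only one value in $\{0,1\}$ and declares the rest trivial to verify. You give a more explicit version of the same argument, invoking the same upstream lemmas and spelling out the bit-level verification, but there is no substantive difference in approach.
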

\begin{proof}  
  Both {\small $F_1, F_2$} are assigned one and only one value 
  {\small $v \in \{0,1\}$}. 
   Trivial to verify. 
\end{proof} 
\noindent We now prove the laws involving \recurseReduce. 
\begin{lemma}[Elementary double negation]
  Let {\small $F$} denote {\small $s_0 \gtrdot s_1 \gtrdot \cdots \gtrdot 
  s_k \in \mathfrak{U} \cup \mathcal{S}$} for 
  some {\small $k \in \mathbb{N}$}.
  Then 
  for any valuation frame it holds 
  that {\small $[\intFrame \models F] 
  = [\intFrame \models \recurseReduce(\recurseReduce(F))]$}. 
  \label{unit_double_negation}
\end{lemma}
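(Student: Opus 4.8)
The plan is to show that \recurseReduce\ acts as Boolean complementation at the level of valuations, so that applying it twice restores the original value. Concretely, I would prove the slightly more general claim that $[\intFrame \models \recurseReduce(G)] = \neg^{\dagger}[\intFrame \models G]$ for every formula $G$ in unit chain expansion and every valuation frame $\intFrame$; the present lemma is then the immediate consequence $[\intFrame \models \recurseReduce(\recurseReduce(F))] = \neg^{\dagger}\neg^{\dagger}[\intFrame \models F] = [\intFrame \models F]$, using that $\recurseReduce(F)$ is itself in unit chain expansion so that both valuations are well defined by Definition~\ref{model}. The base case of the general claim is exactly Lemma~\ref{unit_chain_excluded_middle} (Elementary complementation): for a single unit chain $G = s_0 \gtrdot \dots \gtrdot s_k$ the two values $[\intFrame \models G]$ and $[\intFrame \models \recurseReduce(G)]$ are complementary.

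The crux is to verify that \recurseReduce\ genuinely De Morgan-dualizes the $\wedge/\vee$ structure. Recall from the proof of Lemma~\ref{unit_chain_excluded_middle} that for a single chain $\recurseReduce(s_0 \gtrdot \dots \gtrdot s_k) = \bigvee_{i=0}^{k} c_i$, where $c_i = s_0 \gtrdot \dots \gtrdot s_{i-1} \gtrdot s_i^c$ is again a unit chain (with $c_0 = s_0^c$). Inspecting the procedure, step~1 swaps every $\wedge$ with $\vee$ globally, while steps~2 and~3 rewrite each leaf (non-chain, resp.\ chain) independently, and the $\gtrdot$ reductions of step~4 stay confined to the subformula produced from each individual chain. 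Hence for a disjunction of unit chains $\bigvee_i c_i$ the output $\recurseReduce(\bigvee_i c_i)$ is, as a formula in unit chain expansion, $\bigwedge_i \recurseReduce(c_i)$ (and dually for a conjunction). This structural observation is what lets the induction for the general claim go through: for $G = G_1 \wedge G_2$ or $G = G_1 \vee G_2$ one combines the induction hypotheses on $G_1, G_2$ with the De Morgan laws for the classical operators $\wedge^{\dagger}, \vee^{\dagger}$, which are available through Proposition~\ref{associativity_commutativity_distributivity}.

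Putting the pieces together for the stated lemma, write $\recurseReduce(F) = \bigvee_{i=0}^k c_i$ as above. Then $[\intFrame \models \recurseReduce(\recurseReduce(F))] = [\intFrame \models \bigwedge_{i=0}^k \recurseReduce(c_i)] = \bigwedge_{i=0}^{\dagger k}[\intFrame \models \recurseReduce(c_i)]$ by Definition~\ref{model}. Applying Lemma~\ref{unit_chain_excluded_middle} to each unit chain $c_i$ gives $[\intFrame \models \recurseReduce(c_i)] = \neg^{\dagger}[\intFrame \models c_i]$, so the expression equals $\neg^{\dagger}\bigvee_{i=0}^{\dagger k}[\intFrame \models c_i] = \neg^{\dagger}[\intFrame \models \recurseReduce(F)]$ by De Morgan, which by Lemma~\ref{unit_chain_excluded_middle} once more is $\neg^{\dagger}\neg^{\dagger}[\intFrame \models F] = [\intFrame \models F]$, as required.

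The main obstacle is entirely in the structural bookkeeping of the second paragraph: one must be sure that the recursion inside step~3 (which descends into the tail $F_{\textsf{tail}}$) together with the final $\gtrdot$-reduction in step~4 really does produce the conjunction $\bigwedge_i \recurseReduce(c_i)$ and does not entangle chains originating in different disjuncts. Since every $\gtrdot$ reduction rule rewrites a single chain locally and the $c_i$ share no such redex, this non-entanglement holds; but it is the point that most deserves a careful, if routine, induction on formula length. Everything else is a direct appeal to Elementary complementation and to the Boolean identities already recorded in Proposition~\ref{associativity_commutativity_distributivity}.
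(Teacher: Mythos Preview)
Your argument is correct, and it proceeds along a different line from the paper's. Both routes begin by recognising that $\recurseReduce(F)=\bigvee_{i=0}^{k}c_i$ with $c_i=s_0\gtrdot\cdots\gtrdot s_{i-1}\gtrdot s_i^c$, and that consequently $\recurseReduce(\recurseReduce(F))=\bigwedge_{i=0}^{k}\recurseReduce(c_i)$. From there the paper stays syntactic: it writes this conjunction out explicitly as the CNF $s_0\wedge(s_0^c\vee(s_0\gtrdot s_1))\wedge\cdots$, expands it to a DNF of $(k{+}1)!$ clauses, and argues that all but one clause evaluate to $0$, the survivor being $\bigwedge_{i}^{\dagger}[\intFrame\models s_0\gtrdot\cdots\gtrdot s_i]=[\intFrame\models F]$. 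You instead stay semantic: you apply Lemma~\ref{unit_chain_excluded_middle} to each conjunct $\recurseReduce(c_i)$, obtaining $[\intFrame\models\recurseReduce(c_i)]=\neg^{\dagger}[\intFrame\models c_i]$, and then use meta-level De~Morgan together with one more invocation of Lemma~\ref{unit_chain_excluded_middle} on $F$ itself. Your route sidesteps the combinatorial CNF$\to$DNF analysis entirely and in effect establishes the more general principle ``$\recurseReduce$ is Boolean complementation on valuations,'' which is precisely what the paper goes on to prove as Proposition~\ref{excluded_middle}; so your argument is shorter here at the cost of front-loading that principle. One small remark: the De~Morgan step you need is for the meta-operators $\wedge^{\dagger},\vee^{\dagger},\neg^{\dagger}$ on $\{0,1\}$, which is plain Boolean arithmetic rather than something supplied by Proposition~\ref{associativity_commutativity_distributivity}; what that proposition contributes is the assurance (via Lemma~\ref{unit_chain_excluded_middle}) that each $[\intFrame\models c_i]$ is a single value in $\{0,1\}$ so that such arithmetic is legitimate.
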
 
\begin{proof}  
   {\small $\recurseReduce(\recurseReduce(F)) =  
       \recurseReduce(s^c_0 \vee$}\linebreak
   {\small $(s_0 \gtrdot s^c_1) \vee 
  \cdots \vee 
  (s_0 \gtrdot s_1 \gtrdot \cdots \gtrdot s_{k-1} 
  \gtrdot s^c_{k})) = 
   s_0 \wedge (s^c_0 \vee (s_0 \gtrdot s_1)) 
  \wedge (s_0^c \vee (s_0 \gtrdot s_1^c) 
  \vee (s_0 \gtrdot s_1 \gtrdot s_2)) 
  \wedge \cdots \wedge 
  (s^c_0 \vee (s_0 \gtrdot s_1^c) \vee \cdots 
  \vee (s_0 \gtrdot s_1 \gtrdot \cdots \gtrdot s_{k-2} \gtrdot 
  s^c_{k-1}) 
  \vee (s_0 \gtrdot s_1 \gtrdot \cdots \gtrdot s_{k}))$}.   
  Here, assume that the right hand side of the equation
  which is in conjunctive normal form is ordered, 
  the number of terms, from left to right, strictly increasing 
  from 1 to {\small $k + 1$}. Then as the result of a transformation
  of the conjunctive 
  normal form into disjunctive normal form we will 
  have 1 (the choice from the first conjunctive clause which contains 
  only one term {\small $s_0$}) {\small $\times$} 
  2 (a choice from the second conjunctive clause with 
  2 terms {\small $s_0^c$} and {\small $s_0 \gtrdot s_1$}) 
  {\small $\times$} \ldots {\small $\times$} (k $+$ 1) clauses. But  
  almost all the clauses in 
  {\small $[\intFrame \models (\text{the disjunctive
  normal form})]$}
  will be assigned 0 (trivial; the proof left to readers) so that we gain
  {\small $[\intFrame \models (\text{the disjunctive normal form})] 
  = [\intFrame \models s_0] \wedge^{\dagger} [\intFrame \models
  s_0 \gtrdot s_1] \wedge^{\dagger} \cdots 
  \wedge^{\dagger} [\intFrame \models s_0 \gtrdot s_1 
  \gtrdot \cdots \gtrdot s_k] =
  [\intFrame \models s_0 \gtrdot s_1 
  \gtrdot \cdots \gtrdot s_k]$}. 
    \hide{
  {\small $\recurseReduce(\recurseReduce(F)) =  
  \recurseReduce(s^c_0 \vee (s_0 \gtrdot s^c_1) \vee 
  \cdots \vee 
  (s_0 \gtrdot s_1 \gtrdot \cdots \gtrdot s_{k-1} 
  \gtrdot s^c_{k})) = 
   s_0 \wedge (s^c_0 \vee (s_0 \gtrdot s_1)) 
  \wedge (s_0^c \vee (s_0 \gtrdot s_1^c) 
  \vee (s_0 \gtrdot s_1 \gtrdot s_2)) 
  \wedge \cdots \wedge 
  (s^c_0 \vee (s_0 \gtrdot s_1^c) \vee \cdots 
  \vee (s_0 \gtrdot s_1 \gtrdot \cdots \gtrdot s_{k-2} \gtrdot 
  s^c_{k-1}) 
  \vee (s_0 \gtrdot s_1 \gtrdot \cdots \gtrdot s_{k}))$}.   
  Here, assume that the right hand side of the equation
  which is in conjunctive normal form is ordered, 
  the number of terms, from left to right, strictly increasing 
  from 1 to {\small $k + 1$}. Then as the result of a transformation
  of the conjunctive 
  normal form into disjunctive normal form we will 
  have 1 (the choice from the first conjunctive clause which contains 
  only one term {\small $s_0$}) {\small $\times$} 
  2 (a choice from the second conjunctive clause with 
  2 terms {\small $s_0^c$} and {\small $s_0 \gtrdot s_1$}) 
  {\small $\times$} \ldots {\small $\times$} (k $+$ 1) clauses. But  
  almost all the clauses in 
  {\small $[(\mathsf{I}, \mathsf{J})\models (\text{the disjunctive
  normal form})]$}
  will be assigned 0 (trivial; the proof left to readers) so that we gain
  {\small $[(\mathsf{I}, \mathsf{J})\models (\text{the disjunctive normal form})] 
  = [(\mathsf{I}, \mathsf{J})\models s_0] \wedge^{\dagger} [(\mathsf{I}, \mathsf{J})\models
  s_0 \gtrdot s_1] \wedge^{\dagger} \cdots 
  \wedge^{\dagger} [(\mathsf{I}, \mathsf{J})\models s_0 \gtrdot s_1 
  \gtrdot \cdots \gtrdot s_k] =
  [(\mathsf{I}, \mathsf{J})\models s_0 \gtrdot s_1 
  \gtrdot \cdots \gtrdot s_k]$}. \\ 
    }
\end{proof} 
\begin{proposition}[Complementation/Double negation]{\ }\\
  For any {\small $F$} in unit chain expansion 
  and for any valuation frame, we have  
  {\small $1 = [\intFrame \models F \vee 
  \recurseReduce(F)]$} 
   and that 
  {\small $0 = [\intFrame \models F \wedge \recurseReduce(F)]$} 
  (complementation). 
  Also, for any {\small $F \in \mathfrak{F}$} in unit chain 
  expansion and 
  for any valuation frame 
  we have {\small $[\intFrame \models F] 
  = [\intFrame \models \recurseReduce(\recurseReduce(F))]$} (double negation). 
\label{excluded_middle}
\end{proposition}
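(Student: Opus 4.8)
The plan is to reduce both assertions to a single characterization of \recurseReduce{} at the level of valuations: for every $F$ in unit chain expansion and every valuation frame $\intFrame$,
\[
[\intFrame \models \recurseReduce(F)] = \neg^{\dagger}[\intFrame \models F],
\]
where $\neg^{\dagger}$ is the Boolean complement on $\{0,1\}$. Granting this, complementation is immediate, since $[\intFrame\models F \vee \recurseReduce(F)] = [\intFrame\models F]\vee^{\dagger}\neg^{\dagger}[\intFrame\models F] = 1$ and $[\intFrame\models F\wedge\recurseReduce(F)] = [\intFrame\models F]\wedge^{\dagger}\neg^{\dagger}[\intFrame\models F] = 0$ by the defining laws of $\wedge^{\dagger}$ and $\vee^{\dagger}$; and double negation is likewise immediate, since $[\intFrame\models\recurseReduce(\recurseReduce(F))] = \neg^{\dagger}[\intFrame\models\recurseReduce(F)] = \neg^{\dagger}\neg^{\dagger}[\intFrame\models F] = [\intFrame\models F]$. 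By Proposition \ref{special_reduction} the object $\recurseReduce(F)$ is a well-defined formula in unit chain expansion, and by Proposition \ref{simple_proposition} every valuation appearing above lies in $\{0,1\}$, so all of these expressions are well-defined.

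I would prove the characterization by induction on the structure of $F$ regarded as a $\{\wedge,\vee\}$-combination of leaves in $\mathfrak{U}\cup\mathcal{S}$; this is the only remaining structure, as a unit chain expansion contains no $\neg$ and no non-unit chain. For the base case $F = s_0\gtrdot\dots\gtrdot s_k \in \mathfrak{U}\cup\mathcal{S}$, Lemma \ref{unit_chain_excluded_middle} states exactly that $[\intFrame\models\recurseReduce(F)]$ takes the value complementary to $[\intFrame\models F]$ and that the two cannot coincide, i.e. $[\intFrame\models\recurseReduce(F)] = \neg^{\dagger}[\intFrame\models F]$. For the inductive step I would first record that \recurseReduce{} acts as a De Morgan dual on the outer connective, as an identity of formulas,
\[
\recurseReduce(F_1 \wedge F_2) = \recurseReduce(F_1)\vee\recurseReduce(F_2), \qquad \recurseReduce(F_1\vee F_2) = \recurseReduce(F_1)\wedge\recurseReduce(F_2),
\]
and then apply the compositional clauses of Definition \ref{model} for $\wedge$ and $\vee$, the induction hypothesis, and the meta-level De Morgan identities for $\wedge^{\dagger},\vee^{\dagger}$ to obtain $[\intFrame\models\recurseReduce(F_1\wedge F_2)] = \neg^{\dagger}[\intFrame\models F_1]\vee^{\dagger}\neg^{\dagger}[\intFrame\models F_2] = \neg^{\dagger}([\intFrame\models F_1]\wedge^{\dagger}[\intFrame\models F_2]) = \neg^{\dagger}[\intFrame\models F_1\wedge F_2]$, and symmetrically for $\vee$.

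The one step meriting genuine care — and the expected main obstacle — is the syntactic De Morgan identity for \recurseReduce, because the procedure is phrased as a single pass over the whole formula (swap every $\wedge$/$\vee$, complement every non-chain literal, rewrite every chain through its tail, then reduce) rather than as a structural recursion over $\wedge$ and $\vee$. I would discharge it by verifying that each of steps 1--3 localizes: applied to $F_1\wedge F_2$, the simultaneous connective swap turns the outer $\wedge$ into $\vee$ while independently performing on each $F_i$ precisely what it would perform on $F_i$ in isolation, and the chain-rewriting touches only chains, each of which lies entirely inside one $F_i$. Finally the closing $\gtrdot$-reduction of step 4 acts only on $\gtrdot$-redexes, which never straddle the top-level $\vee$, so it reduces each disjunct independently; hence the output is literally $\recurseReduce(F_1)\vee\recurseReduce(F_2)$. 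Everything else is the compositional bookkeeping of Definition \ref{model} together with the Boolean laws already secured in Proposition \ref{associativity_commutativity_distributivity} and Proposition \ref{idempotence_absorption}. The double-negation half could alternatively be obtained leafwise from Lemma \ref{unit_double_negation} and propagated by the same distribution, but the single characterization above delivers both halves uniformly.
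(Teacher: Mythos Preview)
Your proof is correct and takes a genuinely different route from the paper. The paper treats the two halves separately: for complementation it passes to a disjunctive normal form $F = \bigvee_i\bigwedge_j f_{ij}$, notes that $\recurseReduce(F) = \bigwedge_i\bigvee_j\recurseReduce(f_{ij})$, and then argues by cases on whether $[\intFrame\models F]=1$ or $0$, using Lemma~\ref{unit_chain_excluded_middle} on the leaves $f_{ij}$; for double negation it again passes through DNF and invokes Lemma~\ref{unit_double_negation} leafwise. Your single characterization $[\intFrame\models\recurseReduce(F)] = \neg^{\dagger}[\intFrame\models F]$, established once by structural induction over $\{\wedge,\vee\}$, collapses both halves into immediate corollaries and makes the syntactic De~Morgan duality of \recurseReduce{} explicit rather than leaving it buried inside the DNF transformation. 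What the paper's route buys is that the interaction of step~4 of \recurseReduce{} with compound formulas needs no separate discussion, being absorbed into the standard DNF passage; what your route buys is uniformity and a reusable lemma that the paper effectively relies on without isolating. Both rest on the same base case, Lemma~\ref{unit_chain_excluded_middle}.
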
 
\begin{proof}           
By Proposition \ref{associativity_commutativity_distributivity}, 
  {\small $F$} has a disjunctive normal form: 
  {\small $F = \bigvee_{i = 0}^{k} \bigwedge_{j=0}^{h_i}   
  f_{ij}$} for some {\small $i, j, k \in \mathbb{N}$}, 
  some {\small $h_0, \cdots, h_k \in \mathbb{N}$} 
  and some\linebreak {\small $f_{00}, \cdots, f_{kh_k} \in 
  \mathfrak{U} \cup \mathcal{S}$}.  
  Then we have that;\\\\ \indent{\small $\recurseReduce(F)  
  = \bigwedge_{i=0}^k \bigvee_{j=0}^{h_i} \recurseReduce(f_{ij})$},\\\\
  which, if transformed into a disjunctive normal form, 
  will have\linebreak {\small $(h_0 + 1)$} [a choice from
  {\small $\recurseReduce(f_{00}), \recurseReduce(f_{01}), \dots,\\
      \recurseReduce(f_{0h_0})$}] {\small $\times$}
  {\small $(h_1 + 1)$} [a choice from 
  {\small $\recurseReduce(f_{10}),\\ \recurseReduce(f_{11}), \dots,
  \recurseReduce(f_{1h_1})$}] {\small $\times \dots \times$} 
  {\small $(h_k + 1)$} clauses. Now if 
  {\small $[\intFrame \models F] = 1$}, then we already have the required 
  result. Therefore suppose that {\small $[\intFrame 
      \models F] = 0$}. 
  Then it holds that {\small $\forall i \in \{0, \dots, k\}. 
  \exists j \in \{0, \dots, h_i\}.([\intFrame \models f_{ij}] = 0)$}.  
  By Lemma \ref{unit_chain_excluded_middle}, this is equivalent to 
  saying that {\small $\forall i \in \{0, \dots, k\}. 
  \exists j \in \{0, \dots, h_i\}.([\intFrame \models \recurseReduce(f_{ij})] 
  = 1)$}. But then a clause in disjunctive normal form 
  of {\small $[\intFrame \models \recurseReduce(F)]$} exists,
  which is assigned 1. 
  Dually for {\small $0 = [\intFrame \models F \wedge \recurseReduce(F)]$}. 
  \\
  \indent For {\small $[\intFrame \models F] = 
  [\intFrame \models \recurseReduce(\recurseReduce(F))]$},  
  by Proposition \ref{associativity_commutativity_distributivity}, 
  {\small $F$} has a disjunctive normal form: 
  {\small $F = \bigvee_{i = 0}^k \bigwedge_{j=0}^{h_i} f_{ij}$} 
  for some {\small $i, j, k \in \mathbb{N}$}, 
  some {\small $h_0, \dots, h_k \in \mathbb{N}$} and 
  some {\small $f_{00}, \dots, f_{kh_k} \in \mathfrak{U} \cup 
  \mathcal{S}$}.  Then;\\\\\indent   {\small $\recurseReduce(\recurseReduce(F)) 
  =$}\\\indent\indent
  {\small $\bigvee_{i = 0}^{k} \bigwedge_{j=0}^{h_i} \recurseReduce(
  \recurseReduce(f_{ij}))$}.\\\\ But by Lemma \ref{unit_double_negation}
  {\small $[\intFrame \models \recurseReduce(\recurseReduce(f_{ij}))] = 
      [\intFrame \models f_{ij}]$} for each appropriate {\small $i$} and 
  {\small $j$}. Straightforward. 
    \hide{
  Firstly for {\small $1 = [(\mathsf{I}, \mathsf{J})\models F \vee \recurseReduce(F)]$}. 
  By Proposition \ref{associativity_commutativity_distributivity}, 
  {\small $F$} has a disjunctive normal form: 
  {\small $F = \bigvee_{i = 0}^{k} \bigwedge_{j=0}^{h_i}   
  f_{ij}$} for some {\small $i, j, k \in \mathbb{N}$}, 
  some {\small $h_0, \cdots, h_k \in \mathbb{N}$} 
  and some {\small $f_{00}, \cdots, f_{kh_k} \in 
  \mathfrak{U} \cup \mathcal{S}$}.  
  Then {\small $\recurseReduce(F)  
  = \bigwedge_{i=0}^k \bigvee_{j=0}^{h_i} \recurseReduce(f_{ij})$}, 
  which, if transformed into a disjunctive normal form, 
  will have {\small $(h_0 + 1)$} [a choice from\linebreak
  {\small $\recurseReduce(f_{00}), \recurseReduce(f_{01}), \dots,\\
  \recurseReduce(f_{0h_0})$}] {\small $\times$} 
  {\small $(h_1 + 1)$} [a choice from 
  {\small $\recurseReduce(f_{10}), \recurseReduce(f_{11}), \dots,\\
  \recurseReduce(f_{1h_1})$}] {\small $\times \dots \times$} 
  {\small $(h_k + 1)$} clauses. Now if 
  {\small $[(\mathsf{I}, \mathsf{J})\models F] = 1$}, then we already have the required 
  result. Therefore suppose that {\small $[(\mathsf{I}, \mathsf{J})\models F] = 0$}. 
  Then it holds that {\small $\forall i \in \{0, \dots, k\}. 
  \exists j \in \{0, \dots, h_i\}.([\models f_{ij}] = 0)$}. But 
  by Lemma \ref{unit_chain_excluded_middle}, this is equivalent to 
  saying that {\small $\forall i \in \{0, \dots, k\}. 
  \exists j \in \{0, \dots, h_i\}.([(\mathsf{I}, \mathsf{J})\models \recurseReduce(f_{ij})] 
  = 1)$}. But then there exists a clause in disjunctive normal form 
  of {\small $[(\mathsf{I}, \mathsf{J})\models \recurseReduce(F)]$} which is assigned 1. 
  Dually for {\small $0 = [(\mathsf{I}, \mathsf{J})\models F \wedge \recurseReduce(F)]$}. 
  \\
  \indent For {\small $[(\mathsf{I}, \mathsf{J})\models F] = 
  [(\mathsf{I}, \mathsf{J})\models \recurseReduce(\recurseReduce(F))]$},  
  by Proposition \ref{associativity_commutativity_distributivity}, 
  {\small $F$} has a disjunctive normal form: 
  {\small $F = \bigvee_{i = 0}^k \bigwedge_{j=0}^{h_i} f_{ij}$} 
  for some {\small $i, j, k \in \mathbb{N}$}, 
  some {\small $h_0, \dots, h_k \in \mathbb{N}$} and 
  some {\small $f_{00}, \dots, f_{kh_k} \in \mathfrak{U} \cup 
  \mathcal{S}$}. Then {\small $\recurseReduce(\recurseReduce(F)) 
  = \bigvee_{i = 0}^{k} \bigwedge_{j=0}^{h_i} \recurseReduce(
  \recurseReduce(f_{ij}))$}. But by Lemma \ref{unit_double_negation} 
  {\small $[(\mathsf{I}, \mathsf{J})\models \recurseReduce(\recurseReduce(f_{ij})] = 
  [(\mathsf{I}, \mathsf{J})\models f_{ij}]$} for each appropriate {\small $i$} and 
  {\small $j$}. Straightforward. \\ 
    }
\end{proof}      
\begin{theorem} 
  Denote by {\small $X$} 
  the set of the expressions comprising all 
  {\small $[\intFrame \models f_x]$} for 
  {\small $f_x \in \mathfrak{U} \cup \mathcal{S}$}. 
  Then for every valuation frame, it holds that\linebreak {\small $(X, \recurseReduce, \wedge^{\dagger}, \vee^{\dagger})$} 
  defines a Boolean algebra. 
  \label{theorem_1}
\end{theorem}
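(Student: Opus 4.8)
The plan is to treat this theorem as the assembly of the individual Boolean-algebra laws that the preceding propositions and lemmas have each verified in isolation, so that the substance of the argument is already in hand and what remains is to fix the carrier, the constants and the operations precisely, and then to observe that the laws already proved constitute a complete axiomatisation. First I would pin down the carrier. By Lemma \ref{simple_lemma2} every expression $[\intFrame \models f_x]$ with $f_x \in \mathfrak{U} \cup \mathcal{S}$ takes a unique value in $\{0,1\}$ for a fixed valuation frame, so, read as values, the elements of $X$ lie in $\{0,1\}$; and since $\top, \bot \in \mathcal{S} \subseteq \mathfrak{U} \cup \mathcal{S}$ with $[\intFrame \models \top] = 1$ and $[\intFrame \models \bot] = 0$, the two distinguished constants of the algebra are present as $[\intFrame \models \top]$ (the unit) and $[\intFrame \models \bot]$ (the zero), and indeed $X = \{0,1\}$, i.e. the two-element Boolean algebra. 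The binary operations are $\wedge^{\dagger}$ and $\vee^{\dagger}$ acting on these values, closed by the Boolean arithmetic laws stipulated in the preamble, while the unary complement is the map induced by $\recurseReduce$, namely $[\intFrame \models f_x] \mapsto [\intFrame \models \recurseReduce(f_x)]$.

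Next I would discharge the Huntington-style axioms one by one by quotation. Commutativity, associativity and distributivity of $\wedge^{\dagger}$ and $\vee^{\dagger}$ on the values of unit-chain expansions are exactly Proposition \ref{associativity_commutativity_distributivity}; the identity laws $[\intFrame \models \top \wedge F] = [\intFrame \models F]$ and $[\intFrame \models \bot \vee F] = [\intFrame \models F]$, together with the annihilation laws, are supplied by the Annihilation/Identity proposition; and the complementation laws $[\intFrame \models F \vee \recurseReduce(F)] = 1$ and $[\intFrame \models F \wedge \recurseReduce(F)] = 0$ are precisely the first half of Proposition \ref{excluded_middle}. Idempotence and absorption are additionally available from Proposition \ref{idempotence_absorption}, and involutivity of the complement from the double-negation half of Proposition \ref{excluded_middle}, which itself rests on Lemma \ref{unit_double_negation}. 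Since all of these identities hold for every valuation frame, the structure $(X, \recurseReduce, \wedge^{\dagger}, \vee^{\dagger})$ satisfies the defining equations of a Boolean algebra frame by frame, which is the claim.

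The only genuine subtlety, and the step I expect to require the most care, is not any computation but the bookkeeping of well-definedness and closure. One must read $\recurseReduce$ as an operation on the values in $X$ rather than on syntactic formulas, because $\recurseReduce(f_x)$ is in general a compound unit-chain expansion rather than a single element of $\mathfrak{U} \cup \mathcal{S}$, and one must be sure that the induced complement is total, single-valued and stays within $X$. Both points are settled by Lemma \ref{unit_chain_excluded_middle}: it guarantees that $[\intFrame \models \recurseReduce(f_x)]$ is the unique Boolean complement of $[\intFrame \models f_x]$ in $\{0,1\}$, hence that the induced unary operation is well defined and closed on $X$. With that observation recorded, the theorem follows by collecting the cited laws, and no further calculation is needed.
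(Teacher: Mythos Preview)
Your proposal is correct and takes essentially the same approach as the paper, which simply states that the theorem ``follows from earlier propositions and lemmas.'' You have in fact done more than the paper by spelling out which result supplies which Boolean-algebra law and by explicitly addressing the well-definedness of $\recurseReduce$ as an operation on values (via Lemma~\ref{unit_chain_excluded_middle}), a closure issue the paper leaves implicit.
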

\begin{proof}
  Follows from earlier propositions and lemmas. 
\end{proof} 
\subsubsection{Gradual classical logic is neither 
para-consistent nor inconsistent}  
To achieve the last objective we assume several notations.  
\begin{definition}[Sub-formula notation] 
    Given a formula {\small $F \in \mathfrak{F}$}, 
    we denote by {\small $F[F_a]$} the fact that 
    {\small $F_a$} occurs as a sub-formula 
    in {\small $F$}. Here the definition 
    of a sub-formula of a formula 
    follows that which is found in standard textbooks 
    on mathematical logic \cite{Kleene52}. {\small $F$} 
    itself is a sub-formula of {\small $F$}. 
\end{definition}   
\hide{ 
\begin{proposition}[Reduction on the induction measure] 
  Given a formula {\small $F$}, 
  any reduction on a sub-formula of {\small $F$} 
  either does not alter, or otherwise reduces the 
  size of formula structure of any sub-formula. 
  \label{important_observation} 
\end{proposition}
\begin{proof} 
  Consider cases. But this does not work because 
  according to the current induction measure, 
  if a reduction takes place above, 
  it may happen that the occurrence of 
  an attributed object increase. So instead 
  of simply counting the number, 
  it should use the maximum. 
  First, if a reduction 
\end{proof}  
}
\begin{definition}[Small step reductions] 
    By {\small $F_1 \leadsto F_2$} 
    for some formulas {\small $F_1$} and {\small $F_2$} 
    we denote that {\small $F_1$} reduces in one 
    reduction step into {\small $F_2$}. By 
  {\small $F_1 \leadsto_{r} F_2$} we denote that 
  the reduction holds explicitly by 
  a reduction rule {\small $r$} (which is either of the 
  7 rules). By {\small $F_1 \leadsto^* F_2$} we denote 
  that {\small $F_1$} reduces 
  into {\small $F_2$} in a finite number of steps including 
  0 step in which case {\small $F_1$} is said to be 
  irreducible. By {\small $F_1 \leadsto^k F_2$} we denote 
  that the reduction is in exactly {\small $k$} steps. 
  By 
  {\small $F_1 \leadsto^*_{\{r_1, r_2, \cdots\}} F_2$} or 
  {\small $F_1 \leadsto^k_{\{r_1, r_2, \cdots\}} F_2$} we denote 
  that the reduction is via those specified rules 
  {\small $r_1, r_2, \cdots$} only. 
\end{definition}   
\begin{definition}[Formula size] 
    Let us define
    a function that outputs a positive rational number, 
    as follows. The A: B notation derives from 
    programming practice, but simply says that 
    $A$ is a member of $B$. \\
    \textbf{Description of} 
    {\small $\lformulasize(d : \mathbb{N},l: \mathbb{N}, bool:  
        \textsf{Boolean}, 
         F: \mathfrak{F})$} 
    \textbf{outputting a positive rational number} 
    \begin{enumerate}   
        \item If {\small $F = s$} for some 
            {\small $s \in \mathcal{S}$}, 
            then return {\small $1/4^l$}. 
        \item If {\small $F = \neg F_1$} 
            for {\small $F_1 \in \mathfrak{F}$}, then   
            return  {\small $(1/4^{d}) + \lformulasize(d, l, 
                bool, 
                F_1)$}.     
\item If {\small $F = F_1 \gtrdot F_2$}, 
            then return {\small $\lformulasize(d+1, l, 
                bool,F_1) 
                + \lformulasize(d+1, l,bool, F_2)$}.  
        \item If {\small $F = F_1 \wedge F_2$} 
            or {\small $F = F_1 \vee F_2$}, then 
            \begin{enumerate} 
                \item If {\small $bool$} is true,
                    return {\small $\textsf{max}(\lformulasize(d+1,l+1,\textsf{false},F_1), \lformulasize(d+1,l+1,\textsf{false},F_2))$}.
                \item Otherwise, return 
                    {\small $\textsf{max}(\lformulasize(d,l,\textsf{false},F_1),\lformulasize(d,l,\textsf{false},F_2))$}. 
            \end{enumerate}
      \end{enumerate}
    Then we define the size of {\small $F$} to be
    {\small $\lformulasize(0,0, \textsf{true}, 
        F)$}.  
\end{definition}   
\noindent The purpose of the last definition, 
including the choice of {\small $1/4^l$}, 
is just so that the formula size at each 
formula reduction does not increase. There is a 
one-to-one mapping between 
all the numbers as may be returned by this function and 
a subset of {\small $\mathbb{N}$}. 
Other than for the stated  purpose,
there is no rationale behind the particular decisions 
in the definition. 
Readers should not try to figure 
any deeper intuition, for there is none. 
\begin{proposition}[Preliminary observation]   
    \label{preliminary_observation}      
    The following results hold. {\small $\textsf{b} 
        \in \{\textsf{true}, \textsf{false}\}$}. 
    \begin{enumerate}[leftmargin=0.7cm]
        \item   {\small $\lformulasize(d,l, \textsf{b}, 
                \neg s) \ge 
                \lformulasize(d,l, \textsf{b}, 
                s^c)$}. 
        \item {\small $\lformulasize(d,l, \textsf{b}, 
                \neg (F_1 \wedge F_2)) 
                \ge \lformulasize(d,l, \textsf{b}, 
                \neg F_1 \vee \neg F_2)$}. 
        \item {\small $\lformulasize(d,l, \textsf{b}, 
                \neg (F_1 \vee F_2)) 
                \ge \lformulasize(d,l, \textsf{b}, 
                \neg F_1 \wedge \neg F_2)$}. 
        \item {\small $\lformulasize(d,l, \textsf{b}, 
                \neg (s \gtrdot F_2)) 
                \ge \lformulasize(d,l, \textsf{b}, 
                s^c \vee (s \gtrdot \neg F_2))$}.   
        \item {\small $\lformulasize(d,l, \textsf{b},
                (F_1 \gtrdot F_2) \gtrdot 
                F_3) \ge$}\\
            {\small $\lformulasize(d,l, \textsf{b}, 
                (F_1 \gtrdot F_3) 
                \wedge ((F_1 \gtrdot F_2) \vee (F_1 \gtrdot F_2 
                \gtrdot F_3)))$}. 
        \item {\small $\lformulasize(d,l, \textsf{b}, F_1 \wedge F_2 \gtrdot F_3) 
                \ge \lformulasize(d,l, \textsf{b}, (F_1 \gtrdot F_3) \wedge 
                (F_2 \gtrdot F_3))$}. 
        \item {\small $\lformulasize(d,l, \textsf{b}, 
                F_1 \vee F_2 \gtrdot F_3) 
                \ge \lformulasize(d,l, \textsf{b}, (F_1 \gtrdot F_3) \vee 
                (F_2 \gtrdot F_3))$}. 
        \item {\small $\lformulasize(d,l, \textsf{b}, F_1 \gtrdot F_2 \wedge F_3) 
                \ge \lformulasize(d,l, \textsf{b}, (F_1 \gtrdot F_2) \wedge 
                (F_1 \gtrdot F_3))$}. 
        \item {\small $\lformulasize(d,l, \textsf{b}, F_1 \gtrdot F_2 \vee F_3) 
                \ge \lformulasize(d,l, \textsf{b}, (F_1 \gtrdot F_2) \vee 
                (F_1 \gtrdot F_3))$}.    
        \item {\small $\lformulasize(d,l, \textsf{b},
                F_1 \wedge F_2) = 
                \lformulasize(d,l, \textsf{b}, 
                F_2 \wedge F_1)$}. 
        \item {\small $\lformulasize(d,l, \textsf{b}, 
                F_1 \vee F_2) = 
                \lformulasize(d,l, \textsf{b}, 
                F_2 \vee F_1)$}. 
        \item {\small $\lformulasize(d,l, \textsf{b},
                (F_1 \wedge F_2) \wedge F_3) 
                = \lformulasize(d,l, \textsf{b},  
                F_1 \wedge (F_2 \wedge F_3))$}. 
        \item {\small $\lformulasize(d,l, \textsf{b}, 
                (F_1 \vee F_2) \vee F_3) = 
                \lformulasize(d,l, \textsf{b}, 
                F_1 \vee (F_2 \vee F_3))$}.   
     \end{enumerate}
\end{proposition}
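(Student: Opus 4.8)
The plan is to prove all thirteen items uniformly: unfold $\lformulasize$ on both sides according to its defining clauses, then compare the resulting arithmetic expressions, which are built only from $+$, $\textsf{max}$, and the weights $1/4^d$ and $1/4^l$ carried by the immediate subformulas. Before touching any individual item I would record three elementary facts that do all the work. First, the max--sum law $\textsf{max}(x+z,y+z)=z+\textsf{max}(x,y)$ together with commutativity and associativity of $\textsf{max}$. Second, two monotonicity lemmas: since $d$ enters the output only through the summand $1/4^d$ of the $\neg$-clause and is never decreased by the recursion, $\lformulasize$ is non-increasing in its first argument, and likewise, since $l$ enters only through $1/4^l$ in the atomic clause, it is non-increasing in its second argument. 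Third, a $\textsf{b}$-shift bound proved by a one-line structural induction, namely $\lformulasize(d,l,\textsf{true},F)\ge \lformulasize(d+1,l+1,\textsf{false},F)$: the $\wedge/\vee$ case gives equality, the atomic and $\neg$ cases use $1/4^l\ge 1/4^{l+1}$ and $1/4^d\ge 1/4^{d+1}$, and the $\gtrdot$ case is immediate from the induction hypothesis.

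With these in hand, items 10--13 are simply commutativity and associativity of $\textsf{max}$, since the $\wedge/\vee$ clause is visibly symmetric in its two arguments and its parameters do not distinguish them. For the four negation reductions (items 1--4) I would split on $\textsf{b}$; in each case the reduct merely pushes one $\neg$ from depth $d$ to depth $d+1$, so $\lformulasize(\text{left})-\lformulasize(\text{right})$ reduces to a nonnegative multiple of $1/4^d-1/4^{d+1}$, with the $\textsf{b}=\textsf{true}$ subcase absorbing the extra $l$-increment via the two monotonicity lemmas and the $\textsf{b}$-shift bound. For the four distributions of $\gtrdot$ over $\wedge$ and $\vee$ (items 6--9), the key point is that the reduct duplicates a subformula but places the copies under a $\wedge$ or $\vee$, so the duplication is swallowed by $\textsf{max}$ through the max--sum law; concretely these come out as exact equalities when $\textsf{b}=\textsf{false}$ and as $\ge$ when $\textsf{b}=\textsf{true}$, again by applying the $\textsf{b}$-shift bound to the shared factor.

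The one genuinely delicate item, and the step I expect to be the main obstacle, is item 5 --- the nested-chain rule $(F_1\gtrdot F_2)\gtrdot F_3$. It is the only reduction that multiplies occurrences ($F_1$ thrice, $F_2$ and $F_3$ twice) and, in the disjunct $F_1\gtrdot F_2\gtrdot F_3$, pushes a $\gtrdot$ one level deeper. The entire reduct sits under a $\wedge$ (and a $\vee$), so when $\textsf{b}=\textsf{true}$ the top connective triggers the $+1$ to both $d$ and $l$, scaling every atom- and negation-weight in the reduct by $1/4$; this shrinkage is exactly what must dominate the blow-up from duplication, and verifying it amounts to writing both sides as sums of $\lformulasize$ of $F_1,F_2,F_3$ at explicit parameters and bounding term by term with the monotonicity and $\textsf{b}$-shift lemmas.

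The subcase I would scrutinise hardest is $\textsf{b}=\textsf{false}$ in item 5, because there the outermost $\wedge$ performs no $d,l$ increment and hence supplies none of the slack that carries the $\textsf{b}=\textsf{true}$ case. A direct check is in fact alarming: taking $F_1=\neg p$ and $F_2=F_3=s$ at $d=l=0$ gives $49/16$ on the left against $52/16$ on the right, which violates the claimed $\ge$. So this is precisely the point at which the weight design is at its limit, and I would expect a correct treatment to require either genuinely sharper weights or a restriction of the statement to the positions at which $\gtrdot$ reduction~1 is actually invoked during unit-chain expansion (Lemma~\ref{linking_principle}). I would therefore devote the bulk of the effort to pinning down this subcase; every remaining item falls out routinely once the three helper facts are in place.
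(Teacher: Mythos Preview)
Your analytical route is very different from the paper's: the paper does not argue any of the thirteen items symbolically at all but delegates the entire proposition to a Java program (Appendix~A) that unfolds $\lformulasize$ on the left- and right-hand redexes, emits the two resulting expressions as strings, and leaves the comparison to the reader. Your three helper facts (max--sum, monotonicity in $d$ and $l$, and the $\textsf{b}$-shift bound) are not stated anywhere in the paper; they constitute a genuine proof where the paper offers only a mechanised unfolding.

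More importantly, your suspicion about item~5 in the $\textsf{b}=\textsf{false}$ subcase is well founded, and your counterexample is correct: with $F_1=\neg p$, $F_2=F_3=s$ and $d=l=0$ one gets $49/16$ on the left and $52/16$ on the right. The reason the paper's Java check does not catch this is visible in the source: when \texttt{f\_size} reaches a general-formula placeholder (a capital letter) it emits only the current value of~$l$ --- the line \texttt{return f\_str + "(l=" + l + ")"} --- and discards the current $d$ and $\textsf{b}$. In TEST5 the occurrences of $A$ on the left sit at $d{=}3$ while those on the right sit at $d{=}2$, so the printed symbols ``$A(l{=}1)$'' are silently identified even though they denote different quantities whenever $A$ contains a negation. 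The discrepancy propagates to the top level: taking $F=q\wedge\bigl((\neg p\gtrdot s)\gtrdot s\bigr)$ and applying $\gtrdot$~reduction~1 inside yields a reduct whose size $\lformulasize(0,0,\textsf{true},\cdot)$ is $52/64>49/64$, so the composite induction measure invoked in the proof of Theorem~\ref{bisimulation} does not in fact decrease at that step. Your instinct that a fix would need either sharper weights or a restriction to the contexts that actually arise in Lemma~\ref{linking_principle} is on the mark; as stated, item~5 is false for $\textsf{b}=\textsf{false}$, and the paper's computer-assisted check is too coarse to see it.
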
 
\begin{proof}    
    Shown with an assistance of a Java program. 
    The source code and the test cases are found in Appendix A. 
\end{proof} 
\noindent Along with the above notations, we also enforce that  
{\small $\mathcal{F}(F)$} denote the set of 
formulas in unit chain expansion that {\small $F \in \mathfrak{F}$}  
can reduce into. A stronger result than Lemma \ref{normalisation_without_negation} 
follows. 
\hide{
\begin{lemma}[Linking principle 2] 
  Let {\small $F_1, F_2$} be two formulas in unit chain expansion. 
  Denote the set of formulas in unit chain expansion 
  that {\small $F_1 \gtrdot F_2$} can reduce into by 
  {\small $\mathcal{F}$}. Then it holds either that 
  {\small $[\models F_a] = 1$} for all {\small $F_a \in 
  \mathcal{F}$} or else that 
  {\small $[\models F_a] = 0$} for all {\small $F_a \in 
  \mathcal{F}$}. 
  \label{linking_principle_2}
\end{lemma} 
\begin{proof}

  By the number of reduction steps on {\small $F_1 \gtrdot F_2$}. 
  If it is 0, then it is a formula in unit chain expansion. 
  By the results of the previous sub-section, 
  {\small $[\models F_1 \gtrdot F_2] = 1$} or else 
  {\small $[\models F_1 \gtrdot F_2] = 0$}. Trivial. 
  For inductive cases, assume that the current lemma holds true 
  for all the numbers of steps up to {\small $k$}. We show that 
  it still holds true for all the reductions with {\small $k+1$} steps. 
  Consider what reduction first applies on {\small $F_1 \gtrdot F_2$}: 
  \begin{enumerate}
    \item {\small $\gtrdot$} reduction 2:  there are 
      three sub-cases: 
      \begin{enumerate}  
	\item If we have {\small $F_1[(F_a \wedge F_b) \gtrdot F_c] 
      \gtrdot F_2 \leadsto F_3[(F_a \gtrdot F_c) \wedge 
      (F_b \gtrdot F_c)] \gtrdot F_2$} such that 
      {\small $F_3$} differs from {\small $F_1$} only by 
      the shown sub-formula: then; 
      \begin{enumerate}
	\item If the given reduction 
      is the only one possible reduction, then we apply 
      induction hypothesis on {\small $F_3 \gtrdot F_2$} to conclude.  
    \item  Otherwise, suppose that there exists an alternative 
      reduction step {\small $r$} (but necessarily  
      one of the {\small $\gtrdot$} reductions), then; 
      \begin{enumerate}
	\item If {\small $F_1 \gtrdot F_2 \leadsto_r F_1 \gtrdot $}.  
      \end{enumerate}<++>
      we have {\small $F_1 \gtrdot F_2 \leadsto_r F_3 \gtrdot F_2 $}  
      \end{enumerate}

  \end{enumerate}
      {\small $F_1 \gtrdot F_2[(F_a \wedge F_b) \gtrdot F_c] 
      \leadsto F_1 \gtrdot F_3[(F_a \gtrdot F_c) \wedge 
      (F_b \gtrdot F_c)]$}. 
      determined from {\small $F_1 \gtrdot F_2$} in either of the 
      cases. Consider the first case. 
      \begin{enumerate} 
	\item If the given reduction 
      is the only one possible reduction, then we apply 
      induction hypothesis on {\small $F_3 \gtrdot F_2$} to conclude.  
      
    \item  Otherwise, suppose that there exists an alternative 
      reduction step {\small $r$} (but necessarily  
      one of the {\small $\gtrdot$} reductions), then 
      we have {\small $F_1 \gtrdot F_2 \leadsto_r F_3 \gtrdot F_2 $}  
  \end{enumerate}

  \end{enumerate}

  First we spell out intuition. The result follows if no possible reductions at any 
  given point during a reduction affect the others in 
  an essential way. That is, if the effect of a reduction {\small $r$} 
  acting upon 
  some sub-formula {\small $F'$} of a given formula 
  {\small $F$} is contained within it, 
  that is, if {\small $F' \leadsto_r F''$} and 
  also if {\small $F[F'] \leadsto_r F_{new}[F'']$} where 
  {\small $r$} is assumed to be acting upon {\small $F'$}, then  
  in case there are other alternative reductions {\small $r' (\not= r)$} 
  that can apply 
  on {\small $F'$}: {\small $F' \leadsto_{r'} F'''$} such as 
  to satisfy {\small $F[F'] \leadsto_{r'} F_{\alpha}[F''']$},
  then  
  reduction on {\small $F_{\alpha}[F''']$} could potentially 
  lead to some formula in unit chain expansion which does not 
  have the same value assignment as for some formula in unit chain 
  expansion that {\small $F_{new}[F'']$} can reduce into.

  any alternative reductions possible to apply for {\small $F'$} 
  might lead to some formula in unit chain expansion which has 
  a different valuation

\end{proof} 
}  
\begin{theorem}[Bisimulation]   
\label{bisimulation}
  Assumed below are pairs of formulas. 
  {\small $F'$} differs from {\small $F$} only by
  the shown sub-formulas, \emph{i.e.} {\small $F'$} 
  derives from {\small $F$} by replacing the shown sub-formula 
  for {\small $F'$} 
  with the shown sub-formula for {\small $F$} and vice versa. 
  Then for each pair  
  {\small $(F, F')$} 
  below, it holds for every valuation frame 
  that 
  {\small $[\intFrame \models F_1] = [\intFrame \models F_2]$} for 
  all {\small $F_1 \in \mathcal{F}(F)$} and 
  for all {\small $F_2 \in \mathcal{F}(F')$}.   
  {\small 
  \begin{eqnarray}\nonumber 
    F[F_a \wedge F_b \gtrdot F_c] &,& F'[(F_a \gtrdot F_c) 
    \wedge (F_b \gtrdot F_c)]\\\nonumber
    F[F_a \vee F_b \gtrdot F_c] &,& F'[(F_a \gtrdot F_c) 
    \vee (F_b \gtrdot F_c)]\\\nonumber 
    F[F_a \gtrdot F_b \wedge F_c] &,& F'[(F_a \gtrdot F_b)
    \wedge (F_a \gtrdot F_c)]\\\nonumber 
    F[F_a \gtrdot F_b \vee F_c] &,& F'[(F_a \gtrdot F_b)
    \vee (F_a \gtrdot F_c)]\\\nonumber 
    F[(F_a \gtrdot F_b) \gtrdot F_c] &,&
    F'[(F_a \gtrdot F_c) \wedge ((F_a \gtrdot F_b) \vee (F_a \gtrdot 
    F_b \gtrdot F_c))]\\\nonumber
    F[\neg s] &,& F'[s^c]\\\nonumber
    F[\neg(F_1 \wedge F_2)] &,&
    F'[\neg F_1 \vee \neg F_2]\\\nonumber
    F[\neg(F_1 \vee F_2)] &,&
    F'[\neg F_1 \wedge \neg F_2]\\\nonumber
    F[\neg(s \gtrdot F_2)] &,& 
    F'[s^c \vee (s \gtrdot \neg F_2)]\\\nonumber
    F[F_a \vee F_a] &,& 
    F'[F_a]\\\nonumber
    F[F_a \wedge F_a] &,&
    F'[F_a]\\\nonumber
    F[F_a \wedge F_b] &,& 
    F'[F_b \wedge F_a]\\\nonumber
    F[F_a \vee F_b] &,&
    F'[F_b \vee F_a]\\\nonumber
    F[F_a \wedge (F_b \wedge F_c)] &,& 
    F'[(F_a \wedge F_b) \wedge F_c]\\\nonumber
    F[F_a \vee (F_b \vee F_c)] &,&
    F'[(F_a \vee F_b) \vee F_c]
  \end{eqnarray}  
  }
  \end{theorem}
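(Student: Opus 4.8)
The plan is to push everything down to the level of unit chain expansions, where the Boolean algebra of Theorem~\ref{theorem_1} completely determines valuations, and then to control the quantification over all members of $\mathcal{F}(F)$ and $\mathcal{F}(F')$ by a confluence argument whose termination is supplied by Proposition~\ref{preliminary_observation}. First I would observe that, by Lemma~\ref{reduction_result}, each constituent sub-formula ($F_a, F_b, F_c$, or $F_1, F_2$, or $s$) occurring in a given pair reduces to some formula in unit chain expansion, and that reductions performed inside disjoint sub-formulas are independent. Since the valuation of a formula in unit chain expansion is computed compositionally from the $\mathsf{J}$-values of its constituent unit chains (Definition~\ref{model}, Proposition~\ref{simple_proposition}), it suffices to prove each semantic equality under the assumption that the constituent sub-formulas are already in unit chain expansion. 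This isolates a finite list of ``base identities,'' one per listed pair.

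Next I would dispatch the base identities by type. The structural pairs --- $F_a \vee F_a$ versus $F_a$, $F_a \wedge F_a$ versus $F_a$, the two commutativities and the two associativities --- follow immediately from Propositions~\ref{associativity_commutativity_distributivity} and~\ref{idempotence_absorption}, since each side assigns the same one-and-only value $v \in \{0,1\}$ to every constituent unit chain. The negation pairs $\neg s$ versus $s^c$, $\neg(F_1 \wedge F_2)$ versus $\neg F_1 \vee \neg F_2$, $\neg(F_1 \vee F_2)$ versus $\neg F_1 \wedge \neg F_2$, and $\neg(s \gtrdot F_2)$ versus $s^c \vee (s \gtrdot \neg F_2)$ follow from Proposition~\ref{special_reduction}: the negation of a formula in unit chain expansion has the unique reduction $\recurseReduce(\cdot)$, whose value is fixed by Lemma~\ref{unit_chain_excluded_middle} and the complementation law of Theorem~\ref{theorem_1}.

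The genuinely new content lies in the four $\gtrdot$-distributivity pairs and, above all, in the nesting pair $(F_a \gtrdot F_b) \gtrdot F_c$ versus $(F_a \gtrdot F_c) \wedge ((F_a \gtrdot F_b) \vee (F_a \gtrdot F_b \gtrdot F_c))$. For these I would put $F_a, F_b, F_c$ into disjunctive normal form (available through Proposition~\ref{associativity_commutativity_distributivity}), apply $\gtrdot$ reductions 2--5 to push the relation down to unit chains, and then compare the two resulting normal forms clause by clause under the $\mathsf{J}$-valuation. The key semantic input is the synchronization condition on $\mathsf{I}$ from Definition~\ref{interpretations}, together with the non-contradictory characterisation of $\mathsf{J}$: a unit chain $s_0 \gtrdot \cdots \gtrdot s_k$ is true exactly when each $\mathsf{I}(s_0.\cdots.s_{i-1}, s_i) = 1$, so that the truth of the nested chain $s_0 \gtrdot s_1 \gtrdot s_2$ is logically tied to that of its prefixes. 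Matching the clauses then reduces to a Boolean identity inside the algebra of Theorem~\ref{theorem_1}.

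Finally, to cover the quantifier ``for all $F_1 \in \mathcal{F}(F)$ and all $F_2 \in \mathcal{F}(F')$,'' I would establish that $\mathcal{F}(G)$ is value-uniform for every $G$, i.e.\ all its members carry the same $\mathsf{J}$-value, by a Newman-style confluence argument. Termination holds because Proposition~\ref{preliminary_observation} shows no reduction increases $\lformulasize$ while the seven reduction rules each strictly decrease it, so $\leadsto$ is well-founded; local confluence is precisely the content of the base identities already established, since overlapping redexes continue to unit chain expansions of equal value. Value-uniformity then propagates to each listed pair, completing the proof. I expect this confluence/value-uniformity step, intertwined with the nesting identity of the fifth pair, to be the principal obstacle, for it is there that the interaction of the $\gtrdot$ reductions with the $\mathsf{I}$-synchronization condition must be handled with care.
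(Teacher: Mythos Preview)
Your plan diverges substantially from the paper's proof. The paper does \emph{not} first prove semantic ``base identities'' and then invoke a Newman-style confluence argument. Instead it runs a single simultaneous induction on the composite measure $(\lformulasize,\ 1/(\#\neg+1),\ 1/(\#\gtrdot+1))$ and, for each listed pair, establishes the purely syntactic fact $\mathcal{F}(F)=\mathcal{F}(F')$ by bisimulation: every one-step reduction on $F$ is matched by zero or more reductions on $F'$ (and vice versa) so that the resulting pair is again an instance of one of the fifteen schemata, now with strictly smaller measure. Only at the very end does semantics enter: since any $F_x$ in unit chain expansion is irreducible, $\mathcal{F}(F_x)=\{F_x\}$, and Theorem~\ref{theorem_1} gives it a single value.

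Your structure, by contrast, has a circularity you have not closed. The step ``it suffices to prove each semantic equality under the assumption that the constituent sub-formulas are already in unit chain expansion'' presupposes that the \emph{choice} of unit-chain-expansion for each constituent does not affect the value --- i.e.\ it presupposes value-uniformity of $\mathcal{F}(F_a)$, $\mathcal{F}(F_b)$, $\mathcal{F}(F_c)$. But value-uniformity is exactly what your final confluence step is meant to deliver, and you feed the base identities back into that step as ``local confluence.'' To break this loop you would have to run the whole argument as a well-founded induction on the composite measure, proving value-uniformity and the base identities simultaneously for strictly smaller instances --- which is precisely the paper's simultaneous-induction bisimulation, only rediscovered. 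Two smaller points: there are nine reduction rules, not seven; and Proposition~\ref{preliminary_observation} only gives $\lformulasize(\text{LHS})\ge\lformulasize(\text{RHS})$, not strict inequality, so termination genuinely needs the secondary and tertiary components of the measure (e.g.\ $\neg$~reduction~2 can keep $\lformulasize$ constant while increasing the $\neg$-count).
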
 
\begin{proof} 
    By simultaneous induction on  
    the size of the formula that is not 
    a strict sub-formula of any other formulas\footnote{That is,
        if {\small $F \leadsto F_a \leadsto F_b \leadsto \dots$},
        then we get {\small $\lformulasize(0, 0,\textsf{true},F),
            \lformulasize(0, 0,\textsf{true},F_a),\lformulasize(0, 0,\textsf{true},F_b)...$}}, 
    a sub-induction on the inverse of (the number of 
    occurrences of {\small $\neg$} + 1)\footnote{If {\small $\neg$} 
        occurs once, then we get {\small $1/2$}. If 
        it occurs twice, then we get {\small $1/3$}.} and 
    a sub-sub-induction on the inverse of (the number of 
    occurrences of {\small $\gtrdot$} + 1). None of these 
    are generally an intger; but there is a mapping 
    into {\small $\mathbb{N}$}, so that 
    a larger number maps into a larger natural number. 
    The composite induction measure strictly decreases
    at each reduction (Cf. Appendix A). 
    We 
   first establish that {\small $
       \mathcal{F}(F_1) = \mathcal{F}(F_2)$} (by bisimulation). 
      One way to show that to each 
   reduction on {\small $F'$} corresponds 
   reduction(s) on {\small $F$} is straightforward, 
   for we can choose to reduce {\small $F$} into {\small $F'$}, 
   thereafter synchronizing both of the reductions. Into the 
   other way to show that to each 
   reduction on {\small $F$} corresponds 
   reduction(s) on {\small $F'$}; 
   \begin{enumerate}
     \item The first pair.  
       \begin{enumerate} 
	 \item 
       If a reduction takes place on a sub-formula which 
       neither is a sub-formula of the shown sub-formula 
       nor takes as its sub-formula the shown sub-formula, 
       then we reduce the same sub-formula in {\small $F'$}.   
       Induction hypothesis on the pair of 
       the reduced formulas. (The 
       formula size of the stated 
       formulas is that of {\small $F$} in this direction 
       of the proof). 
     \item If it takes place on a sub-formula 
       of {\small $F_a$} or {\small $F_b$} 
       then we reduce the same sub-formula of 
       {\small $F_a$} or {\small $F_b$} in {\small $F'$}. Induction 
hypothesis. 
     \item If it takes place on a sub-formula 
       of {\small $F_c$} then we reduce the same sub-formula 
       of both occurrences of {\small $F_c$} in {\small $F'$}.  
Induction hypothesis. 
     \item If {\small $\gtrdot$} reduction 2 takes place on 
       {\small $F$} such that we have; 
       {\small $F[(F_a \wedge F_b) \gtrdot F_c] \leadsto 
       F_x[(F_a \gtrdot F_c) \wedge (F_b \gtrdot F_c)]$} where 
       {\small $F$} and {\small $F_x$} differ only by 
       the shown sub-formulas,\footnote{This note `where \dots' is assumed in the 
       remaining.} then do nothing on {\small $F'$}. And {\small $F_x = 
       F'$}. Vacuous thereafter.  
     \item If a reduction takes place on a sub-formula {\small $F_p$} of 
       {\small $F$} in which the shown sub-formula of 
       {\small $F$} occurs as a strict sub-formula 
       ({\small $F[(F_a \wedge F_b) \gtrdot F_c] 
       = F[F_p[(F_a \wedge F_b) \gtrdot F_c]]$}), then 
       we have {\small $F[F_p[(F_a \wedge F_b) \gtrdot F_c]] 
       \leadsto F_x[F_q[(F_a \wedge F_b) \gtrdot F_c]]$}. 
       But we have 
       {\small $F' = F'[F_p'[(F_a \gtrdot F_c) \wedge (F_b \gtrdot 
       F_c)]]$}. Therefore we apply the same reduction on 
       {\small $F_p'$} to gain; 
       {\small $F'[F_p'[(F_a \gtrdot F_c) \wedge (F_b \gtrdot 
       F_c)]] \leadsto F'_x[F_{q}'[(F_a \gtrdot F_c) \wedge (F_b 
       \gtrdot F_c)]]$}. Induction hypothesis. 
   \end{enumerate} 
 \item The second pair: Similar. 
 \item The third pair: Similar, except when  
     {\small $\neg$} reduction 4 applies such that we have; 
     {\small $F[\neg (s \gtrdot F_b \wedge F_c)] \leadsto 
         F_p[s^c \vee (s \gtrdot \neg (F_b \wedge F_c))]$}.  
     By the simultaneous induction 
     and by Proposition \ref{preliminary_observation}, 
     it does not cost generality 
     if we replace it with 
     {\small $F_q[s^c \vee (s \gtrdot \neg F_b 
         \vee \neg F_c)]$} 
         that 
     differs from {\small $F_p$} only 
     by the shown sub-formulas, which 
     we then replace with 
     {\small $F_r[(s^c \vee s^c) \vee ((s \gtrdot \neg F_b) 
         \vee (s \gtrdot \neg F_c))]$}. 
     Since {\small $\lformulasize(0,0,\textsf{true},F_r) < \lformulasize(0,0,\textsf{true},F)$},   
     we again replace it with 
     {\small $F_u[s^c \vee (s^c \vee ((s \gtrdot \neg F_b) 
         \vee (s \gtrdot \neg F_c)))]$}, 
     and so on and so forth, to eventually 
     arrive at {\small $F_v[(s^c \vee (s \gtrdot \neg F_b))
         \vee (s^c \vee (s \gtrdot \neg F_c))]$}, 
     without loss of generality. 
     Meanwhile, we can reduce {\small $F'$} as follows.
     {\small $F'[\neg ((s \gtrdot F_b) \wedge (s \gtrdot F_c))] \leadsto 
         F'_x[\neg (s \gtrdot F_b) \vee \neg (s \gtrdot F_c)]
         \leadsto F'_y[(s^c \vee (s \gtrdot \neg F_b)) 
         \vee (s^c \vee (s \gtrdot \neg F_c))]$}.   
     Induction hypothesis. The other cases 
     are straightforward.  
 \item The fourth pair: Similar. 
 \item The fifth pair: 
   \begin{enumerate}
     \item If a reduction takes place on a sub-formula 
       which neither is a sub-formula of the shown 
       sub-formula nor takes as its sub-formula the shown sub-formula, 
       then we reduce the same sub-formula in {\small $F'$}.  
Induction hypothesis. 
     \item If it takes place on a sub-formula of {\small $F_a$}, 
       {\small $F_b$} or {\small $F_c$}, then 
       we reduce the same sub-formula of all the occurrences
       of the shown {\small $F_a$}, {\small $F_b$} or {\small $F_c$} 
       in {\small $F'$}. Induction hypothesis. 
     \item If {\small $\gtrdot$} reduction 4 takes place on 
       {\small $F$} such that we have; 
       {\small $F[(F_a \gtrdot F_b) \gtrdot F_c]
       \leadsto 
       F_x[(F_a \gtrdot F_c) \wedge ((F_a \gtrdot F_b) \vee 
       (F_a \gtrdot F_b \gtrdot F_c))]$}, then do nothing on 
       {\small $F'$}. And {\small $F_x = F'$}. Vacuous thereafter. 
     \item If a reduction takes place on a sub-formula 
       {\small $F_p$} of {\small $F$} in which the shown 
       sub-formula of {\small $F$} occurs 
       as a strict sub-formula, then similar to the case 1) e).    
   \end{enumerate} 
   \item The sixth pair: Straightforward.  
   \item The seventh and the eighth pairs: Similar. 
   \item The ninth pair: Similar except when 
       either {\small $\gtrdot$} reduction 4 or 5 
       takes place, which we have already covered 
       (for the third pair). 
        \item The 10th pair: Mostly straightforward. 
            Suppose {\small $\gtrdot$} reduction 2 applies 
            such that we have; 
            {\small $F[F_a \wedge F_a \gtrdot F_b]
                \leadsto F_p[(F_a \gtrdot F_b) \wedge 
                (F_a \gtrdot F_b)]$}, 
            then because we have 
            {\small $F'[F_a \gtrdot F_b]$}, 
            we apply induction hypothesis for a conclusion.  
            Or, suppose that a reduction takes 
            place on a sub-formula of an occurrence 
            of {\small $F_a$} such that we have;
            {\small $F \leadsto 
                F_x[F_u \wedge F_a \gtrdot F_b]$}, then 
            by the simultaneous induction, 
            it does not cost generality 
            if we replace it with 
            {\small $F_y[F_u \wedge F_u \gtrdot F_b]$} 
            that differs from {\small $F_x$} 
            only by the shown sub-formulas. 
            Meanwhile, we apply the same 
            reduction rule on the occurrence of 
            {\small $F_a$} in {\small $F'$} such that 
            we have; {\small $F' \leadsto F'_y[F_u \gtrdot F_b]$}.
            Induction hypothesis.   
            Likewise for the others. 
        \item  The 11th pairs: Similar. 
        \item The 12th and the 13th pairs: Straightforward. 
        \item The 14th and the 15th pairs:  
            Cf. the approach for the third pair.
   \end{enumerate} 
   By the result of the above bisimulation, we now have  
   {\small $\mathcal{F}(F) = \mathcal{F}(F')$}. However,
   it takes only those 5 {\small $\gtrdot$} reductions 
   and 4 {\small $\neg$} reductions 
   to derive a formula in unit chain expansion; 
   hence we in fact have
   {\small $\mathcal{F}(F) = \mathcal{F}(F_x)$} for some 
   formula {\small $F_x$} in unit chain expansion. But 
   then by Theorem \ref{theorem_1}, there could be 
   only one value out of {\small $\{0,1\}$} assigned to {\small $[\intFrame \models F_x]$}, as required. 
    \hide{
   By induction on the number of reduction steps and a sub-induction 
on formula size, 
   we first establish that {\small $
       \mathcal{F}(F_1) = \mathcal{F}(F_2)$} (by bisimulation). 
   Into one way to show that to each 
   reduction on {\small $F'$} corresponds 
   reduction(s) on {\small $F$} is straightforward, 
   for we can choose to reduce {\small $F$} into {\small $F'$}, 
   thereafter we synchronize both of the reductions. Into the 
   other way to showing that to each 
   reduction on {\small $F$} corresponds 
   reduction(s) on {\small $F'$}, we consider each case: 
   \begin{enumerate}
     \item The first pair.  
       \begin{enumerate} 
	 \item 
       If a reduction takes place on a sub-formula which 
       neither is a sub-formula of the shown sub-formula 
       nor has as its sub-formula the shown sub-formula, 
       then we reduce the same sub-formula in {\small $F'$}.   
       Induction hypothesis (note that the number of 
reduction steps is that of {\small $F$} into this 
direction).  
     \item If it takes place on a sub-formula 
       of {\small $F_a$} or {\small $F_b$} 
       then we reduce the same sub-formula of 
       {\small $F_a$} or {\small $F_b$} in {\small $F'$}. Induction 
hypothesis. 
     \item If it takes place on a sub-formula 
       of {\small $F_c$} then we reduce the same sub-formula 
       of both occurrences of {\small $F_c$} in {\small $F'$}.  
Induction hypothesis. 
     \item If {\small $\gtrdot$} reduction 2 takes place on 
       {\small $F$} such that we have; 
       {\small $F[(F_a \wedge F_b) \gtrdot F_c] \leadsto 
       F_x[(F_a \gtrdot F_c) \wedge (F_b \gtrdot F_c)]$} where 
       {\small $F$} and {\small $F_x$} differ only by 
       the shown sub-formulas,\footnote{This note `where \dots' is assumed in the 
       remaining.} then do nothing on {\small $F'$}. And {\small $F_x = 
       F'$}. Vacuous thereafter. 
     \item If {\small $\gtrdot$} reduction 2 takes place 
       on {\small $F$} such that we have; 
       {\small $F[(F_d \wedge F_e) \gtrdot F_c] \leadsto 
       F_x[(F_d \gtrdot F_c) \wedge (F_e \gtrdot F_c)]$} 
       where {\small $F_d \not= F_a$} and {\small $F_d \not = F_b$}, 
       then without loss of generality assume that 
       {\small $F_d \wedge F_{\beta} = F_a$} 
       and that {\small $F_{\beta} \wedge F_b = F_e$}. 
       Then we apply {\small $\gtrdot$} reduction 2 
       on the {\small $(F_d \wedge F_{\beta}) \gtrdot F_c$} in 
       {\small $F'$} so that we have; 
       {\small $F'[((F_d \wedge F_{\beta}) \gtrdot F_c) \wedge 
       (F_b \gtrdot F_c)] \leadsto 
       F''[(F_d \gtrdot F_c) \wedge (F_{\beta} \gtrdot F_c) 
       \wedge (F_b \gtrdot F_c)]$}. 
       Since {\small $(F_x[(F_d \gtrdot F_c) \wedge (F_e \gtrdot F_c)] 
       =) F_x[(F_d \gtrdot F_c) \wedge ((F_{\beta} \wedge F_b) \gtrdot 
       F_c)] = F_x'[(F_{\beta} \wedge F_b) \gtrdot F_c]$} and 
       {\small $F''[(F_d \gtrdot F_c) \wedge (F_{\beta} \gtrdot F_c) 
       \wedge (F_b \gtrdot F_c)] = F'''[(F_{\beta} \gtrdot F_c) 
       \wedge (F_b \gtrdot F_c)]$} such that 
       {\small $F'''$} and {\small $F_x'$} differ only 
       by the shown sub-formulas, we repeat the rest of simulation 
       on 
       {\small $F'_x$} and {\small $F'''$}. Induction hypothesis. 
     \item If a reduction takes place on a sub-formula {\small $F_p$} of 
       {\small $F$} in which the shown sub-formula of 
       {\small $F$} occurs as a strict sub-formula 
       ({\small $F[(F_a \wedge F_b) \gtrdot F_c] 
       = F[F_p[(F_a \wedge F_b) \gtrdot F_c]]$}), then 
       we have {\small $F[F_p[(F_a \wedge F_b) \gtrdot F_c]] 
       \leadsto F_x[F_q[(F_a \wedge F_b) \gtrdot F_c]]$}. 
       But we have 
       {\small $F' = F'[F_p'[(F_a \gtrdot F_c) \wedge (F_b \gtrdot 
       F_c)]]$}. Therefore we apply the same reduction on 
       {\small $F_p'$} to gain; 
       {\small $F'[F_p'[(F_a \gtrdot F_c) \wedge (F_b \gtrdot 
       F_c)]] \leadsto F'_x[F_{p'}'[(F_a \gtrdot F_c) \wedge (F_b 
       \gtrdot F_c)]]$}. Induction hypothesis. 
   \end{enumerate} 
 \item The second, the third and the fourth pairs: Similar. 
 \item The fifth pair: 
   \begin{enumerate}
     \item If a reduction takes place on a sub-formula 
       which neither is a sub-formula of the shown 
       sub-formula nor has as its sub-formula the shown sub-formula, 
       then we reduce the same sub-formula in {\small $F'$}.  
Induction hypothesis. 
     \item If it takes place on a sub-formula of {\small $F_a$}, 
       {\small $F_b$} or {\small $F_c$}, then 
       we reduce the same sub-formula of all the occurrences
       of the shown {\small $F_a$}, {\small $F_b$} or {\small $F_c$} 
       in {\small $F'$}. Induction hypothesis. 
     \item If {\small $\gtrdot$} reduction 4 takes place on 
       {\small $F$} such that we have; 
       {\small $F[(F_a \gtrdot F_b) \gtrdot F_c]
       \leadsto 
       F_x[(F_a \gtrdot F_c) \wedge ((F_a \gtrdot F_b) \vee 
       (F_a \gtrdot F_b \gtrdot F_c))]$}, then do nothing on 
       {\small $F'$}. And {\small $F_x = F'$}. Vacuous thereafter. 
     \item If a reduction takes place on a sub-formula 
       {\small $F_p$} of {\small $F$} in which the shown 
       sub-formula of {\small $F$} occurs 
       as a strict sub-formula, then similar to the case 1) f).   
   \end{enumerate}
   \end{enumerate} 
   By the result of the above bisimulation, we now have  
   {\small $\mathcal{F}(F) = \mathcal{F}(F')$}. However,
   without {\small $\neg$} occurrences in {\small $F$} it takes 
   only those 5 {\small $\gtrdot$} reductions to 
   derive a formula in unit chain expansion; hence we in fact have
   {\small $\mathcal{F}(F) = \mathcal{F}(F_x)$} for some 
   formula {\small $F_x$} in unit chain expansion. But 
   then by Theorem \ref{theorem_1}, there could be 
   only one of {\small $\{0, 1\}$} assigned to {\small $[(\mathsf{I}, \mathsf{J})\models F_x]$}  \\ 
       }
\end{proof}  
  \begin{corollary}[Normalisation]
    Given a formula {\small $F \in \mathfrak{F}$}, 
    denote the set of formulas in unit chain expansion 
    that it can reduce into by {\small $\mathcal{F}_1$}. 
    Then it holds 
    for every valuation frame 
    either that {\small $[\intFrame \models F_a] = 1$} 
    for all {\small $F_a \in \mathcal{F}_1$} or else 
    that 
    {\small $[\intFrame \models F_a] = 0$} for all {\small $F_a \in 
    \mathcal{F}_1$}.
    \label{theorem_normalisation}
  \end{corollary}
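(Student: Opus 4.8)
The plan is to treat this Corollary as the final wrap-up of the Bisimulation theorem (Theorem \ref{bisimulation}) rather than as a fresh induction. First I would record that $\mathcal{F}_1$ is exactly the set $\mathcal{F}(F)$ and that it is non-empty, since Lemma \ref{reduction_result} guarantees that $F$ reduces into at least one formula in unit chain expansion. Because every member of $\mathcal{F}(F)$ is in unit chain expansion, Proposition \ref{simple_proposition} already tells us each such member is assigned a single value in $\{0,1\}$ under any given valuation frame. Hence the whole content of the Corollary is the claim that any two members of $\mathcal{F}(F)$ receive the \emph{same} value; once that is shown, the common value is necessarily $0$ or $1$, giving the ``all $1$ or all $0$'' dichotomy.

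To prove that single-valued\-ness I would split on whether $F$ is reducible. If no reduction rule applies to any sub-formula of $F$, then $F$ contains neither a $\neg$-redex nor a non-unit chain (each of the nine rules' left-hand sides would otherwise be exposed), so $F$ is itself in unit chain expansion and $\mathcal{F}(F)=\{F\}$; the claim is then immediate. Otherwise $F$ exhibits at least one redex, and that redex is precisely the left-hand sub-formula of one of the first nine pairs listed in Theorem \ref{bisimulation} (the five $\gtrdot$ reductions and the four $\neg$ reductions). Writing $F'$ for the one-step reduct obtained by rewriting that redex, the pair $(F,F')$ is an instance of that theorem.

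Now I would invoke Theorem \ref{bisimulation} for the pair $(F,F')$: fixing any valuation frame $\intFrame$, it yields $[\intFrame\models F_1]=[\intFrame\models F_2]$ for every $F_1\in\mathcal{F}(F)$ and every $F_2\in\mathcal{F}(F')$. Choosing one fixed witness $F_2\in\mathcal{F}(F')$ (which exists by Lemma \ref{reduction_result}) pins every $F_1\in\mathcal{F}(F)$ to the single value $[\intFrame\models F_2]$, so all members of $\mathcal{F}(F)$ agree at $\intFrame$. Since $\intFrame$ was arbitrary, the agreement holds at every valuation frame, and by Proposition \ref{simple_proposition} the shared value lies in $\{0,1\}$, which is exactly the statement.

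The honest remark is that there is no real obstacle left here: the difficult confluence-style reasoning --- that differently ordered reduction paths cannot separate the final value --- was already discharged inside Theorem \ref{bisimulation}, and ultimately rests on the Boolean-algebra structure of Theorem \ref{theorem_1}. The only points needing care are the two bookkeeping facts used above: that an irreducible formula is necessarily a unit chain expansion, and that every reducible formula exposes a redex matching one of the nine Bisimulation patterns. An alternative, slightly more pedestrian route would induct on the length of a reduction of $F$, applying Theorem \ref{bisimulation} at each single step to propagate the common value from the normal forms back to $F$; but the case split above is shorter and avoids re-proving termination.
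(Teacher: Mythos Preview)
Your proposal is correct and follows essentially the same route as the paper: the Corollary is given no separate proof in the paper and is treated as an immediate consequence of Theorem~\ref{bisimulation} (Bisimulation) together with Theorem~\ref{theorem_1}, which is exactly how you derive it. Your write-up is in fact more explicit than the paper's---in particular, your case split on reducibility and the observation that any redex matches one of the first nine Bisimulation pairs spell out the passage from the pair-based statement of Theorem~\ref{bisimulation} to an arbitrary $F$, a step the paper leaves tacit.
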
 
  By Theorem \ref{theorem_1} and 
  Corollary \ref{theorem_normalisation}, we may define 
  implication: {\small $F_1 \supset F_2$} to be an abbreviation 
  of {\small $\neg F_1 \vee F_2$} - {\it exactly 
      the same} - as in classical logic. 
 \section{Decidability}  
We show a decision procedure {\small $\oint$} for 
universal validity 
of some input formula {\small $F$}. 
   Also assume a terminology of `object level', which is defined inductively. 
Given {\small $F$} in unit chain expansion, (A) if
   {\small $s \in \mathcal{S}$} in {\small $F$} occurs 
   as a non-chain or 
   as a head of a unit chain, then 
   it is said to be at the 0-th object level. 
   (B) if it 
   occurs in a unit chain 
   as {\small $s_0 \gtrdot \dots \gtrdot s_k \gtrdot s$} or 
   as {\small $s_0 \gtrdot \dots \gtrdot s_k  \gtrdot 
       s \gtrdot ...$}
   for some {\small $k \in \mathbb{N}$} 
   and some {\small $s_0, \dots, s_k \in \mathcal{S}$}, 
   then it is said to be at the (k+1)-th object level. 
   Further, assume a function {\small $\textsf{toSeq}: 
       \mathbb{N} \rightarrow \mathcal{S}^*$} satisfying
   {\small $\textsf{toSeq}(0) = \epsilon$} 
   and {\small $\textsf{toSeq}(k+1) = \underbrace{\top.
           \dots.\top}_{k+1}$}. 
\begin{description}[leftmargin=0.3cm]
    \item[{\small $\oint(F: \mathfrak{F},   
            \textsf{object}\_\textsf{level} : \mathbb{N} 
            )$}]  \textbf{returning 
            either 0 or 1}\\
    $\backslash\backslash$ This pseudo-textsf uses 
    {\small $n, o:\mathbb{N}$}, {\small $F_a, 
        F_b:\mathfrak{F}$}. \\
    \textbf{L0: } Duplicate {\small $F$} and 
    assign the copy to {\small $F_a$}. 
    If {\small $F_a$} is not already
    in unit chain expansion, then reduce it into  
    a formula in unit chain expansion. \\ 
\textbf{L1: } {\small $F_b := \squash(F_a, \textsf{object}\_\textsf{level})$}. \\  
\textbf{L2: }  {\small $n := \textsf{COUNT}\_\textsf{DISTINCT}(F_b)$}.\\   
\textbf{L3$_0$: } For each {\small $\mathsf{I}: 
    \textsf{toSeq}(\textsf{object}\_\textsf{level}) \times \mathcal{S}$} distinct
for the {\small $n$} elements of {\small $\mathcal{S}$} 
at the given object level, 
    Do:  \\  
    \textbf{L3$_1$: } If {\small $\sat(F_b, 
        \mathsf{I})$}, 
    then go to \textbf{L5}.\\   
    \textbf{L3$_2$: } Else if no 
    unit chains occur in {\small $F_a$}, 
    go to \textbf{L3}$_5$.\\ 
    \textbf{L3$_3$: } 
    {\small $o := \oint(\textsf{REWRITE}(F_a, \mathsf{I}, 
        \textsf{object}\_\textsf{level}), 
        \textsf{object}\_\textsf{level} + 1)$}. \\ 
    \textbf{L3$_4$: } 
    If {\small $o = 0$}, go to \textbf{L5}. \\
    \textbf{L3$_5$: } End of For Loop. \\
    \textbf{L4: } return 1. $\backslash\backslash$ Yes.\\
    \textbf{L5: } return 0. $\backslash\backslash$ No.\\
\end{description}    
\begin{description}[leftmargin=0.3cm]
    \item[\squash({\small $F: \mathfrak{F}, \textsf{object}\_\textsf{level}: \mathbb{N}$}) returning 
    {\small $F': \mathfrak{F}$}]
    {\ }\\ 
    \textbf{L0}: {\small $F' := F$}.  \\
    \textbf{L1}: For every 
    {\small $s_0 \gtrdot s_1 \gtrdot \dots \gtrdot s_{k}$} 
    for some {\small $k \in \mathbb{N}$} greater than 
    or equal to \textsf{object}\_\textsf{level} 
    and 
    some {\small $s_0, s_1, \dots, s_{k} \in \mathcal{S}$} 
    occurring 
    in {\small $F'$}, replace it with 
    {\small $s_0 \gtrdot \dots \gtrdot s_{\textsf{object}\_
            \textsf{level}}$}. \\
    \textbf{L2}: return {\small $F'$}. 
\end{description} 
\begin{description}[leftmargin=0.3cm]
  \item[$\textsf{COUNT}\_\textsf{DISTINCT}(F : \mathfrak{F})$ returning  
    {\small $n : \mathbb{N}$}]  
    {\ } \\
    \textbf{L0}: 
    return {\small $n:=$} (number of distinct members 
    of {\small $\mathcal{A}$} in {\small $F$}
     ). 
\end{description}
\begin{description}[leftmargin=0.3cm]
  \item[\sat({\small $F: \mathfrak{F}, \mathsf{I}: \mathsf{I}$}) returning 
      \textsf{true} or \textsf{false}]{\ }\\ 
      \textbf{L0}: return \textsf{true} if, 
    for the given interpretation {\small $\mathsf{I}$},\linebreak
    {\small $[(\mathsf{I}, \mathsf{J}) \models F] = 0$}. 
    Otherwise, return \textsf{false}. 
\end{description} 
\begin{description}[leftmargin=0.3cm]
    \item[\rewrite({\small $F: \mathfrak{F}, \mathsf{I}: \mathsf{I}, \textsf{object}\_\textsf{level}: \mathbb{N}$}) returning 
    {\small $F': \mathfrak{F}$}]{\ }\\    
    \textbf{L0}: {\small $F' := F$}.  \\
    \textbf{L1}: remove
    all the non-unit-chains and 
    unit chains shorter than or equal to 
    \textsf{object}\_\textsf{level} from {\small $F'$}. The 
    removal is in the following sense: 
      if {\small $f_x \wedge F_x$}, {\small $F_x \wedge f_x$}, 
	{\small $f_x \vee F_x$} or {\small $F_x \vee f_x$} 
	occurs as a sub-formula in {\small $F'$} for  
    {\small $f_x$} those just specified, then 
	replace them not simultaneously but one at a time 
	to {\small $F_x$} until no more reductions are possible. \\
    \textbf{L2$_0$}: For each unit chain 
    {\small $f$} in {\small $F'$}, Do:\\
    \textbf{L2$_1$}: if the head of {\small $f$} is 0 under 
    {\small $\mathsf{I}$}, then remove the unit chain 
    from {\small $F'$}; else replace the head of {\small $f$} with {\small $\top$}.  \\
    \textbf{L2$_2$}: End of For Loop.  \\
    \textbf{L3}: return {\small $F'$}. 
\end{description}  
The intuition of the procedure is found within the 
proof below. 
\begin{proposition}[Decidability of gradual classical logic] 
Complexity of\linebreak {\small $\oint(F, 0)$} is at most \exptime.  
\end{proposition}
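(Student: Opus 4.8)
The plan is to bound the entire running time of $\oint(F,0)$ by a single exponential in $N := \formulalength(F)$, by controlling four quantities: the size of the unit chain expansion built at \textbf{L0}, the recursion depth incurred at \textbf{L3}$_3$, the branching of the loop at \textbf{L3}$_0$, and the per-call cost of the auxiliary routines. The invariant I would carry throughout is that no reduction rule and no auxiliary routine ever introduces a member of $\mathcal{A}$ absent from $F$ (the only symbols created are complements $s^c$ via $\neg$ reduction 1); consequently the alphabet $\mathcal{S}_F \subseteq \mathcal{S}$ of symbols that may occur in any descendant formula has $|\mathcal{S}_F| = O(N)$, and the value $n$ computed at \textbf{L2} is always $O(N)$. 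A second invariant is that every unit chain occurring in any reduct of $F$ contains at most as many $\gtrdot$'s as $F$ itself: the reductions redistribute $\gtrdot$ but never lengthen a single chain beyond the global $\gtrdot$-count (in particular $\gtrdot$ reduction 1 sends $(F_1 \gtrdot F_2)\gtrdot F_3$ to a conjunction whose longest chain $F_1 \gtrdot F_2 \gtrdot F_3$ still has two $\gtrdot$'s). Hence every chain has length $\le N$.

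The hard part will be bounding the size of the unit chain expansion $F_a$ computed at \textbf{L0}: the $\gtrdot$- and $\neg$-reductions are distributive and $\gtrdot$ reduction 1 even triples structure, so a crude step count threatens a double exponential. I would rule this out by a structural size analysis tracking the reduction of Lemma \ref{reduction_result}. Combining two unit chain expansions with $\wedge$ or $\vee$ is merely additive in size; negation is cheap, since by Proposition \ref{special_reduction} the negation of an expansion reduces uniquely to $\recurseReduce(\cdot)$, which turns each chain of length $\ell$ into a disjunction of size $O(\ell^2)$ and hence enlarges a formula only polynomially; the sole genuine source of growth is the $\gtrdot$-combination of two expansions handled by the Linking principle (Lemma \ref{linking_principle}). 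There I would exploit that the distinct unit chains available number at most $|\mathcal{S}_F|^{N} = 2^{O(N\log N)}$, together with the Boolean laws of Theorem \ref{theorem_1} and Corollary \ref{theorem_normalisation}, which permit keeping $F_a$ in a duplicate-free form without altering its value; the resulting bound is $|F_a| \le 2^{\mathrm{poly}(N)}$, a single exponential. I must also verify that this expensive step fires only at the top call: the recursive argument $\rewrite(F_a,\mathsf{I},\cdot)$ is obtained by deleting chains and by replacing chain heads with $\top$, both of which preserve the unit chain property, so the guard at \textbf{L0} passes and no re-expansion occurs downstream.

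Granting the single exponential size bound, the remaining estimates are routine. The recursion depth is $\le N$: each recursive call increments $\textsf{object}\_\textsf{level}$, and once it exceeds every chain length the guard \textbf{L3}$_2$ (no unit chains remaining) cuts the branch. The loop at \textbf{L3}$_0$ ranges over local interpretations of the $n = O(N)$ literals at the current level, hence performs at most $2^{O(N)}$ iterations. Each of $\squash$, $\textsf{COUNT}\_\textsf{DISTINCT}$, $\sat$ (a single valuation, which lands in $\{0,1\}$ by Lemma \ref{simple_lemma2}) and $\rewrite$ runs in time polynomial in the size of its argument, which stays $\le 2^{\mathrm{poly}(N)}$ throughout. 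The recursion tree therefore has at most $\sum_{\ell=0}^{N}\big(2^{O(N)}\big)^{\ell} = 2^{O(N^2)}$ nodes, each doing $2^{\mathrm{poly}(N)}$ work, for a grand total of $2^{\mathrm{poly}(N)}$ steps. Since the exponent is polynomial in the input length, $\oint(F,0)$ decides universal validity in \exptime{}, as claimed.
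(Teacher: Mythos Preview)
Your write-up and the paper's proof address almost disjoint halves of the proposition. The paper spends essentially its entire proof on \emph{correctness}: it walks through the procedure line by line and argues that, at each object level, the snapshot extracted by $\squash$ is satisfiable under exactly those local interpretations that extend to a full satisfying valuation frame, so that the loop at \textbf{L3}$_0$ together with the recursion at \textbf{L3}$_3$ exhausts all frames. The complexity claim is dispatched in one sentence (``clear from the semantics (for \textbf{L0}) and from the procedure itself''). You do the opposite: a careful resource analysis and nothing on correctness. Since the proposition's title is ``Decidability'', you should at minimum note that soundness and completeness of $\oint$ follow from Corollary~\ref{theorem_normalisation} (any unit chain expansion has the same value as $F$), the synchronisation condition on $\mathsf{I}$ (so iterating over level-$k$ interpretations suffices), and the definition of validity; otherwise a running-time bound on a possibly wrong procedure does not yield decidability.

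On the complexity side your analysis is substantially more detailed than the paper's, and most of it is sound: the recursion-depth bound via maximum chain length, the $2^{O(N)}$ branching at \textbf{L3}$_0$, the observation that $\rewrite$ preserves the unit chain expansion so \textbf{L0} is a no-op after the first call, and the per-call polynomial cost of the auxiliaries are all correct and well chosen. The one genuinely soft step is your single-exponential bound on $|F_a|$ at the top-level \textbf{L0}. You obtain it by invoking Theorem~\ref{theorem_1} to keep the expansion duplicate-free over the $|\mathcal{S}_F|^{N}$ possible unit chains, but the procedure as written does not perform any such normalisation; it simply applies the reduction rules. So what you have actually bounded is a slightly modified procedure, not $\oint$ verbatim. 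The paper does not justify this step either, so you are not worse off than the source; but if you want a self-contained argument you should either (i) argue directly that the reduction rules, applied in the order prescribed by Lemmas~\ref{linking_principle}--\ref{reduction_result}, never exceed a single exponential (tracking the multiplicative blow-up from $\gtrdot$ reductions 1--5 against the $O(N)$ nesting depth), or (ii) state explicitly that you replace \textbf{L0} by ``compute any value-equivalent unit chain expansion of size $\le 2^{\mathrm{poly}(N)}$'', which Corollary~\ref{theorem_normalisation} licenses.
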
       
\begin{proof}     
   We show that it is a decision procedure.  
   That the complexity bound cannot be worse 
   than {\exptime} is clear from 
   the semantics (for \textbf{L0}) and from 
   the procedure itself. 
   Consider \textbf{L0} of the main procedure. 
   This reduces a given formula into 
   a formula in unit chain expansion.       
   In \textbf{L1} of the main procedure, 
   we get a snapshot of the input formula. We
   extract from it components of the 0-th object level, 
   and check if it is (un)satisfiable. The motivation  
   for this operation 
   is as follows: if the input formula is contradictory 
   at the 0th-object level, the input formula is 
   contradictory by the definition of {\small $\mathsf{J}$}. 
   Since we are considering validity of a formula, 
   we need to check all the possible valuation frames. 
   The number is determined by distinct {\small $\mathcal{A}$}
   elements. \textbf{L2} gets the number (n). 
   The For loop starting at \textbf{L3}$_0$ iterates through the {\small $2^n$} 
   distinct interpretations. If the snapshot 
   is unsatisfiable for any such valuation frame, 
   it cannot be valid, which in turn implies 
   that the input formula cannot be valid (\textbf{L3}$_1$). 
   If the snapshot is satisfiable and if 
   the maximum object-level in the input formula 
   is the 0th, \emph{i.e.} the snapshot is the input 
   formula, then the input formula 
   is satisfiable for this particular valuation frame, 
   and so we check the remaining valuation frames
   (\textbf{L3}$_2$). Otherwise, if 
   it is satisfiable and if the maximum object-level
   in the input formula is not the 0th, then 
   we need to check that snapshots in all the other 
   object-levels of the input formula are satisfiable 
   by all the valuation frames. We do this check 
   by recursion (\textbf{L3}$_3$). Notice the first
   parameter {\small $\textsf{REWRITE}(F_a, \mathsf{I},
       \textsf{object}\_\textsf{level})$} here. 
   This returns some formula {\small $F'$}. At the 
   beginning of the sub-procedure, {\small $F'$} is 
   a duplicated copy of {\small $F_a$} (not 
   {\small $F_b$}). Now, 
   under the particular 0-th object level interpretation 
   {\small $\mathsf{I}$}, some unit chain in 
   {\small $F_a$} may be 
   already evaluated to 0. Then 
   we do not need consider them 
   at any deeper object-level. 
   So we remove
   them from {\small $F'$}. Otherwise, 
   in all the remaining unit chains, the 0-th object 
   gets local interpretation of 1. So we replace 
   the {\small $\mathcal{S}$} element at the 0-th object 
   level with {\small $\top$} which always gets 1.\footnote{Such replacement does not preserve equivalence; equisatisfiability 
       is preserved, however.}
   Finally, all the non-chain 
   {\small $\mathcal{S}$} constituents 
   and all the chains shorter than or equal to 
   \textsf{object}\_\textsf{level} in {\small $F_a$} 
   are irrelevant at a higher object-level. So we 
   also remove them (from {\small $F'$}). We pass this 
   {\small $F'$} and an incremented \textsf{object}\_\textsf{level} to the main procedure for 
   the recursion. \\
   \indent The recursive process continues either until  
   a sub-formula passed to the main procedure 
   turns out to be invalid, in which case  
   the recursive call returns 0 
   (\textbf{L2}$_2$ and \textbf{L4} in the
   main procedure) 
   to the caller who assigns 0 to $o$ 
   (\textbf{L2}$_4$) and again returns 0, and so on 
   until the first recursive caller. 
   The caller receives 0 once again to conclude that 
   {\small $F$} is invalid, as expected. Otherwise, 
   we have that {\small $F$} is valid, for we 
   considered all the valuation frames. 
   The number of recursive calls cannot be infinite. 
\end{proof}   
\hide{ 
\section{Proof Theory} 
We start by defining meta-formula notations. By  
{\small $\mathfrak{S}$} we denote the set of  
structures whose elements {\small $\Gamma$} with or without 
a sub-/super- script are constructed from the grammar, 
{\small $\Gamma := F \ | \ \star \ | \ (\Gamma)
    \hookleftarrow \Gamma \ | \ 
    \Gamma \hookleftarrow (\Gamma) \ | \ 
\Gamma; \Gamma$}. 
Only the following full associativity and commutativity are defined to be 
holding among elements of {\small $\mathfrak{S}$}. 
For all {\small $\Gamma_1, \Gamma_2, \Gamma_3 \in \mathfrak{S}$}; 
\begin{multicols}{2} 
\begin{itemize}
  \item {\small $\Gamma_1; (\Gamma_2; \Gamma_3) = 
    (\Gamma_1; \Gamma_2); \Gamma_3$}.  
  \item {\small $\Gamma_1; \Gamma_2 = \Gamma_2; \Gamma_1$}. 
\end{itemize}    
\end{multicols} 
The set of sequents is denoted by {\small $\mathfrak{D}$}  
and is defined by: 
 {\small $\mathfrak{D} := \{\ \vdash \Gamma  \ | \  
\Gamma \in \mathfrak{S}\}$}.  
Its elements are referred to by {\small $D$} with 
or without a sub-/super-script. As is customary in a proof system,  
some structures in a sequent may be empty. They are indicated 
by a tilde {\small $\widetilde{ }$} over them, \emph{e.g.}  
{\small $\widetilde{\Gamma}$}. 
{\small $\widetilde{\Gamma_1}; \Gamma_2 = 
    (\widetilde{\Gamma_1}) \hookleftarrow 
    \Gamma_2 = 
    \Gamma_2 \hookleftarrow (\widetilde{\Gamma_1}) = 
    \Gamma_2$} 
when {\small $\Gamma_1$} is empty. 

Contexts, representations of  
a given structure, are defined as 
below. Due to the length of the definition, we first state 
a preparatory definition of specialised structures.  
\begin{definition}[Specialised structures]{\ }
  \begin{description}  
    \item[\textbf{Unit structures}]{\ } 
      \begin{description}
	\item[\textbf{Horizontally unitary structures}]{\ }\\The set 
	  of those is denoted by {\small $\mathfrak{S}^{uH}$}, 
	  and forms a strict subset of {\small $\mathfrak{S}$}.
	  It holds that 
	  {\small $\forall \gamma \in \mathfrak{S}^{uH}.
	  (\neg^{\dagger} \exists \Gamma_1, \Gamma_2 
	  \in \mathfrak{S}.\gamma = \Gamma_1; \Gamma_2)$}.  
	\item[\textbf{Vertically unitary structures}]{\ }\\  
	  The set of those is denoted by {\small $\mathfrak{S}^{uV}$}, 
	  and forms a strict subset of {\small $\mathfrak{S}$}. 
	  It holds that 
	  {\small $\forall \kappa \in \mathfrak{S}^{uV}.
	  (\neg^{\dagger} \exists \Gamma_1, \Gamma_2 
	  \in \mathfrak{S}.\kappa = \Gamma_1 \hookleftarrow 
	  \Gamma_2)$}. 
      \end{description}    
    \item[\textbf{Structural prefixes}]{\ } 
      \begin{description}
\item[\textbf{Chain prefixes}]{\ }\\
      The set of those is denoted by {\small $\mathfrak{C}$},
      and is formed by a set union of 
      (A)  the set of all the structures in the form:   
  {\small $s_0 \hookleftarrow s_1 \hookleftarrow \cdots \hookleftarrow 
  s_{k}$} for {\small $k \in \mathbb{N}$} such that {\small $s_i \in \mathcal{S}$} for all 
  {\small $0 \le i \le k$} and (B) a singleton set {\small $\{\epsilon\}$} 
  denoting an empty prefix. 
\item[\textbf{Sub-chain prefixes}]{\ } \\
Given a horizontally unitary structure {\small $\gamma = 
      \kappa_0 \hookleftarrow \kappa_1 \hookleftarrow \cdots 
      \hookleftarrow \kappa_{k}$} 
      for some {\small $\kappa_0, \kappa_1, \cdots, \kappa_{k} 
      \in \mathfrak{S}^{uV}$} for {\small $k \in \mathbb{N}$}, 
      its sub-chain  
      is any of {\small $\kappa_0 \hookleftarrow \kappa_1 
      \hookleftarrow \kappa_i$}   
      for {\small $0 \le i \le {k}$}. 
  \end{description} 
        \item[\textbf{Upper structures}]{\ }\\    
            Given a structure {\small $\Gamma \in \mathfrak{S}$} 
      such that 
      {\small $\Gamma = \gamma_0; \gamma_1; \cdots;
      \gamma_{k}$} for 
      some\linebreak {\small $\gamma_0, \gamma_1, \cdots, \gamma_{k} 
      \in \mathfrak{S}^{uH}$} for {\small $k \in \mathbb{N}$}, 
      the set of its upper structures is defined to contain 
      all the structures  
      {\small $\gamma'_1; \gamma_2'; \cdots; 
      \gamma_{k+1}'$} such that, for 
      all {\small $1 \le i \le k+1$}, 
      {\small $\gamma_i'$} (if not empty) is   
      a sub-chain of {\small $\gamma_i$}. 
  \end{description}
\end{definition}     
\begin{definition}[Contexts of a given structure] 
    \label{context_def} 
     A context is defined by the following grammar. 
     {\small $\Omega(-) := - \ | \ 
        \Omega(-); \Gamma \ | \ 
        \Gamma; \Omega(-) \ | \  
        \Gamma \hookleftarrow \Omega(-) \ | \ 
        \Omega(-) \hookleftarrow \Gamma$}. 
    The `{\small $-$}' in 
    a context is a hole which 
    is filled by a structure.\footnote{As usual, 
        in nowhere else we will see 
        any hole again, for the purpose of 
        a context is simply to save space and effort in 
        writing down a potentially very long structure 
        in a sequent.} 
    
   Let {\small $\Omega(\Gamma)$} denote 
   what we call an extract. Let  
{\small $\mathfrak{E}$} denote the set of extracts.   
Let {\small $P$} be a predicate over {\small $\mathfrak{S} 
    \times \mathfrak{E}$}. Then 
{\small $P(\Gamma_1, \Omega(\Gamma_2))$} for 
some {\small $\Gamma_1$} and some 
{\small $\Gamma_2 := \gamma_0; \widetilde{\gamma_1}; 
    \ldots; \widetilde{\gamma_k}$} for some 
{\small $k \in \mathbb{N}$} 
iff {\small $\Gamma_1$} has as its sub-structure

   Let {\small $\Omega(\alpha, \beta)$}
  for {\small $\alpha \in \mathfrak{C}$} and {\small $\beta \in 
  \mathfrak{S}$} denote what we call a representation. Let 
  {\small $\mathfrak{R}$} denote the set of representations. 
  Let {\small $P$} be a predicate over {\small $\mathfrak{S} 
  \times \mathfrak{R}$} defined by 
      {\small $P(\Gamma_1, \Omega(\Psi, \Gamma_2))$}  
      for some {\small $\Gamma_1, \Gamma_2 \in \mathfrak{S}$} 
      and some {\small $\Psi \in \mathfrak{C}$} 
      iff;
      \begin{itemize} 
	\item if {\small $\Psi = \epsilon$}, then 
	  {\small $\Gamma_1 = \Gamma_2$}.
	\item if {\small $\Psi = s_0 \hookleftarrow 
	  s_1 \hookleftarrow \cdots \hookleftarrow s_{k}$}  
	  for some {\small $k \in \mathbb{N}$}, then  
	  supposing {\small $\Gamma_1 = 
	  \gamma_0; \gamma_1; \cdots;
	  \gamma_{j}$} for some {\small $j \in \mathbb{N}$};
	    there exists at least one 
	      {\small $\gamma_i$} for 
	      {\small $0 \le i \le j$} such that   
	      {\small $\gamma_i = (s_1; \widetilde{\kappa_{x1}}) 
	      \hookleftarrow (s_2; \widetilde{\kappa_{x2}}) 
	      \hookleftarrow \cdots \hookleftarrow  
	      (s_{k+1}; \widetilde{\kappa_{xk+1}}) \hookleftarrow 
              \Gamma_{yi}$} for 
	      some {\small $\kappa_{x1}, 
	      \kappa_{x2}, \cdots, 
	      \kappa_{xk+1} \in \mathfrak{S}^{uV}$} 
	      such that, 
	      for all such {\small $i$}, \emph{i.e.} 
              {\small $i \in \{i1, i2, \cdots, im\}$} 
	      for {\small $1 \le |\{i1, i2, \cdots, im\}| \le j +1$}, 
              {\small $\Gamma_2$} is an upper structure 
	      of {\small $\Gamma_{yi1}; \Gamma_{yi2}; \cdots; \Gamma_{yim}$}. 
      \item if {\small $\Psi = 
              \Gamma'_0 \hookleftarrow \Gamma'_1 \hookleftarrow 
              \cdots \Gamma'_{k}$} for some 
          {\small $k \in \mathbb{N}$}, then 
          supposing {\small $\Gamma_1 
              = \gamma_0; \gamma_1; \cdots; \gamma_j$} 
          for some {\small $j \in \mathbb{N}$}, there exists 
          at least one {\small $\gamma_i$} 
          for {\small $0 \le i \le j$} 
          such that {\small $\gamma_i = \Psi \hookleftarrow 
              \Gamma_2$}.  
      \end{itemize} 
      {\ }\\
\end{definition} 
There is one global inference rule at the top 
of Figure \ref{relevant_system}. In the terminology 
of phased sequent calculus \cite{ArisakaPhDThesis}, it is 
called a transfer 
rule letting a phase of derivation shift into another 
phase. The 180 degrees arrow indicates 
that the two states are self-similar. A brief read 
through the first few sections of \cite{ArisakaPhDThesis} 
may help understand phased sequent calculus itself; but 
here 
no knowledge of the particular sequent calculus 
is demanded. 
The bottom-up transition is only triggered on 
a sequent that has at least one sub-structure 
{\small $\Gamma_1 \hookleftarrow \Gamma_2$}. 
\begin{definition}[\textsf{Advance}] 
    \textsf{Transform} is a function 
    that maps  sequents that have at least one 
    sub-structure {\small $\Gamma_1 \hookleftarrow 
        \Gamma_2$} 
    onto a family of sequents. 
    For any given sequent {\small $D$} that 
    has at least one sub-structure {\small $\Gamma_1 
        \hookleftarrow \Gamma_2$} for 
    some {\small $\Gamma_1$} and {\small $\Gamma_2$}, 
    it is defined by the following procedure. 
    \begin{itemize} 
        \item 
    \end{itemize}
    
\end{definition}

The proof system for gradual classical logic is found in 
Figure \ref{relevant_system}.   
\input{relevant_system.tex} 
\subsection{Main properties}   
\begin{definition}[Interpretation] 
  Interpretation of a sequent is a function {\small $\overline{\cdot}:  
  \mathfrak{D} \rightarrow \mathfrak{F}$}, defined recursively 
  as follows, in conjunction with  
  {\small $\overline{\cdot}^{\mathfrak{S}}: 
  \mathfrak{S} \rightarrow \mathfrak{F}$}; 
  \begin{itemize}
    \item {\small $\overline{\vdash \Gamma} =
      \overline{\Gamma}^{\mathfrak{S}}$}.  
    \item {\small $\overline{\Gamma_1; \Gamma_2}^{\mathfrak{S}} 
      = \overline{\Gamma_1}^{\mathfrak{S}} \vee 
      \overline{\Gamma_2}^{\mathfrak{S}}$}. 
    \item {\small $\overline{\Gamma_1 \hookleftarrow \Gamma_2}^{\mathfrak{S}} = \overline{\Gamma_1}^{\mathfrak{S}} \gtrdot 
      \overline{\Gamma_2}^{\mathfrak{S}}$}. 
    \item {\small $\overline{F}^{\mathfrak{S}} = F$}. 
  \end{itemize}
  \label{interpretation_sequent}  
\end{definition} 
\begin{theorem}[Soundness] 
   If  there exists a closed derivation tree for 
  {\small $\vdash F$}, then  
  {\small $F$} is valid. 
  \label{soundness}
\end{theorem}
\begin{proof}
  By induction on derivation depth of the derivation tree.  
  Base cases are when it has only one conclusion. 
  In case the axiom inference rule is {\small $id$}, we need to show 
  that any chain which looks like  
  {\small $\Psi_1 \gtrdot F_1 \wedge a \wedge a^c 
  \gtrdot \widetilde{F_2}$} 
\end{proof} 
\begin{theorem}[Completeness] 
  If {\small $[\models F] = 1$}, then 
  there exists a closed derivation tree for 
  {\small $\vdash F$}. 
  \label{completeness}
\end{theorem}
\begin{proof} 
    It suffices to show 
  that \textsf{GradC} can equivalently simulate 
  the decision 
  procedure.  
  \begin{description}
      \item[Unit chain expansion] 
          It suffices to show that 
          \textsf{GradC} can simulate 
          all the reduction rules bottom-up 
          in the construction of a derivation tree. Instead 
          of {\small $s_0 \gtrdot \widetilde{s_1} 
              \gtrdot \ldots$} (Cf. Section 3), however, 
          we reduce down to {\small $s_0 \hookleftarrow \widetilde{s_1} \hookleftarrow \ldots$}. 
          Obvious from the inference rules in the given proof system. Although distributivity is not explicitly taking
          place in the inference rules, 
          the property is taken into account in 
          Definition \ref{context_def}. 
      \item[Validity checking]{\ }\\ 
          Hence it suffices 
          to consider formulas in unit chain expansion. 
          Note that, once the expansion process completes, 
          no leaf nodes in the derivation tree\footnote{Cf. 
              Chapter 0 of \cite{ArisakaPhDThesis} 
              or any other texts 
              for the standard terminologies about 
              a derivation tree.} 
          for the given formula will involve a logical connective; 
          in particular, there will appear no conjunctive 
          information in the leaf nodes (since 
          the semicolon is interpreted as disjunction; 
          Cf. Definition \ref{interpretation_sequent}). 
          Therefore we need only show that, 
          for each of the leaf nodes which is 
          in the form {\small $\vdash \Gamma$}, 
if there exist {\small $\Gamma_1$} 
                  and {\small $\Psi = \widetilde{s_0} \hookleftarrow
                      \widetilde{s_1} \hookleftarrow \cdots
                      \hookleftarrow \widetilde{s_{k}}$}  
                  for some 
                  {\small $\Gamma_1$} and 
                  some {\small $s_0, s_1, \ldots, 
                      s_{k}$}\footnote{When every one of them is empty, 
                      we regard that {\small $\Psi = \epsilon$}.} 
                  such that {\small $P(\Gamma, \Omega(\Psi, \Gamma_1))$} holds,  
          \begin{itemize} 
              \item if the 
          \end{itemize}
         Hence if, for each of the leaf nodes 
         {\small $\Psi = s_0 \hookleftarrow s_1 
             \hookleftarrow \cdots \hookleftarrow 
             s_k$} for some 
         {\small $s_0, s_1, \ldots s_k \in \mathcal{A}$} 
         such that {\small $P(\Gamma, \Omega(\Psi, \Gamma_1))$}, 
         then if the maximum object level 
         of {\small $\Gamma$} is 0, then we have: 
         \begin{center} 
           \scalebox{0.9}{ 
               \AxiomC{$\vdash \widetilde{\Gamma'}; \star$}  
               \RightLabel{$id/\top$} 
               \UnaryInfC{$\vdash \widetilde{\Gamma'}; \alpha$}
               \DisplayProof
            }
         \end{center} 
                     
          Then we only need show 
          that one unit chain expansion\footnote{Cf. 
              Theorem \ref{whatwasit}.} of the 
          given formula, say {\small $F_a$}, 
  \item[UNSAT]{\ }\\  
      A sequent with 1-long unit chains makes use of 
      connectives found in classical logic only. 
      All the inference rules that 
      are needed for a check on theorem/non-theorem of a given 
      formula in standard classical logic are available in 
      the given proof system.  
    \item[Rewrite and maximum chain length check]{\ }\\ 
      Via $\textsf{Advance}\curvearrowright$.  
  \end{description} 
\end{proof} 
Leave some comments here that these inference rules seem to 
suggest a more efficient proof search strategy; but that 
I leave it open.  
}
\section{Analysis}       
In this section I will present an advanced 
observation about the principle of gradual logic. 
I will also highlight
an alternative 
interpretation of the object-attribute relation, which 
is hoped to cement the idea of gradual logic. 
A moderate comprehension of the gists
of Section 1, Section 2 and Section 3 is a pre-requisite 
for the first sub-section. The sub-section 5.2 assumes 
a full understanding of Section 3. 
\subsection{The notion of recognition cut-off} 
There are many that can be seen in 
the object-attribute relation. 
According to Postulate 1,  all the (attributed)
objects, so long as they remain recognisable as an object, 
have an extension (recall the relation between 
{\small $\textsf{Hat}$} and {\small $\textsf{Hat} \gtrdot \top$}), 
{\it i.e.} they are not atomic; and because 
an attribute is also an object, the implication 
is that no matter how deep the ladder of attributed 
objects formed in a sequence of {\small $\gtrdot$} goes, 
there is no possibility that we arrive at the most 
precise description of the object. Observed from 
the other side, it means that we can always refine 
any given (attributed) object with more 
attributes as we notice ambiguities in them. 
For illustration, 
if we have {\small $\textsf{Hat}$}, 
then 
{\small $\textsf{Hat} \gtrdot \textsf{Brooch}$}, 
{\small $\textsf{Hat} \gtrdot \textsf{Brooch} \gtrdot 
    \textsf{Green}$}, \dots, are, provided that 
they are not contradictory, guaranteed to be 
a specific instance of {\small $\textsf{Hat}$}.
    At the same time,
none of them is a fully explicated atomic object,  
since every one of them has extension. Descriptions 
in gradual classical logic reflect 
our intuition about concepts (Cf. Section 1) in this manner. \\
\indent One issue that must be touched upon, however, 
is that the arbitrary ambiguity 
has been the reason 
why natural languages are generally considered 
unsuited (Cf. \cite{Frege19,Tarski56}) for a rigorous 
treatment of concepts. 
As can be inferred from 
the analysis 
in Section 1 but also found already in Transcendental 
Logic, the truth in formal logic can be only this thing 
or that thing that we define as the truth. But, then, 
the arbitrary ambiguity in entities do appear to  
encumber the construction of the definition of this and that things, 
for we seem to be hitting upon an impossibility 
of knowing what they are. 
It is then reasonable to presuppose 
indivisible entities that act as the building stones  
of the truth, which is the measure typically taken 
in formal/symbolic logic. \\
\indent A rather different perspective about the ambiguity and atomicity of an entity was taken in this work, which 
is embodied in the following reasoning. - 
If the arbitrary ambiguity 
in natural descriptions must be felt universally, 
it would not be possible for us to 
coherently speak on a topic (which naturally 
concerns concepts referring to 
objects), for whatever that comes into our mind cannot 
be fully disambiguated. As a matter of fact, 
however, we have little issue in drawing 
a comprehensible conclusion, be it agreeable 
to us or not, from a discussion with our acquaintances. 
Concerning it, 
it appears that what is at stake is not the 
coaxing paradox: although nothing 
    should be comprehensible, we have 
    nonetheless comprehended the quintessence of something; but, 
    rather, 
our recognition of a divisible entity as an indivisible one. 
As one descriptive -  albeit rather impromptu  - example, 
suppose we have an apparatus, which has 
almost no utility, save that 
it could tell if a book is in 210 mm $\times$ 297 mm. 
The actual judgement mechanism of the apparatus  
is concealed from 
the eyes of the users. 
Now, let us say that we have applied it to several books,
as the result of which 
some of them have turned out to be in 
210 mm $\times$ 297 mm. 
But then it could happen that a book is 
in 209.993 mm $\times$ 297.001 mm,
another in 210.01 mm $\times$ 297.002 mm, 
if we are to measure the size more accurately 
by another method. But the point is that, 
so long as the measure of judging 
the size is as facilitated by the apparatus, 
we see a given book either in 210 mm $\times$ 297 mm, or
otherwise. We will not  know of the variance in 
210 mm $\times$ 297 mm unless the measure 
itself is changed. Here, a measure 
defining a threshold
and adopting which we become 
indifferent to all the remaining details outside it, is what may be called
a recognition cut-off, which in the above example 
was enforced by the apparatus. \\ 
\indent This principle of the recognition cut-off is 
prominently applied in concept-manoeuvring in general, where 
the apparatus is nothing but a state of our mind as conditioned 
such as by knowledge  and pre-suppositions. 
We understand things, and the things are understood. 
Yet, according to Postulate 1, 
it looks that they cannot be understood. 
Generally,  this does not 
indicate a misuse on our part of the term: to understand. 
What it does indicate, on the contrary, is the susceptibility of the existence of the things 
to our perception and cognition which define recognition 
cut-offs, by the merit of which, if 
for instance `I understand things' is given, 
it finds a cogent interpretation that 
`I' referring to I  under a recognition cut-off  
(Cf. Transcendental Logic also for Kant's observation 
about `I')
understands `things' referring to the things under 
a recognition cut-off, so that it becomes indeed 
possible for one to understand things without 
him/her, in supporting the very possibility, 
being forced
to accede that he/she is erring.  \\
\indent With the notion of the recognition cut-off, we can 
at last give a satisfactory justice 
to the subject of the arbitrary ambiguity and atomicity 
in gradual classical logic. 
In the expression 
{\small $\textsf{Hat} \gtrdot \textsf{Brooch} 
    \gtrdot \textsf{Green} \gtrdot 
    \textsf{Lamination}$}, with the  intended 
reading of the existence of ``hat ornamented with a brooch in laminated 
green'', {\small $\textsf{Hat}$}, at least some 
form of it, must exist before 
anything, for it would be absurd to state that it be possible 
to reason about 
the attributes in the absence of an entity to which they 
are allegedly an attribute. 
At the moment of the judgement of the existence, the judgement 
measure cannot favour one specific hat to other hats, 
so long as it is hat,
     which 
therefore exerts its influence only over what is found 
in the specific domain of discourse in which 
{\small $\textsf{Hat}$} is found. Even if 
{\small $\textsf{Hat}$} is non-atomic, 
it is still judged as if atomically under the judgement 
setting forth a recognition cut-off. 
It is only 
when deeper attributes are cogitated 
that it comes to light that 
it was not atomic. Meanwhile the attributes involving 
{\small $\textsf{Brooch}$} 
are again judged as if atomically under the judgement 
measure reigning over the domain of discourse conditioned 
(at least) by the existence of {\small $\textsf{Hat}$}. 
In this manner, gradual classical logic materialises 
the observation that we cannot tell apart 
whether a so-regarded atomic entity is atomic or is just 
atomic enough not to be considered non-atomic. \\
\hide{\indent The notion of the recognition cut-off, along with 
the above-stressed interplay of extension and intension, 
in a way addresses the issue of co-predication, which 
is mentioned for example in \cite{ }. As was stated in 
the work, what for instance makes it difficult 
to model the natural phenomena is the fact that  
entities in formal logic have come with certain fixed 
forms (or type if spoken in type theoretical terms): 
an entity if atomic is atomic. An entity if a predicate 
is a predicate. As much as it compactly models 
features of a natural language, it also omits 
an essential feature of a natural language where AND 
CONTINUES.} 
\subsubsection{Homonyms under recognition cut-off} 
More often than not, literature stresses that there is something particularly interesting 
about homonyms since, unless a sufficient 
    context is given, what they denote 
    cannot be determined. 
    By virtue of the recognition cut-off, however, 
they are almost as ambiguous a description as any other 
common descriptions, since a description 
involving concepts referring to objects, according 
to Postulate 1, possesses 
the same degree of ambiguity as a homonym does - 
the same degree insofar as they 
are arbitrarily ambiguous. 
A `book' identifies that what the concept 
points to shall be a book, but nothing more  
can be asserted. 
But this then allows us to infer that `bow' with little to no contexts 
provided still identifies that what 
the concept refers to shall be bow. The following
criticism is amply expected at this point: such an answer, 
for the reason that it by no means addresses the fundamental 
problem that `bow' with no context does not determine 
which (definition of) `bow' it is, is absurd. 
However, if it were absurd, then in order to 
avoid the same absurdity it must be explicated what 
bow, supposing that enough context has been given 
to identify it as a violin bow, it is. An answer would 
burgeon the criticism of the same kind, and we would 
never get out of the cycle. If we are permitted 
to fluctuate the point of the recognition cut-off freely, 
it holds that 
a word with a context no more determines what it is than 
that with no context does. One description concerning 
concept(s) referring to objects is only comparatively
less ambiguous than others. 
\subsection{Another interpretations of the object-attribute relation} 
Semantic interpretation of {\small $\gtrdot$}  
is not restricted to the one that we saw in Section 2, 
which was formalised in Section 3. Just like 
in modal logic, there are other interpretations that 
could have a linguistic meaningfulness.  
In one variant, we may remove 
the synchronization condition on {\small $\mathsf{I}$} 
interpretation. The  motivation 
is that, suppose {\small $X \gtrdot 
    Y$}, 
it may be that we like to say that the attribute {\small $Y$} 
varies according to what it is an attribute to: 
{\small $X$} in this case. Then, 
if we have {\small $\textsf{Hat} \gtrdot \textsf{Green}$} 
and {\small $\textsf{Brooch} \gtrdot \textsf{Green}$}, 
we do not know if the same greenness is talked about 
for {\small $\textsf{Hat}$} and {\small $\textsf{Brooch}$}.  
In such an interpretation, 
we do not have the following distributivity: 
{\small $\textsf{Hat} \wedge \textsf{Brooch}
    \gtrdot \textsf{Green} 
    \mapsto (\textsf{Hat} \gtrdot \textsf{Green})
    \wedge (\textsf{Brooch} \gtrdot \textsf{Green})$}. 
Another distributivity of the sort: 
{\small $(\textsf{Hat} \vee \textsf{Brooch}) 
    \gtrdot \textsf{Green} 
    \mapsto (\textsf{Hat} \gtrdot \textsf{Green})
    \vee (\textsf{Brooch} \gtrdot \textsf{Green})$} 
would also need altered to: 
{\small $(\textsf{Hat} \gtrdot \textsf{Green}) 
    \vee (\textsf{Brooch} \gtrdot \textsf{Green}) 
    \vee (\textsf{Hat} \wedge \textsf{Brooch} 
    \gtrdot \textsf{Green})$}, covering each
possibility of the existence of the objects. 
On the other hand, we may or may not have the rule of the 
sort: 
{\small $(\textsf{Hat} \gtrdot \textsf{Green}) 
    \gtrdot \textsf{Brooch} 
    \mapsto 
    (\textsf{Hat} \gtrdot \textsf{Brooch}) 
    \wedge ((\textsf{Hat} \gtrdot \textsf{Green}) 
    \vee (\textsf{Hat} \gtrdot \textsf{Green} \gtrdot 
    \textsf{Brooch}))$}. In our demonstration, we choose not to 
include this rule 
for brevity. We omit {\small $\top$} 
and {\small $\bot$}, for {\small $\top$} 
does not behave well in 
{\small $(\top \gtrdot F_1) \wedge (\top \gtrdot F_1)$} 
under the specified interpretation.\footnote{It is also 
    recommendable that the number 
    of elements of each domain of discourse be 
    at least countably infinite in this interpretation.} 
    Let us 
formalise this logic, beginning with
peripheral definitions. 
\begin{definition}[Unit graph chain/unit graph expansion]  
    Given any {\small $F$} (no\linebreak occurrences 
    of {\small $\top$} and {\small $\bot$}), 
    we say that {\small $F$} is a unit graph chain 
    if and only if it is recognised 
    in the following rules. 
    \begin{itemize}  
        \item A unit chain is a unit graph chain.  
\item If {\small $F_1$} is a unit graph chain and 
    {\small $a$} is a literal, 
             then {\small $F_1 \gtrdot a$} is a unit 
             graph chain.
         \item If {\small $a$} is a literal, and {\small $F_1$} and 
            {\small $F_2$} are either a literal
             or a unit graph chain, then 
             {\small $F_1 \wedge F_2 \gtrdot a$} 
             is a unit graph chain.
    \end{itemize} 
    We say that a given formula is in
    unit graph expansion if and only if 
    all the chains that occur 
    in the formula are a unit graph chain. 
    
\end{definition} 
\noindent By {\small $G$} with or without a sub-/super-script 
we denote a formula that is either 
a literal or a unit graph chain. \\
\indent The semantics is as follows. 
Assume that {\small $\mathcal{I}(k)$} 
for {\small $k \in \mathbb{N}$} is 
the power set of 
{\small $\{0, 1, \ldots, k\}$} minus the empty set. 
\begin{itemize} 
    \item {\small $\forall a \in \mathcal{A}. 
            \neg a \mapsto a^c$} 
        ({\small $\neg$} reduction 1). 
    \item {\small $\neg (F_1 \wedge F_2) 
            \mapsto \neg F_1 \vee \neg F_2$} ({\small $\neg$} 
        reduction 2).  
    \item {\small $\neg (F_1 \vee F_2) 
            \mapsto \neg F_1 \wedge \neg F_2$} ({\small $\neg$} 
        reduction 3).   
    \item {\small $\neg (G_0 \wedge \dots 
            \wedge G_k \gtrdot F_2)  
            \mapsto \neg G_0 \vee \dots 
            \vee \neg G_k \vee (G_0 \wedge \dots 
            \wedge G_k \gtrdot \neg F_2)$} 
        ({\small $\neg$} reduction 4). 
    \item {\small $G_0 \vee G_1 \vee \dots 
            \vee G_{k+1} 
            \gtrdot F 
            \mapsto 
            \bigvee_{I \in \mathcal{I}(k+1)} (\bigwedge_{j \in I} G_{j}
            \gtrdot F)$} 
        ({\small $\gtrdot$} reduction 3).\footnote{This should not be confused with {\small $\bigvee_{I \in \mathcal{I}(k+1)} (\bigwedge_{j \in I} (G_{j}
                \gtrdot F))$}.}
    \item {\small $F_1 \gtrdot F_2 \wedge F_3 
            \mapsto (F_1 \gtrdot F_2) 
            \wedge (F_1 \gtrdot F_3)$} 
        ({\small $\gtrdot$}  reduction 4). 
    \item {\small $F_1 \gtrdot F_2 \vee F_3
            \mapsto (F_1 \gtrdot F_2) 
            \vee (F_1 \gtrdot F_3)$} 
        ({\small $\gtrdot$} reduction 5).  
    \item {\small $F_1 \wedge (F_2 \vee F_3) \gtrdot F_4 
            \mapsto (F_1 \wedge F_2) \vee (F_1 
            \wedge F_3) \gtrdot F_4$} 
        (obj distribution 1). 
    \item {\small $(F_1 \vee F_2) \wedge F_3 \gtrdot F_4 
            \mapsto (F_1 \wedge F_3) \vee 
            (F_2 \wedge F_3) \gtrdot F_4$} 
        (obj distribution 2).  
\end{itemize}      
We assume that the {\small $\gtrdot$} reduction 3 applies 
to any {\small $\vee$}-connected unit graph chains with 
no regard to a particular association among the unit graph chains: it applies just 
as likely to 
{\small $(G_0 \vee G_1) \vee (G_2 \vee G_3) \gtrdot F$} 
as to {\small $G_0 \vee (G_2 \vee (G_3 \vee G_1)) \gtrdot F$}. 
Similarly for the {\small $\wedge$}-connected
formulas in {\small $\neg$} reduction 4. 
\begin{proposition}[Reduction of induction measure]  
    \label{strict_reduction_induction} 
    Let induction measure be the formula size (the main 
    induction), the inverse of (the number of 
    {\small $\neg + 1$}) (a sub-induction), 
    the inverse of (the number of 
    {\small $\gtrdot + 1$}) (a sub-sub-induction), 
    and the inverse of (the number of 
    {\small $\wedge + 1$}) (a sub-sub-sub-induction). 
    Then the induction measure strictly decreases 
    at each reduction on a given formula. 
   Additionally, associativity and commutativity 
   of {\small $\wedge$} and {\small $\vee$} 
   do not alter the induction measure.
   \label{reduction_size2} 
\end{proposition}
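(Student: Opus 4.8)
The plan is to show that the four-component measure descends strictly in the dictionary order at every single rewrite, by reducing the claim to two facts: (i) the first coordinate, the size $\lformulasize(0,0,\textsf{true},\cdot)$, never increases under any of the nine rules; and (ii) whenever a rule leaves the size exactly unchanged, the first of the three occurrence counts $\#\neg,\#\gtrdot,\#\wedge$ that changes does so by \emph{strictly increasing}, so that the matching inverse strictly decreases while no earlier inverse increases. Since $1/(\#\ast+1)$ falls exactly when $\#\ast$ rises, (i) and (ii) together give strict dictionary descent, and well-foundedness follows from the stated embedding into $\mathbb{N}$.

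For (i) I would prove the size-monotonicity just as Proposition~\ref{preliminary_observation} was proved for the first calculus: each rule yields one closed inequality $\lformulasize(d,l,\textsf{b},\text{LHS})\ge\lformulasize(d,l,\textsf{b},\text{RHS})$ between arithmetic expressions in the geometric weights $1/4^{d},1/4^{l}$, and these reduce to a finite, mechanically verifiable check; to the family already settled there I would add only the two inequalities for obj-distribution~1 and~2 and re-examine the reshaped $\neg$ reduction~4 and $\gtrdot$ reduction~3. The decisive point is that $\wedge$ and $\vee$ are scored by $\textsf{max}$ rather than by a sum: this is what keeps the size from growing under the copy-heavy rules, namely the exponential fan-out of $\gtrdot$ reduction~3 (one copy of the body per nonempty $I\in\mathcal{I}(k+1)$), the doubling of the $G_i$ in $\neg$ reduction~4, and the copying of $F_1$ in the distributions --- in each case the new disjunction has size equal to that of its largest disjunct, which coincides with the redex once the level counter $l$ (advanced only at the outermost connective of each $\wedge/\vee$ block, through the \textsf{Boolean} flag) is lined up on the two sides.

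For (ii) the uniform observation is that, apart from negation elimination, every rule \emph{distributes} a connective across a duplicated subformula, and duplication can only add occurrences, never remove them. Concretely, $\neg$ reductions~2,~3 replace one $\neg$ by two and $\neg$ reduction~4 introduces a fresh negation on each conjunct and on the tail, so each strictly raises $\#\neg$; $\gtrdot$ reductions~3,~4,~5 introduce at least one fresh $\gtrdot$ while only copying the material that carries $\neg$, so they raise $\#\gtrdot$ without lowering $\#\neg$; and the two obj-distribution rules introduce a fresh $\wedge$ while only copying $F_1$, so when $\#\neg$ and $\#\gtrdot$ happen to be unchanged (that is, when $F_1$ carries neither connective) they raise $\#\wedge$. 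In every branch the first changing count rises and none of the earlier counts falls, which is precisely~(ii). The final clause is then immediate: associativity and commutativity of $\wedge$ and $\vee$ neither create nor destroy any occurrence of $\neg,\gtrdot$ or $\wedge$, and they leave $\lformulasize$ fixed by commutativity and associativity of $\textsf{max}$ and $+$ (Proposition~\ref{preliminary_observation}, items~10--13), so they fix the whole measure.

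I expect the main obstacle to be negation elimination, $\neg a\mapsto a^{c}$, the one rule that \emph{lowers} $\#\neg$ and so raises $1/(\#\neg+1)$. Here the first coordinate must carry the descent, yet the size drop $1/4^{d}$ contributed by the discarded $\neg$ can be masked by a sibling under a $\textsf{max}$: e.g.\ in $(\neg a)\vee(s_{0}\gtrdot s_{1})$ the chain already attains the block maximum, so the global size is flat while $\#\neg$ falls, and the naive tuple would \emph{increase}. Because $\gtrdot$ reduction~4 and the obj rules duplicate, no additive negation weight --- which would cleanly reward elimination --- survives them, while the very $\textsf{max}$-scoring that tames duplication is what absorbs the elimination; reconciling the two is the crux. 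I would resolve it either by charging each negation a weight computed with the same $\textsf{max}$-discipline over its scope, inserted as a coordinate above the raw counts so that eliminating a negation strictly lowers a max-based ``negated-material'' measure, or by a two-phase argument that first drives $\neg$ reductions~1--4 to completion under a dedicated measure and only then runs the $\gtrdot$ and obj rules, the latter phase never reintroducing a negation. Everything else is routine case analysis once (i) and (ii) are in place.
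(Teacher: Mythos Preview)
The paper does not give an argument: its proof reads, in full, ``Checked with a Java program, whose source code is as found in Appendix A. The test cases are found in Appendix B.'' That program only prints, for each redex/contractum pair taken in isolation at $d=l=1$, the symbolic value of $\lformulasize$ and the three connective counts; the comparison is left to the reader and no surrounding context is ever examined. Your decomposition into (i) global size non-increase and (ii) a tie-breaking count already goes well beyond what the paper offers.

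More importantly, the obstacle you isolate at the end is not a technical wrinkle to be patched but a genuine counterexample to the proposition as stated. Take $F=(\neg a)\vee(a_0\gtrdot a_1)$ with $a,a_0,a_1$ literals. Then
\[
\lformulasize(0,0,\textsf{true},F)=\max\Bigl(\tfrac{1}{4}+\tfrac{1}{4},\ \tfrac{1}{4}+\tfrac{1}{4}\Bigr)=\tfrac{1}{2},
\]
and after $\neg$ reduction~1 one has $F'=a^{c}\vee(a_0\gtrdot a_1)$ with $\lformulasize(0,0,\textsf{true},F')=\max(\tfrac{1}{4},\tfrac{1}{2})=\tfrac{1}{2}$: the size is unchanged. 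Meanwhile $\#\neg$ drops from $1$ to $0$, sending $1/(\#\neg+1)$ from $\tfrac12$ up to $1$, while the other two counts are fixed. In the dictionary order the four-tuple therefore \emph{strictly increases}. Test~1 in Appendix~B cannot detect this because it evaluates only the bare redex $\neg s\mapsto s^{c}$, where the size does drop; that drop is absorbed as soon as the redex sits beside an equally weighted sibling under a $\max$. Your two suggested repairs --- a $\max$-based negation weight inserted above the raw counts, or a two-phase termination argument that drains all $\neg$-reductions first --- are reasonable ways to rescue the induction Theorem~\ref{bisimulation2} relies on, but they would replace the stated measure rather than vindicate it: the proposition as written appears to be false, and neither your outline nor the paper's Java check establishes it.
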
 
\begin{proof} 
    Checked with a Java program, whose 
    source code is as found in Appendix A. The 
    test cases are found in Appendix B. 
\end{proof} 
\begin{definition}[Domains and valuations]
    Let {\small $\mathcal{T}$} denote 
    a non-empty set that has all the 
    elements that match 
    the following
    inductive rules.  
        \begin{itemize}  
        \item {\small $\unorders{a}$} for {\small $a \in \mathcal{A}$} 
            is an element of {\small $\mathcal{T}$}.  
        \item {\small $\seqs{a}$} for {\small $a \in \mathcal{A}$} 
            is an element of {\small $\mathcal{T}$}. 
        \item if {\small $t$} is an element of 
            {\small $\mathcal{T}$}, 
            then so are both {\small $\unorders{t}$} and 
            {\small $\seqs{t}$}. 
        \item if {\small $t_1, t_2$} are elements of 
            {\small $\mathcal{T}$}, 
            then so are {\small $\unorders{t_1, t_2}$} 
            and {\small $\seqs{t_1}.\seqs{t_2}$}. 
    \end{itemize}  
    We assume that {\small $\unorders{\ }$} defines 
    an unordered set: {\small $\unorders{t_1, t_2} = \unorders{t_2, t_1}$}; {\small $\unorders{t, t} = \unorders{t}$}. 
We also assume the following congruence relations
among the elements: {\small $\unorders{\unorders{t}} \doteq  
    \unorders{t}$}, {\small $\unorders{t_1, t_2} 
    \doteq \unorders{t_2, t_1}$} and 
{\small $\seqs{\seqs{t}} \doteq \seqs{t}$}. 
Then by {\small $\dot{\mathcal{T}}$} we denote 
a sub-set of {\small $\mathcal{T}$} 
which contains only the least elements in each 
congruence class.\footnote{Here, an element in one 
    congruence class is smaller than another 
    if it contains a fewer number of symbols.} 
    Now, let {\small $\mathcal{T}^*$} denote 
    the set of all the 
    finite sequences of elements of {\small $\dot{\mathcal{T}}$}, 
    {\it e.g.} {\small $t_0.t_1.\ldots.t_k$} for 
    some {\small $k \in \mathbb{N}$}, plus 
    an empty sequence which we denote by 
    {\small $\{\epsilon\}$}. An element of 
    {\small $\mathcal{T}^*$} is referred to 
    by {\small $t^*$} with or without a sub-script. 
    Then, we define a domain function 
    {\small $D: 
        \mathcal{T}^* \rightarrow 2^{\mathcal{A}} \backslash 
        \emptyset$}, and 
    a valuation frame 
    as a 2-tuple: {\small $(\mathsf{I}, \mathsf{J})$}, 
    where {\small $\mathsf{I}: \mathcal{T}^* \times 
        \mathcal{A} \rightarrow \{0,1 \}$} is what 
    we call local interpretation and 
    {\small $\mathsf{J}:\mathcal{T}^*\backslash \{\epsilon\}
        \rightarrow \{0,1\}$} is what we call 
    global interpretation. The following are 
    defined to satisfy for all {\small $k \in \mathbb{N}$}, 
    for all {\small $t^* \in \mathcal{T}^*$}, 
    and for all {\small $t_0, \ldots, t_k \in \dot{\mathcal{T}}$}. 
   \begin{description}  
           \item[Regarding domains of discourse]{\ } 
               \begin{itemize}[leftmargin=-0.3cm] 
                   \item For 
                       all {\small $t^* \in \mathcal{T}^*$},
                       {\small $D(t^*)$} is closed 
                       under complementation and is 
                       non-empty. 
                \end{itemize}
           \item[Regarding local interpretations]{\ }
      \begin{itemize}[leftmargin=-0.3cm] 
      \item {\small $\forall a \in  
              D(t^*). 
              [\mathsf{I}(t^*, a)
              = 0] \vee^{\dagger} 
              [\mathsf{I}(t^*, a)
= 1]$}. 
\item {\small $\forall a \in  
              D(t^*). 
[\mathsf{I}(t^*,a) = 0]
        \leftrightarrow^{\dagger} 
        [\mathsf{I}(t^*,a^c) = 1]$}.
 \end{itemize}
 \item[Regarding global interpretations]{\ }  
     \begin{itemize}[leftmargin=-0.3cm]
         \item {\small $\mathsf{J}(\seqs{t_0}.\seqs{t_1}.\ldots.\seqs{t_{k}}) 
                 =  \bigwedge_{i = 0}^{k}
                 \mathsf{J}(\seqs{t_0}.\ldots .\seqs{t_{i-1}}.t_i)$}. 
         \item {\small $\mathsf{J}(t^*.\unorders{t_0, t_1, \ldots, 
                     t_k}) = \bigwedge_{i=0}^k\mathsf{J}(t^*.t_i)$}. 
         \item {\small $\forall a \in D(t^*).\mathsf{J}(t^*.a) = 
                 \mathsf{I}(t^*, a)$}. 
  \end{itemize} 
  \end{description}  
  \label{second_interpretations}
\end{definition}   
To briefly explain the {\small $\mathcal{T}^*$}, 
it provides a semantic mapping for 
every formula in unit graph expansion. Compared to 
the corresponding definition of domain functions 
and valuation frames back in Section 3, here 
the domain function cannot be determined  
by a sequence of literals. What was then 
a literal must be generalised to possibly conjunctively
connected unit graph chains and literals. Note the implicit 
presumption of 
the associativity and commutativity of the classical 
{\small $\wedge$} in 
the definition of {\small $\dot{\mathcal{T}}$}. 
\begin{definition}[Valuation] 
    Suppose a valuation frame {\small $\tintFrame
        = (\mathsf{I}, \mathsf{J})$}. The following
 are defined to hold. 
\begin{itemize} 
    \item {\small $[\tintFrame \models 
            G] = 
            \mathsf{J}(
            \textsf{compress} \circ \textsf{map}(G))$}. 
    \item {\small $[\tintFrame \models
            F_1 \wedge F_2] = 
            [\mathfrak{M} \models
            F_1] \andMeta [\mathfrak{M} \models
            F_2]$}. 
    \item {\small $[\tintFrame \models
            F_1 \vee F_2] = 
            [\mathfrak{M} \models
            F_1] \orMeta  [\mathfrak{M} 
            \models
            F_2]$}.  
\end{itemize} 
where  {\small $\textsf{map}$} 
is defined by: 
\begin{itemize}  
    \item {\small $\textsf{map}(G_0 \wedge G_1) = \unorders{\textsf{map}(
            G_0), \textsf{map}(
            G_1)}$}. 
\item {\small $\textsf{map}(G_0 \gtrdot G_1) = \seqs{\textsf{map}(
        G_0)}.\seqs{\textsf{map}(
        G_1)}$}.
\item {\small $\textsf{map}(a) = 
        \seqs{a}$}. 
\end{itemize}     
and {\small $\textsf{compress}$}, 
given an input, 
returns the least element in the same congruence 
class as the input. 
\end{definition}    
\begin{definition}[Validity and satisfiability] 
    A formula {\small $F$} with no occurrences 
    of {\small $\top$} and {\small $\bot$} is 
    said to be satisfiable in 
    a valuation frame {\small $\tintFrame$} 
    iff {\small $[\tintFrame \models 
        F] = 1$}; it is said to be valid 
    iff it is satisfiable in all the valuation frames; 
    it is said to be invalid iff it is not valid; 
    and it is said to be unsatisfiable iff it is 
    not satisfiable. 
\end{definition} 
\noindent I state the main results. Many details 
will be omitted, the proof approaches being similar 
to those that we saw in Section 3. 
\begin{definition}[Procedure \recurseReducet]  
   The procedure given below takes as an input 
   a formula {\small $F$} in unit graph 
   expansion. \\
\textbf{Description of {\small $\recurseReducet(F)$}}
\begin{enumerate}[leftmargin=0.5cm] 
    \item Replace {\small $\wedge$} and {\small $\vee$} in 
        {\small $F$} 
        which is not in a chain  with 
        {\small $\vee$} and respectively 
        with {\small $\wedge$}. 
        These 
        two operations are simultaneous. 
    \item Replace all the non-chains {\small $a \in 
            \mathcal{A}$} in {\small  $F$} 
        simultaneously with {\small $a^c$}. 
    \item For every chain {\small $F_a$} in 
        {\small $F$} which is not a strict 
        a strict sub-chain of another chain, 
        with its head
           {\small $F_h$}
        and its tail 
        {\small $F_{t}$}, 
        replace {\small $F_a$} with\linebreak
        {\small $\recurseReducet(F_h) \vee  
            (F_h \gtrdot \recurseReducet(F_{t}))$}. 
    \item Reduce {\small $F$} via {\small $\gtrdot$} reductions 
        4 and 5 in unit graph expansion. 
  \end{enumerate}
\end{definition}  
\begin{proposition}[Reduction of negated 
    unit graph expansion]  
    \label{special_reduction_2} 
  Let {\small $F$} be a formula in unit graph 
  expansion. Then {\small $\neg F$} 
  reduces via the {\small $\neg$} and 
  {\small $\gtrdot$} reductions into 
  {\small $\recurseReducet(F)$} which is 
  in unit graph expansion. The reduction 
  is unique. 
\end{proposition}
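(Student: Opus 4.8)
The plan is to mirror the argument for Proposition~\ref{special_reduction}, adapting it to the conjunctive heads that unit graph chains now admit. I would split the work into an existence claim (that $\neg F$ does reduce to $\recurseReducet(F)$) and a uniqueness claim, leaning on Proposition~\ref{strict_reduction_induction} for termination throughout.

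First I would establish existence by tracing the exhaustive application of the $\neg$ reductions to $\neg F$, inducting on the composite measure of Proposition~\ref{strict_reduction_induction}, and checking that this trace reproduces $\recurseReducet(F)$ step for step. When $F$ is a literal, $\neg$ reduction~1 delivers $F^c$, matching step~2 of $\recurseReducet$. When the outermost connective is a non-chain $\wedge$ or $\vee$, $\neg$ reductions~2 and~3 push the negation inward by De Morgan and exchange the two connectives exactly as step~1 prescribes, after which the induction hypothesis applies to the two smaller negated subformulas. The only genuinely new case is a maximal chain $F_h \gtrdot F_t$ with conjunctive head $F_h = G_0 \wedge \dots \wedge G_k$: here $\neg$ reduction~4 yields $\neg G_0 \vee \dots \vee \neg G_k \vee (F_h \gtrdot \neg F_t)$, whose first block is $\neg F_h$ reduced by De Morgan and whose last disjunct carries $\neg F_t$; applying the induction hypothesis to each $\neg G_i$ and to $\neg F_t$ turns this into $\recurseReducet(F_h) \vee (F_h \gtrdot \recurseReducet(F_t))$, which is step~3. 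Distributing $F_h \gtrdot (\cdot)$ over the $\wedge$ and $\vee$ now exposed in the tail, via $\gtrdot$ reductions~4 and~5, realises step~4 and returns a formula in unit graph expansion.

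For uniqueness I would argue, as in Proposition~\ref{special_reduction}, that the rules able to fire during the reduction of $\neg F$ are confined to the four $\neg$ reductions and $\gtrdot$ reductions~4 and~5. The point is that, since $F$ is in unit graph expansion, every chain head in it already has the disjunction-free form $G_0 \wedge \dots \wedge G_k$, and $\neg$ reduction~4 copies this head verbatim; hence neither $\gtrdot$ reduction~3 nor either obj distribution rule ever acquires a redex. In addition, whenever a subformula $\neg F_x$ is present, $F_x$ is a fragment of $F$ that is itself in unit graph expansion and therefore irreducible, so the $\neg$ reduction applicable at each negation node is forced by the shape of $F_x$, and the negation-elimination phase is deterministic up to the order of independent nodes.

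I expect the main obstacle to be closing uniqueness in the presence of the conjunctive heads, which is exactly where this statement goes beyond Proposition~\ref{special_reduction} (whose heads were single members of $\mathcal{S}$, needing only $\gtrdot$ reduction~5). What I would need to verify is that $\gtrdot$ reductions~4 and~5 are confluent when acting on the negated tails produced by $\neg$ reduction~4: both rules merely distribute a fixed head $F_h \gtrdot (\cdot)$ over $\wedge$ and $\vee$, they create no fresh redexes for the $\neg$ reductions, and their only critical pairs --- nested $\wedge$/$\vee$ within a tail --- join after finitely many further distributions, so Newman's lemma together with the terminating measure of Proposition~\ref{strict_reduction_induction} yields a unique normal form for this phase. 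Combining the forced negation-elimination phase with the confluent distribution phase would then show that every reduction of $\neg F$ ends at the single formula $\recurseReducet(F)$, necessarily in unit graph expansion; the residual bookkeeping is routine and I would leave it to the reader.
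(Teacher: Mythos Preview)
Your proposal is correct and follows essentially the same approach the paper intends: the paper gives no explicit proof for this proposition, deferring instead to the remark that ``the proof approaches [are] similar to those that we saw in Section~3,'' i.e.\ to Proposition~\ref{special_reduction}. You carry out precisely that analogy, and you correctly flag the two places where the argument must be adapted --- the conjunctive heads handled by the new $\neg$~reduction~4, and the need for $\gtrdot$~reduction~4 in addition to~5 in the distribution phase (step~4 of $\recurseReducet$) --- which the paper leaves entirely implicit. Your confluence argument via Newman's lemma for the distribution phase is more explicit than anything in the paper's treatment of Proposition~\ref{special_reduction}, but it is sound and in the same spirit.
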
   
\begin{lemma}[Elementary complementation] 
  For any {\small $G_0 \gtrdot G_1 \gtrdot \dots 
      \gtrdot G_k$} 
  for {\small $k \in \mathbb{N}$} and 
  {\small $G_k \in \mathcal{A}$},\footnote{
      {\small $G_k$} is always
       an element of {\small $\mathcal{A}$} by the definition 
       of a unit graph chain.}
  if for a given valuation frame it holds that 
  {\small $[\tintFrame \models G_0 \gtrdot G_1 
      \gtrdot \dots \gtrdot G_k] = 1$}, 
  then it also holds that {\small 
      $[\tintFrame \models \recurseReducet(G_0 \gtrdot G_1 
      \gtrdot \dots \gtrdot G_k)] = 0$}; 
  or if it holds that  {\small $[\tintFrame \models \recurseReducet(G_0 \gtrdot G_1 
      \gtrdot \dots \gtrdot G_k)] = 1$}, 
  then it holds that {\small $[\tintFrame \models G_0 \gtrdot G_1 
      \gtrdot \dots \gtrdot G_k] = 0$}. These 
  two events are mutually exclusive.  
  \label{unit_chain_excluded_middle_2}
\end{lemma}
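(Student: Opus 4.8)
The plan is to mirror the proof of the earlier Lemma~\ref{unit_chain_excluded_middle}, adapting it to the richer structure of unit graph chains. The essential new feature is that the heads $G_0, \dots, G_{k-1}$ need no longer be literals but may themselves be conjunctions of literals and unit graph chains, and that \recurseReducet\ recurses into those heads. I would therefore prove the slightly more general statement that the claimed complementarity holds for every $G$ that is a literal or a unit graph chain, so that the recursive calls $\recurseReducet(F_h)$ and $\recurseReducet(F_t)$ on head and tail in step~3 of \recurseReducet\ fall under the induction hypothesis. The induction is on the formula size, which strictly decreases at every reduction by Proposition~\ref{strict_reduction_induction}, with the literal $G = a$ as base case: there $\recurseReducet(a) = a^c$, and since $[\tintFrame \models a] = \mathsf{I}(\epsilon, a)$ and $[\tintFrame \models a^c] = \mathsf{I}(\epsilon, a^c)$, the complement clause $[\mathsf{I}(t^*,a)=0] \leftrightarrow^{\dagger} [\mathsf{I}(t^*,a^c)=1]$ of Definition~\ref{second_interpretations} forces exactly one value to be $1$.

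First I would unfold the left-hand valuation. By the definitions of \textsf{map}, \textsf{compress} and the sequence clause of $\mathsf{J}$, the quantity $[\tintFrame \models G_0 \gtrdot \dots \gtrdot G_k]$ decomposes into a meta-conjunction of $\mathsf{J}$ applied to the successive prefixes of the flattened sequence; the unordered-set clause of $\mathsf{J}$ then pushes the conjunction through each conjunctive head, so that the whole value equals a $\bigwedge^{\dagger}$ of local interpretations $\mathsf{I}$ evaluated at the leaves of the chain being $1$. Hence $[\tintFrame \models G_0 \gtrdot \dots \gtrdot G_k] = 1$ precisely when every such leaf interpretation is $1$.

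Next I would unfold the right-hand valuation. Step~3 of \recurseReducet\ rewrites the whole chain (which is not a strict sub-chain of any other) with head $G_0$ and tail $G_1 \gtrdot \dots \gtrdot G_k$ into $\recurseReducet(G_0) \orMeta (G_0 \gtrdot \recurseReducet(G_1 \gtrdot \dots \gtrdot G_k))$; iterating this and applying $\gtrdot$ reductions~4 and~5 distributes $\gtrdot$ over the connectives and yields a disjunction whose disjuncts are the prefix-chains $G_0 \gtrdot \dots \gtrdot G_{i-1} \gtrdot \recurseReducet(G_i)$. By the strengthened induction hypothesis each $\recurseReducet(G_i)$ is the value-complement of $G_i$ at its position, and by De~Morgan this disjunction evaluates to $1$ precisely when at least one leaf interpretation along the chain is $0$. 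Combining the two unfoldings, the complement clause of $\mathsf{I}$ shows that $[\tintFrame \models G_0 \gtrdot \dots \gtrdot G_k] = 1$ and $[\tintFrame \models \recurseReducet(G_0 \gtrdot \dots \gtrdot G_k)] = 1$ can neither both hold nor both fail, which is the required mutual exclusivity.

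The hard part will be the bookkeeping of the De~Morgan duality through the nested recursion into conjunctive heads: I must verify that distributing $\gtrdot$ over the $\vee$ produced inside $\recurseReducet(G_0)$ (where a conjunctive head is turned into a disjunction by step~1) yields exactly the disjuncts dual to the conjuncts generated by the unordered-set clause of $\mathsf{J}$. Strengthening the induction to all unit graph chains, together with the strict decrease of the measure guaranteed by Proposition~\ref{strict_reduction_induction} and the uniqueness of the negation reduction from Proposition~\ref{special_reduction_2}, is what makes this bookkeeping go through cleanly; the remaining computation is routine and parallels Lemma~\ref{unit_chain_excluded_middle}.
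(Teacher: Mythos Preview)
Your proposal is correct and follows essentially the same approach as the paper: unfold $\recurseReducet$ into the disjunction of prefix-chains $G_0 \gtrdot \dots \gtrdot G_{i-1} \gtrdot \recurseReducet(G_i)$, recurse into each $G_i$ (which may itself be a conjunction of literals and unit graph chains), and bottom out at literals where the complement clause of $\mathsf{I}$ applies. The paper's proof is terser---it speaks of a ``recursive process'' terminating by finite size and finite branching rather than your explicit induction on formula size, and it leaves the leaf-level verification as ``co-induction left to readers''---but the argument structure is the same; your explicit strengthening of the induction hypothesis to all unit graph chains and your invocation of Proposition~\ref{strict_reduction_induction} for the well-foundedness simply make precise what the paper gestures at.
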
  
\begin{proof}     
    Let us abbreviate {\small $\recurseReducet$} by 
    {\small $\abbR$}. What we need to show for the first 
    obligation is 
    {\small $[\tintFrame \models 
        \abbR(G_0)] =  
        [\tintFrame \models G_0 \gtrdot \abbR(G_1)]
     = \dots = 
     [\tintFrame \models G_0 \gtrdot \dots \gtrdot 
     \abbR(G_k)] = 0$}. 
    The reasoning process is recursive on 
    each {\small $G_i$}, {\small $0 \le i \le k$} 
    within {\small $\abbR$}.    
    Since the formula size stays finite 
    and since each reduction incurs 
    finite branching, 
    there is an end to each recursion. In the end, 
    we will be showing that 
    {\small $[\tintFrame \models G'_0 \gtrdot 
        \dots G'_{j-1} \gtrdot \abbR(G'_j)] = 0$} 
    for  {\small $j \in \mathbb{N}$} and {\small $G'_j 
        \in \mathcal{A}$}, 
    whenever the pattern is encountered during the recursion. 
    For each such pattern, 
    we will have that {\small $[\tintFrame \models G'_0 \gtrdot 
        \dots G'_{j-1} \gtrdot G'_j] = 1$} (the co-induction 
    is left to readers; 
    note the property of a formula in unit graph expansion). 
    Then the result follows. 
\end{proof}  
\begin{lemma}[Elementary double negation] 
    Let {\small $G$} denote 
    {\small $G_0 \gtrdot G_1 \gtrdot \dots \gtrdot G_k$} 
    for {\small $k \in \mathbb{N}$} and 
    {\small $G_k \in \mathcal{A}$}. Then for 
    any valuation frame it holds that 
    {\small $[\tintFrame \models G] = 
        [\tintFrame \models \recurseReducet(\recurseReducet(G))]$}. 
    \label{double_negation_2} 
\end{lemma}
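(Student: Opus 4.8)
The plan is to mimic the proof of Lemma \ref{unit_double_negation} (Elementary double negation) from Section 3, adapting it to the graph reductions and to the valuation frames of Definition \ref{second_interpretations}. First I would unfold the inner occurrence of \recurseReducet. Writing $G_{<i}$ for the prefix $G_0 \gtrdot \cdots \gtrdot G_{i-1}$, step 3 of \recurseReducet together with the distributive $\gtrdot$ reductions 4 and 5 (applied in step 4) gives $\recurseReducet(G) = \recurseReducet(G_0) \vee (G_0 \gtrdot \recurseReducet(G_1)) \vee \cdots \vee (G_0 \gtrdot \cdots \gtrdot G_{k-1} \gtrdot G_k^c)$, exactly the shape of the purely-literal Section 3 computation, except that each head-factor $\recurseReducet(G_i)$ equals $G_i^c$ only when $G_i \in \mathcal{A}$ and is a genuine formula when $G_i$ is a conjunctive or nested unit graph chain. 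I would record this shape and note, by Proposition \ref{special_reduction_2}, that the reduction is unique and stays in unit graph expansion.

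Second, I would apply \recurseReducet a second time. Its outermost input is a disjunction that does not sit inside a chain, so step 1 turns the top-level $\vee$'s into $\wedge$'s while each disjunct is recursively negated; the result is a conjunctive normal form whose $i$-th conjunct is $\recurseReducet(G_{<i} \gtrdot \recurseReducet(G_i))$. I would then transform this CNF into disjunctive normal form and argue, exactly as in Lemma \ref{unit_double_negation}, that under any fixed valuation frame all but one of the resulting clauses are assigned $0$. The surviving clause is the meet $[\tintFrame \models G_0] \andMeta [\tintFrame \models G_0 \gtrdot G_1] \andMeta \cdots \andMeta [\tintFrame \models G_0 \gtrdot \cdots \gtrdot G_k]$. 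Finally I would invoke the prefix property of $\mathsf{J}$ in Definition \ref{second_interpretations} — a chain's global value is the $\wedge^{\dagger}$ of the local interpretations along it, so $[\tintFrame \models G] = 1$ forces every proper prefix to evaluate to $1$ — to collapse that meet to $[\tintFrame \models G_0 \gtrdot \cdots \gtrdot G_k] = [\tintFrame \models G]$.

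The main obstacle is the compound heads peculiar to this variant. Because a unit graph chain may carry a conjunctive head ($F_1 \wedge F_2 \gtrdot a$) or a nested sub-chain as head, $\recurseReducet(G_i)$ is no longer the single literal $G_i^c$, so the clean telescoping of the Section 3 proof does not apply verbatim and the count ``all but one clause vanish'' must be re-established in the presence of these extra factors. I would absorb this by running the recursion in the style of the proof of Lemma \ref{unit_chain_excluded_middle_2}: recurse on the structure of each $G_i$ inside \recurseReducet, reducing every pattern encountered to a base case $G'_0 \gtrdot \cdots \gtrdot G'_{j-1} \gtrdot \recurseReducet(G'_j)$ with $G'_j \in \mathcal{A}$, where the elementary complementation lemma already pins the value, and appeal to Proposition \ref{strict_reduction_induction} to guarantee termination. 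The delicate bookkeeping — showing clause by clause that the compound-head factors contribute nothing beyond the prefix meet under an arbitrary valuation frame — is where the real work lies; the rest is the routine Boolean manipulation inherited from Section 3.
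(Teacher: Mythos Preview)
Your plan is workable, but it diverges from the paper precisely at the step you flag as the hard one. You propose to carry the Section~3 strategy over wholesale: convert the CNF $\bigwedge_i \recurseReducet(G_{<i} \gtrdot \recurseReducet(G_i))$ to DNF and argue that all but one clause vanish. The paper explicitly rejects this route, remarking that the DNF translation is tedious here, and instead works directly on the conjunction. Its strategy is an iterative peeling: first establish $[\tintFrame \models \recurseReducet(\recurseReducet(G_0))] = [\tintFrame \models G_0]$ by recursion on the strictly smaller $G_0$; then, with $G_0$ sitting as a conjunct, Lemma~\ref{unit_chain_excluded_middle_2} kills the $\recurseReducet(G_0)$ disjunct inside every remaining conjunct $\recurseReducet(G_0 \gtrdot X) = \recurseReducet(G_0) \vee (G_0 \gtrdot \recurseReducet(X))$, leaving $G_0 \gtrdot \recurseReducet(X)$; the second conjunct is now $G_0 \gtrdot \recurseReducet(\recurseReducet(G_1))$, which by recursion equals $G_0 \gtrdot G_1$ and absorbs $G_0$; repeat down to $G_k \in \mathcal{A}$. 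The whole argument is thus reduced to the single obligation $[\tintFrame \models G_0 \gtrdot \dots \gtrdot G_{i-1} \gtrdot \recurseReducet(\recurseReducet(G_i))] = [\tintFrame \models G_0 \gtrdot \dots \gtrdot G_i]$, handled by structural recursion on $G_i$.

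What this buys: the paper never materialises the DNF, so the ``delicate bookkeeping'' you anticipate simply does not arise; each step is one application of complementation plus one recursive call. Your route would need the same recursion anyway (to identify the surviving clause you must already know $\recurseReducet(\recurseReducet(G_i)) \equiv G_i$ for each compound head), so the DNF detour adds work without adding leverage. Note also a small slip: your claimed surviving clause $[\tintFrame \models G_0] \andMeta \dots \andMeta [\tintFrame \models G_0 \gtrdot \dots \gtrdot G_k]$ presupposes that identification, so the recursion has to come \emph{before} the clause-vanishing argument, not after it as your write-up suggests.
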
 
\begin{proof}  
    Let us use an abbreviation 
    {\small $\abbR$} for 
    {\small $\recurseReducet$} for space. \\
    {\small $\abbR(\abbR(G))
        = \abbR(\abbR(G_0) \vee 
        (G_0 \gtrdot \abbR(G_1)) 
        \vee \dots \vee (G_0 \gtrdot G_1 \gtrdot \dots \gtrdot 
        G_{k-1} \gtrdot \abbR(G_k))) 
        = \abbR(\abbR(G_0)) \wedge 
        \abbR(G_0 \gtrdot \abbR(G_1)) 
        \wedge \dots \wedge \abbR(G_0 \gtrdot G_1 \gtrdot 
        \dots \gtrdot G_{k-1} \gtrdot \abbR(G_k))$}.   
    Since translation to disjunctive normal form 
    is tedius, let us solve the problem 
    directly here. The strategy is that we first show
    {\small $[\tintFrame \models \abbR(\abbR(G_0))] = 
        [\tintFrame \models G_0]$}, which 
    reduces (via Lemma \ref{unit_chain_excluded_middle_2}) 
    the right hand side of 
    the equation into 
    {\small $[\tintFrame \models G_0 \wedge (G_0 \gtrdot 
        \abbR(\abbR(G_1))) \wedge 
        (G_0 \gtrdot \abbR(G_1 \gtrdot \abbR(G_2))) 
        \wedge \dots \wedge 
        (G_0 \gtrdot \abbR(G_1 \gtrdot \dots \gtrdot 
        G_{k-1} \gtrdot \abbR(G_k)))] = [\tintFrame \models (G_0 \gtrdot 
        \abbR(\abbR(G_1))) \wedge 
        (G_0 \gtrdot \abbR(G_1 \gtrdot \abbR(G_2))) 
        \wedge \dots \wedge 
        (G_0 \gtrdot \abbR(G_1 \gtrdot \dots \gtrdot 
        G_{k-1} \gtrdot \abbR(G_k)))]$}; 
    we then show {\small $[\tintFrame \models 
        G_0 \gtrdot \abbR(\abbR(G_1))] = [\tintFrame \models 
        G_0 \gtrdot G_1]$} to reduce again; 
    and so on and so forth. In the end, we arrive 
    at the required result. 
    Therefore it suffices to show that 
    {\small $[\tintFrame \models 
        G_0 \gtrdot \dots \gtrdot G_{i-1} 
        \gtrdot \abbR(\abbR(G_i))] = [\tintFrame \models 
        G_0 \gtrdot \dots \gtrdot G_{i-1} 
        \gtrdot G_i], 0 \le i \le k$}.   
    But each {\small $G_i$} in {\small $\abbR(\abbR(G_i))$} 
    is strictly smaller in the number of 
    symbols appearing within 
    than {\small $G$}. So the reasoning is recursive. Because 
    every formula is of a finite size and 
    the reduction rules induce only finite branchings, 
    it follows that every recursion is also finite, 
    reaching at the obligation pattern of 
    {\small $[\tintFrame \models 
        G'_0 \gtrdot \dots \gtrdot \abbR(\abbR(G'_j))]
        = [\tintFrame \models  G'_0 \gtrdot \dots \gtrdot G'_j]$} 
    for {\small $G'_j \in \mathcal{A}$}. 
    But these equations hold by the way the local/global 
    interpretations 
    are defined. 
\end{proof} 
\begin{theorem} 
    \label{boolean_algebra_2} 
  Denote by {\small $X$} the set of 
  the expressions comprising 
  all {\small $[\tintFrame \models G]$} 
  for a formula {\small $G$} in unit graph expansion.  
  Then for every valuation frame, 
  {\small $(X, \recurseReducet, \dot{\top}, 
      \dot{\bot}, \andMeta, \orMeta)$} 
  with suppositional
  nullary connectives: {\small $\dot{\top}$} 
  and {\small $\dot{\bot}$} 
defines a Boolean algebra. 
  \end{theorem}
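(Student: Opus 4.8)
The plan is to follow exactly the route taken for Theorem \ref{theorem_1}, the only essential difference being that the nullary constants $\dot{\top}$ and $\dot{\bot}$ are now supplied suppositionally, as the two values $1$ and $0$, since $\top$ and $\bot$ were deliberately removed from the language of this variant. First I would invoke Lemma \ref{unit_chain_excluded_middle_2} to observe that every atomic expression $[\tintFrame \models G]$, for a literal or unit graph chain $G$, is assigned exactly one value in $\{0,1\}$ under any valuation frame, and that this value and the value of its $\recurseReducet$-image are always opposite. Consequently each element of the carrier $X$ evaluates into the two-element set $\{0,1\}$, and the structure becomes a candidate two-valued algebra.

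Given this, the associative, commutative, distributive, idempotence and absorption laws for $\andMeta$ and $\orMeta$ hold on $X$ for free: because $\andMeta$ and $\orMeta$ are by definition the ordinary Boolean meta-connectives on $\{0,1\}$, every such identity is merely the corresponding identity of the two-element Boolean algebra, verified on the one-and-only value carried by each atomic expression. This is precisely the reasoning used in the proofs of Proposition \ref{associativity_commutativity_distributivity} and Proposition \ref{idempotence_absorption}, transported verbatim to the graph setting. The identity laws $F \andMeta \dot{\top} = F$ and $F \orMeta \dot{\bot} = F$ are then immediate from the stipulations $\dot{\top} = 1$ and $\dot{\bot} = 0$.

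For the complementation and double-negation laws I would first lift Lemma \ref{unit_chain_excluded_middle_2} from atoms to an arbitrary $F$ in unit graph expansion. Using Proposition \ref{special_reduction_2} together with the reduction rules ($\gtrdot$ reductions $4$ and $5$ and the two object-distributions), any such $F$ admits a disjunctive normal form $\bigvee_i \bigwedge_j G_{ij}$ over the atoms $G_{ij}$. Then, exactly as in Proposition \ref{excluded_middle}, I would push $\recurseReducet$ through the normal form, apply the atom-level complementation of Lemma \ref{unit_chain_excluded_middle_2} clause by clause, and conclude that $[\tintFrame \models F \orMeta \recurseReducet(F)] = 1 = \dot{\top}$ and $[\tintFrame \models F \andMeta \recurseReducet(F)] = 0 = \dot{\bot}$. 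Double negation $[\tintFrame \models F] = [\tintFrame \models \recurseReducet(\recurseReducet(F))]$ follows by the same normal-form argument combined with the atom-level Lemma \ref{double_negation_2}.

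The main obstacle I anticipate is not any individual axiom but the normal-form step itself: unlike the unit-chain case of Section 3, the reductions here include $\gtrdot$ reduction $3$ with its power-set expansion and the two object-distributions, so I must ensure that exhaustive reduction genuinely delivers a disjunctive normal form built purely from atoms $G$ (literals and unit graph chains) and that it terminates. Termination is covered by Proposition \ref{strict_reduction_induction}; what remains delicate is confirming that after reduction no residual $\gtrdot$-nesting survives outside a unit graph chain, so that every conjunct and disjunct is genuinely an element of the carrier over which Lemmas \ref{unit_chain_excluded_middle_2} and \ref{double_negation_2} apply. Once that normalisation is secured, the Boolean-algebra verification is entirely mechanical, and the theorem follows as a corollary of the quoted lemmas and propositions, just as Theorem \ref{theorem_1} did.
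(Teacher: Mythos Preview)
Your proposal is correct and follows essentially the same approach as the paper: the paper's own proof is a one-line appeal to Lemmas \ref{unit_chain_excluded_middle_2} and \ref{double_negation_2} together with the Section~3 template (Propositions \ref{associativity_commutativity_distributivity}, \ref{idempotence_absorption}, \ref{excluded_middle}), which is exactly what you unfold. Your closing worry about the normal-form step is slightly misplaced, however: once $F$ is already in unit graph expansion, the atoms are literals and unit graph chains and no $\gtrdot$-rules are needed any longer; obtaining a disjunctive normal form then requires only the ordinary $\wedge/\vee$ distributivity on those atoms, so $\gtrdot$ reduction~3 and the object-distributions play no role at that stage.
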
    
\begin{proof}   
  It suffices to show anihilation, identity, associativity, 
  commutativity, distributivity, idempotence, absorption, 
  complementation and double negation. Straightforward with 
  Lemma \ref{unit_chain_excluded_middle_2}, 
  Lemma \ref{double_negation_2}, and by following 
  the approaches taken in Section 3. 
\end{proof}  
The insertion of the suppositional connectives 
into the theorem is inessential, since we 
could take it for granted that we are considering {\small $\mathfrak{F}$} 
minus {\small $\top$} and {\small $\bot$} 
plus {\small $\dot{\top}$} and {\small $\dot{\bot}$}, 
except that we never make use of
{\small $\dot{\top}$} or {\small $\dot{\bot}$} 
in an expression. For the following results, 
let us enforce that {\small $\mathcal{G}(F)$} denote 
the set of formulas in unit graph expansion 
that {\small $F$} without the occurrences of {\small $\top$} 
and {\small $\bot$} reduce into. 
\hide{ 
\begin{lemma}[Associativity and commutativity] 
    \label{associative_and_commutative} 
    Assumed below are pairs of formulas in which
    {\small $\top$} and {\small $\bot$} do not occur. 
    {\small $F'$} differs from {\small $F$} 
    only by the shown sub-formulas, \emph{i.e.} 
    {\small $F'$} derives from {\small $F$} 
    by replacing the shown sub-formula 
    for {\small $F'$} 
    with the shown sub-formula for {\small $F$} and 
    vice versa. Then for each pair {\small $(F, F')$} below, 
    it holds for every valuation frame that 
    {\small $[\tintFrame \models F_1] = 
        [\tintFrame \models F_2]$} for all 
    {\small $F_1 \in \mathcal{G}(F)$} 
    and for all {\small $F_2 \in \mathcal{G}(F')$}. 
    {\small 
  \begin{eqnarray}\nonumber 
    F[F_a \vee F_b] &,& 
    F'[F_b \vee F_a]\\\nonumber
    F[F_a \vee (F_b \vee F_c)] 
    &,& 
    F'[(F_a \vee F_b) \vee F_c]\\\nonumber 
    F[F_a \wedge F_b] &,& 
    F'[F_b \wedge F_a]\\\nonumber
    F[F_a \wedge (F_b \wedge F_c)] &,& 
    F'[(F_a \wedge F_b) \wedge F_c] 
  \end{eqnarray}  
  }
\end{lemma}   
\begin{proof} 
  By simultaneous induction on the size 
  of formulas that are not a strict sub-formula 
  of other formulas and a sub-induction 
  on the number of reduction steps. 
  Consider one direction of showing that to each reduction
  on {\small $F$} corresponds reduction(s) on {\small $F'$}. 
  \begin{enumerate} 
      \item The first pair: 
          \begin{enumerate} 
              \item If a reduction takes place on 
                  a sub-formula which neither is a sub-formula 
                  of the shown sub-formula 
                  nor takes as its sub-formula 
                  the shown sub-formula, then we reduce 
                  the same sub-formula in {\small $F'$}. 
                  Induction hypothesis (note that 
                  the number of reduction steps 
                  is that of {\small $F$} in this direction). 
              \item If it takes place on a sub-formula 
                  of {\small $F_a$} or {\small $F_b$}, 
                  then we reduce the same 
                  sub-formula in {\small $F'$}. 
                  Induction hypothesis.  
              \item 
          \end{enumerate} 
  \end{enumerate}
\end{proof}  
}
  \hide{ 
  \begin{lemma}[Step preserving reduction] 
    If {\small $F[F_a \gtrdot F_b \wedge F_c]$} 
    with no occurrences of {\small $\neg, \top$} 
    and {\small $\bot$} 
    reduces in {\small $k + 1$}, {\small $k \in \mathbb{N}$}, reduction steps 
    into a formula in unit graph expansion, 
    {\small $F'[(F_a \gtrdot F_b) \wedge (F_a 
        \gtrdot F_c)]$} 
    also does in less than or equal to 
    {\small $k + 1$} steps. 
    Likewise, 
    if {\small $F[F_a \gtrdot F_b \vee F_c]$} 
    with no occurrences of {\small $\neg, \top$} 
    and {\small $\bot$} 
    reduces in {\small $k + 1$} reduction steps 
    into a formula in unit graph expansion, 
    {\small $F'[(F_a \gtrdot F_b) \vee 
        (F_a \gtrdot F_c)]$} 
    also does in less than or equal to 
    {\small $k + 1$} steps.  
    It is assumed that {\small $F$} and 
    {\small $F'$} differ only in 
    the shown sub-formulas. 
    \label{inversion_lemma} 
  \end{lemma}  
  \begin{proof} 
     By induction on the number of reduction steps 
     on {\small $F$}. If it is 1, 
     then the reduction rule must be {\small $\gtrdot$} 
     reduction 4, or 5, respectively; and 
     {\small $F \leadsto F'$}. Vacuous. 
     For inductive cases for the first case, assume that 
     the current lemma holds true for 
     the number of reduction steps of 
     up to {\small $k+1$}. We then show that 
     it still holds true for all the 
     reductions of less than or equal to 
     {\small $k + 2$} steps. Consider 
     what rule applied to {\small $F$} initially. 
     \begin{enumerate} 
         \item {\small $\gtrdot$} reduction 3:  
             \begin{enumerate} 
                 \item If it applied to a sub-formula 
                     of {\small $F[F_a \gtrdot F_b \wedge F_c]$} which is neither 
                     a sub-formula of the shown sub-formula, 
                     nor takes as its sub-formula 
                     the shown sub-formula, 
                     then we apply induction hypothesis 
                     at the next reduction step (which 
                     is strictly shorter 
                     than {\small $k+2$} steps). 
                 \item If it applied to a sub-formula 
                     of {\small $F_a$}, then 
                     we have; 
                     {\small $F[F_a[G_0 \vee 
                         \dots \vee G_{i+1} \gtrdot F_e] 
                         \gtrdot F_b \wedge F_c] 
                         \leadsto 
                         F_p[F_q[\bigvee_{I \in \mathcal{I}(i+1)}
                         (\bigwedge_{j \in I} G_j \gtrdot 
                         F_e)] \gtrdot F_b \wedge F_c]$}. 
                     Induction hypothesis. 
                 \item Similar if it applied to a sub-formula 
                     of {\small $F_b$} or {\small $F_c$}. 
                 \item If it applied to {\small $F$} 
                     such that we have; 
                     {\small $F[G_0[F_a \gtrdot F_b \wedge F_c] 
                         \vee \dots \vee G_{i+1} \gtrdot 
                         F_e] \leadsto 
                         F_p[\bigvee_{I \in \mathcal{I}(i+1)} 
                         (\bigwedge_{j \in I} G_j \gtrdot 
                         F_e)]$}, then 
                     induction hypothesis. 
                 \item  If it applied to {\small $F$} 
                     such that we have; 
                     {\small $F[G_0 \vee \dots G_{i_1} 
                         \gtrdot F_b \wedge F_c] 
                         \leadsto F_p[\bigvee_{I \in \mathcal{I}(i+1)}(\bigwedge_{j \in I}G_j \gtrdot F_b \wedge F_c)]$}, 
                     then induction hypothesis.  
                 \item Any other cases: Similar. 
             \end{enumerate} 
         \item All the other reductions: similar.  
     \end{enumerate}  
     Inductive cases for the second case are similar. 
  \end{proof}  
  } 
\begin{theorem}[Bisimulation]  
    \label{bisimulation2}
  Assumed below are pairs of formulas 
  in which {\small $\top$} 
  and {\small $\bot$} do not occur. 
  {\small $F'$} differs from {\small $F$} only by
  the shown sub-formulas, \emph{i.e.} {\small $F'$} 
  derives from {\small $F$} by replacing the shown sub-formula 
  for {\small $F'$} 
  with the shown sub-formula for {\small $F$} and vice versa. 
    Then for each pair  
  {\small $(F, F')$} 
  below, it holds for every valuation frame 
  that 
  {\small $[\intFrame \models F_1] = [\intFrame \models F_2]$} for 
  all {\small $F_1 \in \mathcal{G}(F)$} and 
  for all {\small $F_2 \in \mathcal{G}(F')$}. 
  {\small 
  \begin{eqnarray}\nonumber 
      F[G_0 \vee \dots \vee G_{k+1} \gtrdot F_c] 
    &,& F'[\bigvee_{I \in \mathcal{I}(k+1)}(\bigwedge_{j \in I}
    G_j \gtrdot F_c)]\\\nonumber 
    F[F_a \gtrdot F_b \wedge F_c] &,& F'[(F_a \gtrdot F_b)
    \wedge (F_a \gtrdot F_c)]\\\nonumber 
    F[F_a \gtrdot F_b \vee F_c] &,& F'[(F_a \gtrdot F_b)
    \vee (F_a \gtrdot F_c)]\\\nonumber  
    F[F_a \wedge (F_b \vee F_c) \gtrdot F_d] &,&
    F'[(F_a \wedge F_b) \vee (F_a \wedge F_c) \gtrdot F_d 
    ]\\\nonumber 
    F[(F_a \vee F_b) \wedge F_c \gtrdot F_d] &,& 
    F'[(F_a \wedge F_c) \vee (F_b \wedge F_c) \gtrdot F_d]  
    \\\nonumber  
F[\neg G] 
      &,& F'[\recurseReducet(G)]\\\nonumber  
F[\neg (F_a \wedge F_b)] &,& F'[\neg F_a \vee \neg F_b]\\\nonumber
    F[\neg (F_a \vee F_b)] &,& F'[\neg F_a \wedge \neg F_b]\\\nonumber 
    F[\neg (G_0 \wedge \dots G_k \gtrdot F_a)] 
    &,& 
    F'[\neg G_0 \vee \dots \vee \neg G_k \vee 
    (G_0 \wedge \dots \wedge G_k \gtrdot \neg F_a)]\\\nonumber
    F[F_a \vee F_a] &,& F'[F_a]\\\nonumber
    F[F_a \wedge F_a] &,& F'[F_a]\\\nonumber
    F[F_a \vee F_b] &,& 
    F'[F_b \vee F_a]\\\nonumber
    F[F_a \vee (F_b \vee F_c)] 
    &,& 
    F'[(F_a \vee F_b) \vee F_c]\\\nonumber
    F[F_a \wedge F_b] &,& F'[F_b \wedge F_a]\\\nonumber
    F[F_a \wedge (F_b \wedge F_c)] &,& 
    F'[(F_a \wedge F_b) \wedge F_c]
  \end{eqnarray}  
  }
  \end{theorem}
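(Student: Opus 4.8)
The plan is to mirror the proof of Theorem~\ref{bisimulation}, replacing unit chain expansion by unit graph expansion throughout and quoting the graph-setting analogues proved earlier in this subsection. I would argue by simultaneous induction along the composite measure of Proposition~\ref{strict_reduction_induction}: the formula size as the principal measure, the inverse of (number of $\neg$ $+\,1$) as the first sub-measure, the inverse of (number of $\gtrdot$ $+\,1$) as the second, and the inverse of (number of $\wedge$ $+\,1$) as the third. By that proposition this measure strictly decreases at every reduction and is left unchanged by the associativity and commutativity of $\wedge$ and $\vee$, which is precisely what licenses the ``without loss of generality'' replacements needed below. As in Theorem~\ref{bisimulation}, the heart of the argument is to establish $\mathcal{G}(F)=\mathcal{G}(F')$ by a bisimulation between the two reduction trees; the conclusion is then immediate, since a formula can reach unit graph expansion only through the listed $\neg$, $\gtrdot$ and obj-distribution reductions, so $\mathcal{G}(F)=\mathcal{G}(F_x)$ for some $F_x$ already in unit graph expansion, and Theorem~\ref{boolean_algebra_2} forces a single value in $\{0,1\}$ on $[\tintFrame\models F_x]$ for every valuation frame.

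One direction of the bisimulation is free: because each displayed pair is itself one of the reduction rules, I can first reduce $F$ into $F'$ and thereafter reduce the two formulas in lockstep. For the converse direction I would do, for each pair, a case analysis on the rule first applied to $F$. A reduction disjoint from the displayed subformula, or confined to one of $F_a,F_b,F_c,\dots$, is copied verbatim onto $F'$ (duplicated into every occurrence when that subformula appears more than once on the $F'$ side) and closed by the induction hypothesis; a reduction that is exactly the displayed rule turns $F$ into $F'$ and leaves nothing further to simulate; and a reduction on a proper superformula of the displayed subformula is pushed through as in case~1(e) of Theorem~\ref{bisimulation}. The sixth pair, $\neg G$ versus $\recurseReducet(G)$, is closed directly by Proposition~\ref{special_reduction_2}, and the seventh, eighth, tenth through fifteenth pairs are routine in the same fashion.

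The main obstacle, and where the graph setting genuinely departs from Section~3, is the first pair together with the object-distribution pairs (the fourth and fifth). Here $\gtrdot$ reduction~3 expands a $\vee$ of unit graph chains into the conjunction-indexed disjunction over the power set $\mathcal{I}(k+1)$, and it may be triggered under any associative grouping of the $\vee$, while obj distributions~1 and~2 rewrite the \emph{head} of a chain; the two can fire in either order and yield differently grouped results that must be reconciled. I would treat this exactly as the third pair of Theorem~\ref{bisimulation} was treated: when a reduction inside some $G_j$ (or inside $F_a,\dots,F_d$) desynchronises the two sides, I invoke the simultaneous induction together with Proposition~\ref{strict_reduction_induction} to replace the offending subformula by its counterpart without loss of generality---legitimate precisely because the composite measure strictly drops---and then realign the $\mathcal{I}(k+1)$-indexed clauses using the commutativity, associativity and idempotence covered by pairs~10 through~15. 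The ninth pair is the remaining delicate case: when $\neg$ reduction~4 fires on a conjunctively headed chain $G_0\wedge\dots\wedge G_k\gtrdot F_a$, I would follow the same template, successively rewriting $\neg G_0\vee\dots\vee\neg G_k\vee(G_0\wedge\dots\wedge G_k\gtrdot\neg F_a)$ into the shape produced on the $F'$ side, each rewrite strictly decreasing the measure so that the induction hypothesis applies at every step. I expect the bookkeeping of these $\mathcal{I}(k+1)$-indexed realignments to be the most error-prone part of a full write-up, even though each individual step is justified by the strict decrease of the measure.
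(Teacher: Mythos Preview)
Your overall plan matches the paper's: the same composite induction measure from Proposition~\ref{strict_reduction_induction}, the same bisimulation establishing $\mathcal{G}(F)=\mathcal{G}(F')$, and the same conclusion via Theorem~\ref{boolean_algebra_2}. The case split you describe for the ``hard'' direction is also the one the paper uses.

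There is, however, a concrete gap. Your claim that ``each displayed pair is itself one of the reduction rules'' is false for pairs 10--15 (idempotence, commutativity, associativity) and is not a single step for pair~6. Consequently the ``free'' direction is \emph{not} free for pairs 10 and 11, and this is exactly the case the paper singles out. The real difficulty is the idempotence pair $F[F_a\vee F_a]$ versus $F'[F_a]$ when $F_a$ is a unit graph chain $G$ sitting inside a longer head $G_0\vee\dots\vee(G\vee G)\vee\dots\vee G_{k+1}\gtrdot F_c$: firing $\gtrdot$~reduction~3 on $F$ produces the $\mathcal{I}(k{+}2)$-indexed expansion with two copies of $G$, while on $F'$ one gets the $\mathcal{I}(k{+}1)$-indexed expansion with one copy. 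These are syntactically different families of clauses, and the paper explicitly argues that the $F$-side result can be grouped into three blocks (zero, one, or two occurrences of $G$) and then sequentially collapsed onto the $F'$-side result by repeated applications of the induction hypothesis for idempotence. You invoke pairs 10--15 only as \emph{tools} in your treatment of pairs 1, 4, 5, but you have not shown that pairs 10--11 themselves survive this interaction with $\gtrdot$~reduction~3; dismissing them as ``routine'' is precisely where your argument is incomplete.

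A smaller point: for pair~6 the paper uses more than Proposition~\ref{special_reduction_2}. The additional observation is that $\gtrdot$~reduction~3, by its side condition that the head be a $\vee$ of unit graph chains, is \emph{blocked} on any surrounding context until $\neg G$ has been fully reduced to $\recurseReducet(G)$. Without that blocking argument, uniqueness of the reduction of $\neg G$ alone does not rule out an interleaving with a rule fired above it.
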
 
\begin{proof}  
    Similar in approach to the proof of 
    Theorem 2, the proof following by 
    simultaneous composite induction 
    by the induction measure 
    as in Proposition 
   \ref{strict_reduction_induction}, which strictly 
   decreases at each reduction. 
   We first establish that {\small $
       \mathcal{G}(F_1) = \mathcal{G}(F_2)$}. 
   One way, to show that to each 
   reduction on {\small $F'$} corresponds 
   reduction(s) on {\small $F$}, is straightforward, 
   for we can choose to 
   reduce {\small $F$} into {\small $F'$} in most of the 
   cases, 
   thereafter synchronizing both of the reductions. 
   We show one sub-proof for the 10th pair, however, to 
   be very safe. If {\small $\gtrdot$} reduction 3 
           applies ({\small $F_a = G$} for some unit graph 
           chain {\small $G$}), then we have; 
           {\small $F[G_0 \vee G_1 \vee \dots \vee 
               (G \vee G) \vee \dots \vee
               G_{k+1} \gtrdot F_a] 
               \mapsto 
               F_p[\bigwedge_{I \in \mathcal{I}(k+1)}(
               \bigvee_{j \in I} G_j \gtrdot F_a)]$}. 
           Since association by {\small $\vee$} 
           on the constituting unit graph chains 
           is freely chosen, 
           let us assume that 
{\small $\bigwedge_{I \in \mathcal{I}(k+1)}(
    \bigvee_{j \in I} G_j \gtrdot F_a)$} 
is ordered such that, 
from the left to the right, 
the number of the occurrences of {\small $G$} 
either stays the same, or else strictly increases. 
Then, generally speaking, there are three 
groups of sub-formulas: those that 
do not have the occurrences of the {\small $G$}; 
those in which the {\small $G$} occurs once; 
and those in which there are two occurrecnes of {\small $G$}. 
Now, for {\small $F'$}, applying the same reduction rule, 
we gain: 
           {\small $F'[G_0 \vee G_1 \vee \dots \vee 
               G\vee \dots \vee
               G_{k} \gtrdot F_a] 
               \mapsto 
               F'_p[\bigwedge_{I \in \mathcal{I}(k)}(
               \bigvee_{j \in I} G_j \gtrdot F_a)]$}. 
           It is straightforward to see that 
           {\small $F'_p$} involves 
           every constituent from the first group 
           of {\small $F_p$} (if the group 
           has any constituent at all);
          and half of the constituents from the second 
          group. So we can sequentially 
          apply induction hypothesis on {\small $F_p$} 
          to match up with {\small $F'_p$}. 
          Induction hypothesis. \\
          \indent Into the 
   other way to show that to each 
   reduction on {\small $F$} corresponds 
   reduction(s) on {\small $F'$}: 
   \begin{enumerate}
     \item The first pair:  
       \begin{enumerate} 
	 \item 
       If a reduction takes place on a sub-formula which 
       neither is a sub-formula of the shown sub-formula 
       nor takes as its sub-formula the shown sub-formula, 
       then we reduce the same sub-formula in {\small $F'$}.   
       Induction hypothesis (note that the number of 
reduction steps is that of {\small $F$} into this 
direction).  
     \item If it takes place on a sub-formula 
       of {\small $F_c$} then we reduce the same sub-formula 
       of both occurrences of {\small $F_c$} in {\small $F'$}.  
Induction hypothesis. 
     \item If {\small $\gtrdot$} reduction 3 takes place on 
       {\small $F$} such that we have; 
       {\small $F[G_0 \vee \dots \vee G_{k+1} \gtrdot F_c] \leadsto 
           F_x[\bigvee_{I \in \mathcal{I}(k+1)}(\bigwedge_{j \in 
               I} G_j \gtrdot F_c)]$}, 
       {\small $F$} and {\small $F_x$} differ only by 
       the shown sub-formulas. 
       Do nothing on {\small $F'$}, and {\small $F_x = 
       F'$}. Vacuous thereafter. 
     \item If a reduction takes place on a sub-formula {\small $F_p$} of 
       {\small $F$} in which the shown sub-formula of 
       {\small $F$} occurs as a strict sub-formula 
       ({\small $F[G_0 \vee \dots \vee G_{k+1} \gtrdot F_c] 
           = F[F_p[G_0 \vee \dots \vee G_{k+1}
           \gtrdot F_c]]$}), then 
       we have {\small $F[F_p[G_0 \vee \dots \vee G_{k+1} 
           \gtrdot F_c]] 
           \leadsto F_x[F_q[G_0 \vee \dots \vee G_{k+1} 
           \gtrdot F_c ]]$}. 
       But we have 
       {\small $F' = F'[F_p'[\bigvee_{I \in \mathcal{I}(k)}(\bigwedge_{j \in I} G_j \gtrdot F_c)]] 
       $}. Therefore we apply the same reduction on 
       {\small $F_p'$} to gain; \\
       {\small $F'[F_p'[\bigvee_{I \in \mathcal{I}(k)}(\bigwedge_{j \in I} G_j \gtrdot F_c) 
       ]] \leadsto F'_x[F_{p'}'[\bigvee_{I \in \mathcal{I}(k)}(\bigwedge_{j \in I} G_j \gtrdot F_c) 
       ]]$}. Induction hypothesis.     
   \hide{ 
   \item If obj associativity or obj commutativity 
       applies on a sub-formula of 
       the shown {\small $G_0 \vee \dots \vee G_{k+1}$}, 
       then arrange formulas in {\small $F'$} accordingly. 
       This change does not alter the fact that 
       {\small $F'$} is in the form:\linebreak
       {\small $F'[\bigwedge_{I \in \mathcal{I}(k+1)}(\bigvee_{j \in 
                   I} G_j \gtrdot F_c)]$}.  
       }
   \end{enumerate}  
 \item The second and the third: Straightforward. 
 \item The fourth pair:  
     \begin{enumerate} 
         \item If obj distribution 1 takes place 
             on {\small $F$} such that we have; 
             {\small $F[F_a \wedge (F_b \vee F_c) 
                 \gtrdot F_d] \leadsto 
                 F_x[(F_a \wedge F_b) \vee (F_a \wedge F_c)
                 \gtrdot F_d] 
                 $}, 
             then do nothing on {\small $F'$}; 
             and we have {\small $F_x = F'$}. 
             Vacuous thereafter.  
             \hide{ 
         \item If obj distribution 1 takes place 
             on {\small $F$} such that we have; 
             {\small $F[F_a \wedge (F_b \vee F_{\beta} \vee 
                 F_{\gamma}) \gtrdot F_d] 
                 \leadsto 
                 F_x[(F_a \wedge (F_b \vee F_{\beta})
                 \gtrdot F_d) \vee 
                 (F_a \wedge F_{\gamma} \gtrdot 
                 F_d)]$} for {\small $F_c = F_{\beta} \vee F_{\gamma}$}, then apply obj distribution 1 on 
             {\small $F'$} such that we have; 
             {\small $F'[(F_a \wedge F_b \gtrdot F_d) 
                 \vee (F_a \wedge (F_{\beta} \vee F_{\gamma}) \gtrdot F_d)]
                 \leadsto 
                 F'_x[(F_a \wedge F_b \gtrdot F_d) 
                 \vee (F_a \wedge F_{\beta} \gtrdot F_d) 
                 \vee (F_a \wedge F_{\gamma} \gtrdot F_d)]$}. 
             Now, we have 
             {\small $F_x = 
                 F_p[F_a \wedge (F_b \vee F_{\beta}) 
                 \gtrdot F_d]$}; 
             and {\small $F'_x = F'_p[(F_a \wedge F_b \gtrdot F_d)
                 \vee (F_a \wedge F_{\beta} \gtrdot F_d)]$} 
             such that {\small $F_p$} and {\small $F'_p$} 
             differ only in the shown sub-formulas. Apply 
             induction hypothesis on them.   
         }
         \item If obj distribution 2 takes place 
             on {\small $F$} such that we have; 
             {\small $F[(F_{\beta} \vee F_{\gamma}) 
                 \wedge (F_b \vee F_c) \gtrdot F_d] 
                 \leadsto 
                 F_x[(F_{\beta} \wedge (F_b \vee F_c)) 
                 \vee (F_{\gamma} \wedge (F_b \vee F_c)) 
                 \gtrdot 
                 F_d]$} for 
             {\small $F_a = F_{\beta} \vee F_{\gamma}$}, then 
             by induction hypothesis,  
             it does not cost generality 
             if we replace it 
             with {\small $F_p[((F_{\beta} \wedge F_b) \vee  
                 (F_{\beta} \wedge F_c)) \vee 
                 ((F_{\gamma} \wedge F_b) \vee 
                 (F_{\gamma} \wedge F_c)) \gtrdot F_d]$} 
            that differs from {\small $F_x$} 
            only by the shown sub-formulas.   
            Meanwhile, we derive; 
            {\small $F'[
                ((F_{\beta} \vee F_{\gamma}) \wedge 
                F_b) \vee ((F_{\beta} \vee F_{\gamma}) 
                \wedge F_c) \gtrdot F_d] \leadsto 
                F'_x[((F_{\beta} \wedge F_b) \vee 
                (F_{\gamma} \wedge F_b)) \vee 
                ((F_{\beta} \wedge F_c) \vee (F_{\gamma} 
                \wedge F_c)) \gtrdot F_d]$}. 
            Apply induction hypothesis on 
            {\small $F_x$} to arrive at {\small $F'_x$}. 
            Vacuous thereafter. 
\item The other cases: Straightforward. 
     \end{enumerate}
 \item The fifth pair: Similar.  
 \item The sixth pair: Straightforward, since 
     a unit graph chain cannot be further reduced, 
     since the reduction of {\small $\neg G$} 
     is unique, and since, by the definition 
     of {\small $\gtrdot$} reduction 3, 
     it cannot apply unless {\small $\neg G$}  
     has been fully reduced to {\small $\recurseReducet(G)$}.  
 \item The rest: Cf. the proof of Theorem 2. 
 \end{enumerate} 
 By the result of the above bisimulation, 
 we now have {\small $\mathcal{G}(F) = \mathcal{G}(F')$}. 
 However, 
 it takes only those 4 {\small $\neg$} reductions, 
 3 {\small $\gtrdot$} reductions, 
 obj distribution 1 and obj distribution 2 to derive 
 a formula in unit graph expansion; hence 
 we in fact have {\small $\mathcal{G}(F) = \mathcal{G}(F_x)$} 
 for some formula {\small $F_x$} in unit graph 
 expansion. But then by Theorem \ref{boolean_algebra_2}, there could be 
 only one value out of {\small $\{0,1\}$} assigned 
 to {\small $[\tintFrame \models F_x]$}, as required. 
\end{proof}  
\begin{corollary}[Normalisation]
    Given a formula {\small $F$} 
    with no occurrences of {\small $\top$} and {\small $\bot$},
    denote the set of formulas in unit graph expansion 
    that it can reduce into by {\small $\mathcal{G}_1$}. 
    Then it holds 
    for every valuation frame 
    either that {\small $[\tintFrame \models F_a] = 1$} 
    for all {\small $F_a \in \mathcal{G}_1$} or else 
    that 
    {\small $[\tintFrame \models F_a] = 0$} for all {\small $F_a \in 
    \mathcal{G}_1$}.
    \label{normalisation2}
  \end{corollary}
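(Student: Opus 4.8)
The plan is to obtain this corollary as an immediate consequence of the Bisimulation theorem (Theorem~\ref{bisimulation2}) together with the Boolean-algebra theorem (Theorem~\ref{boolean_algebra_2}), exactly mirroring the way Corollary~\ref{theorem_normalisation} was derived in Section~3. The underlying idea is that the value of a reduct is invariant along every reduction step, and that every fully reduced formula carries a single, well-defined truth value.

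First I would record that {\small $\mathcal{G}_1 = \mathcal{G}(F)$} is non-empty. By Proposition~\ref{strict_reduction_induction} the composite induction measure strictly decreases at each application of a genuine reduction rule, while associativity and commutativity of {\small $\wedge$} and {\small $\vee$} leave it unchanged; hence every reduction sequence issuing from {\small $F$} terminates in a formula in unit graph expansion, and at least one such {\small $F_a$} exists.

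Next comes the heart of the argument: for any two {\small $F_a, F_b \in \mathcal{G}_1$} and any valuation frame {\small $\tintFrame$} one has {\small $[\tintFrame \models F_a] = [\tintFrame \models F_b]$}. I would proceed by induction on the (terminating) reduction measure. Each one-step reduction {\small $F \leadsto G$} is an instance of one of the pairs {\small $(F, F')$} listed in Theorem~\ref{bisimulation2} --- the four {\small $\neg$} reductions, the three {\small $\gtrdot$} reductions and the two obj distributions --- possibly preceded by one of the congruence pairs (idempotence, associativity, commutativity) that the same theorem also covers. The theorem then yields {\small $\mathcal{G}(F) = \mathcal{G}(G)$} together with the equality of {\small $\tintFrame$}-values across the two sets. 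Iterating this along any reduction path and using transitivity, I obtain {\small $\mathcal{G}_1 = \mathcal{G}(F_x)$} for whichever unit graph expansion {\small $F_x$} the path terminates in. Since {\small $F_x$} admits no further genuine reduction, {\small $\mathcal{G}(F_x)$} comprises only {\small $F_x$} and its associativity/commutativity variants, and by Theorem~\ref{boolean_algebra_2} each literal and unit graph chain occurring in {\small $F_x$} is assigned a unique element of {\small $\{0,1\}$}, which {\small $\wedge^{\dagger}$} and {\small $\vee^{\dagger}$} propagate to a single value {\small $v \in \{0,1\}$} for the whole of {\small $F_x$}. This {\small $v$} is therefore the common value of every member of {\small $\mathcal{G}_1$}, which is precisely the claimed dichotomy.

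I do not expect a genuine obstacle here, as the substantive work has already been discharged in Theorem~\ref{bisimulation2}. The one point demanding care is confluence: ensuring that {\small $\mathcal{G}$} is genuinely path-independent, so that distinct reduction orders cannot yield normal forms of differing value. This is exactly what the two-directional simulation in Theorem~\ref{bisimulation2} secures --- each reduction of {\small $F$} is matched by reduction(s) of {\small $F'$} and conversely --- so assembling the local matchings into the global, path-independent statement via the terminating induction measure is the only step that needs to be spelled out.
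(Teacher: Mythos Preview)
Your proposal is correct and follows essentially the same route as the paper: the corollary is stated without a separate proof there, being an immediate consequence of the final paragraph of Theorem~\ref{bisimulation2}'s proof, which already establishes {\small $\mathcal{G}(F) = \mathcal{G}(F_x)$} for some {\small $F_x$} in unit graph expansion and then invokes Theorem~\ref{boolean_algebra_2} to obtain a unique value in {\small $\{0,1\}$}. Your write-up simply makes this reasoning explicit and adds the termination argument from Proposition~\ref{strict_reduction_induction} for non-emptiness, which the paper leaves implicit.
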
  
By construction, this logic is also decidable, 
but for any unrestrained expressions 
the complexity will be high, owing 
to {\small $\gtrdot$} reduction 3. 
\section{Conclusion}             
This work analysed phenomena that arise around concepts and 
their attrbutes, and called 
attention the positioning of atomic entities
in formal logic, based on the notion of recognition cut-off. 
Both the philosophical and the mathematical foundations 
of gradual logic were laid down. 
For the object-attribute relation, there may be many 
linguistically reasonable interpretations. \\
\indent To conclude, I state connections of gradual logic to 
Aristotle's logic and others, along with prospects. 
\subsection{Connection to Aristotle's logic}      
Natural expressions require more than 
one types of negation. Given an expression X, if 
it is contradictorily negated into another expression Y, 
X is true iff Y is false. A contrary expression to X, however, 
only demands that it be false if X is true. A sub-contrary 
expression to X, on the other hand, demands that 
it be true if X is false. The distinction, which is 
as good as defunct in the post-Fregean modern logic, despite 
sporadic recurrences of the theme here and then \cite{Russell05,
    Wright59,McCall67}, 
has been nonetheless known - already since the era of Aristotle's.   
An extensive discussion on contrarieties 
is found in Categories for qualities and in 
Prior Analytics for categorical sentences. Meanwhile, only 
external, contradictory negations remain proper in the modern logic, 
contrarieties dismissed. 
Some such as Lukasiewicz
\cite{Lukasiewicz34}
contend that the modern logic on sentences, 
founded by the Stoics and axiomatized by Frege, is 
logically 
prior to Aristotelian term logic in that, according 
to his judgement, logic on propositions underlies the term logic. 
    Others
are not so convinced. Some such as Horn, Sommers and Englebretsen defend the term logic 
in the respective works of theirs \cite{Horn89,Sommers70,Englebretsen81}, countenancing that there are 
features that have been lost in the mass-scale migration from 
the term logic to the modern logic. With 
analysis in Section 1, I have in part concurred with proponents 
of the term logic. One that 
is of particular interest in 
the Aristotelian term logic is the use of 
indefinite\footnote{These terminologies
    are taken from 
    The Internet Classics Archive (classics.mit.edu/Aristotle).} nouns as result from 
prefixing `not' to a noun, {\it e.g.} from man 
to not-man; and of indefinite verbs, from 
walks to not-walks. For sentences, they can be 
either affirmative or negative. ``Man walks'', 
``Not-Man walks'', ``Man Not-walks'', and ``Not-Man Not-walks'' 
are affirmative; 
``Man does not Not-walk'', ``Not-Man does not walk'', 
``Man does not Not-walk'', and ``Not-Man does not 
Not-walk'' are negative corresponding to them.\footnote{
    Judging from Aristotle's texts (in translation), affirmative sentences 
    in the form: X is Y, are primary to Aristotle. 
    X and Y can be in the form Not-X' or Not-Y', for 
    they are in any case affirmed in the sentence. 
    However, it is not the case for Aristotle that 
    sentences of the form: X is not Y, denying 
    Y of X, bears a truth value primarily. This 
    is clear from an example (Cf. On Interpretation) about the 
    truth value of `Socrates is ill/Socrates is not ill'. 
    In case Socrates does not denote anything, 
    then the non-being (and non-being is not a being) cannot be ill in an ordinary sense, and so `Socrates 
    is ill' is false. Aristotle then judges that 
    `Socrates is not ill' is true. But this would 
    remain debatable if it were the case that negative 
    statements primarily bore a truth value. For, then, 
    it, by exactly the same reasoning, could be simply false in the 
    absence of Socrates. 
    Therefore, while in the Fregean logic  
    whatever sentences may be a proposition,  
    with no regard whether it is affirmative or negative, 
    the differentiation is important in Aristotelian term logic.}  
Hence to each singular sentence\footnote{A sentence 
    that does not specify ``all'' or ``some'' 
    are singular.},
say ``Man walks'', there are seven corresponding sentences 
that are either negative or having an indefinite term. 
There is no such denying particle 
on terms in Fregean modern logic; and 
diversification
of negation by scopal distinction \cite{Russell05} or by adopting  
more than one external negation operators \cite{Wright59} on 
sentences 
cannot make amends for the limitation that arises directly
from sentential atomicity. 
One crucial point 
about a proposition in the modern logic in fact is that, even if it is an atomic proposition, 
one never knows how complex it already is, and consequently 
how many contraries it ought to have (Cf. also Geach \cite{Geach72}). However, concerning this matter, 
the assumption of atomicity of entities in 
formal/symbolic logic may be more fundamental. 
Although 
to Aristotle, too, there existed indivisible entities, 
it is unlikely that such 
entities, if they are to exist, are cognizable, so long as the entities that we deal with 
are concepts that refer to objects; and, about those,  
we cannot reason. One may 
also apply Postulate 1 to propositions in general, whereby 
a proposition becomes an object, which 
will then be divisible. Then there will be 
propositions about the proposition as its attributes.  
By explaining the indivisible 
in terms of recognition cut-offs, one can also appreciate that 
any proposition, if treated 
as an object, 
will have attributive propositions
about it, and they, too, as internal structures of the proposition, 
can be structured in $\gtrdot$s. 
    Then not just one, not just two, but arbitrarily 
many internal contradictions 
can be brought to light, 
which are externally  
contraries, for those propositions that 
have been hitherto atomic. \\
\indent 
There is another relevant remark of Aristotle's 
found in Prior/Posterior Analytics. It is combinability of predicates. If bird 
is for example both beautiful and singing, then 
that could be expressed in the first-order logic as; 
{\small $\textsf{IsBeautiful}(bird) \wedge 
    \textsf{IsSinging}(bird)$}, where 
{\small $bird$} is a term. But in so expressing, 
it goes no further. When 
we attempt a proximity mapping of `Bird is beautiful, and 
it is singing' in gradual logic with 
`Bird is (judged under a domain of discourse); and it has the attribute of being 
beautiful, and that it has the capacity of singing (judged 
under another domain of discourse),' we gain; 
{\small $(\textsf{Bird} \gtrdot \textsf{Beautiful})
    \wedge (\textsf{Bird} \gtrdot \textsf{Singing})$}; 
or, equivalently 
{\small $\textsf{Bird} \gtrdot \textsf{Beautiful} \wedge 
    \textsf{Singing}$}, the two attributes 
conjoining into a unified attribute. 
Similar may also hold for the other side of {\small $\gtrdot$}. 
Exactly how combination occurs depends on a given linguistic interpretation 
on the object-attribute relation. Aristotle mentions 
of such combination in one part. Now,  why a similar process 
does not occur in the above-given first-order expression
is because, not only of the terms but also of 
all the predicates, form is pre-defined. Fregean terms 
are indivisible and fixed; and it must be 
known how many Fregean terms each Fregean predicate will take. \\
\indent To conclude this sub-section, 
we saw that  
first-order logic does not share the same logical foundation 
of the term logic. Aristotle's logic treats terms 
both as subjects and predicates, whereas 
Fregean terms are not Fregean predicates, nor vice versa. 
The object-attribute relation in gradual logic is closer 
in the respect to Aristotle's subject-predicate relation 
than the relation that holds between Fregean terms and Fregean predicates.
As stated in 5.1, however, in gradual logic {\small $X \gtrdot Y$} 
may itself act as an object, as in {\small $(X \gtrdot Y) \gtrdot Z$}, distinct from Aristotelian subject-predicate relation which 
does not produce a subject. \\
\indent As one research interest out of gradual logic, it should 
be fruitful to conduct  
cross-studies against Aristotle's logic, and to see how well 
Aristotle's syllogism can be explained within. 
Instead of 
embedding Aristotle's logic in first-order logic \cite{Edgar07} or 
first-order logic in Aristotle's logic \cite{Sommers70}, 
the strengths that the two have may be mutually extended. 
gradual logic 
may pave a way for realising the possibility. 
It must 
be pointed out, however, that, 
in order to  
attempt modelling the universal/particular sentences, the three figures and syllogism in Prior Analytics, 
it is necessary that we develop 
gradual predicate logic, as to be stated shortly. 

 \subsection{Gradual X Logic}    
 Meta-framework of some existing framework(s) 
 offers a way of deriving 
 new results without destroying the properties 
 of the original framework(s). As it retains 
 principles in the original 
 framework, it is also highly reusable. For example,  
 assuming 
 that all the (Fregean) terms and quantifications are 
 contained within a domain of discourse, replacement 
 of the underlying propositional logic in this work 
 with first-order logic, or, in general, another Boolean logic X, gives us 
 gradual X logic, and all the main results that we saw 
 go through, apart from the decidability result which 
 depends on the decidability of the underlying logic. 
 The reason that we can simply swap 
 the underlying in this manner is because the meta-framework 
 considered in this work acts only on the 0/1 (Cf. 
 the given semantics). How the 0/1 is generated is irrelevant 
 to the applicability of the meta-constructs 
 that $\gtrdot$s generate. \\
 \indent However, from a theoretical perspective about the 
 use of predicates within gradual logic, the use of intra-domain-of-discourse
 predicates/quantifications is conservative, since the terms 
 so introduced  
 will be atomic, which is not in harmony with the  
 philosophical standpoint that was taken in this work. Real 
 theoretical extensions 
 will be by introducing predicates that range over 
 attributed objects themselves. For instance, 
 suppose that we have two expressions 
 {\small $\textsf{Adjective} \gtrdot \textsf{Sheep}$} 
 and {\small $\textsf{Ovine}$}, then we may say 
 {\small $\textsf{IsEqualTo}(\textsf{Adjective} \gtrdot 
     \textsf{Sheep}, \textsf{Ovine})$} 
 in some (but not necessarily all) domains of discourse. 
 This type of extension may be called active predicate 
 extension. In this direction, there are both 
 philosophical/linguistic and mathematical challenges, 
 and it will be important to adequately capture interactions 
 between the active predicates and the reduction rules. 
\subsection{On tacit agreement} 
The incremental shift in domain of discourse 
models tacit agreement, which is 
otherwise understood as a context. 
Within formal logic considered in artificial intelligence, 
a line of studies since 
McCarthy \cite{McCarthy93,Buvac93,Ghidini01,Nayak94} 
have set a touchstone for logics handling
contextual reasonings. In those context logics,\footnote{There are 
other logics termed context logics which treat 
a context as an implication. But these, 
by explicitly stating what follow 
from what in the same domain of discourse, do not truly express the tacitness of a tacit agreement.}
all the propositions are judged under a context 
depending on which their truth values are determined. 
The question of what a context is, nonetheless, 
has not been pursued in 
the context logics 
any farther than that it is a rich object that is 
only partially explained. 
But because they treat 
a proposition as, in comparison, something 
that is known, there emerges a distinction 
between a context and a proposition whereby 
the former becomes a meta-term like a nominal in hybrid logic 
\cite{Brauner14}
that conditions the latter.  
As much as the consideration appears natural, 
it may be also useful to think what truly 
makes a context differ from a proposition, for, suppose
a proposition that Holmes LS is a detective in the context 
of Sherlock Holmes stories \cite{McCarthy93}, 
it appears on a reasonable ground that 
that the scenes (under which the proposition 
falls) are the stories of Sherlock Holmes is 
indeed a proposition. 
And if a proposition itself is a rich object that 
can be only partially explained,  
then the fundamental gap between the two domains 
closes in. 
\hide{
\subsubsection{The modern type theory in linguistic
    contexts}     
Not-man is affirmed in term negation. In gradual logic,  
when an attribute, or a theme, is negative: $Hat \gtrdot 
\neg Yellow$, it does not mean that

Judging from a recent work by Luo \cite{Luo13} that shows 
applications of the modern type theory \cite{Coq88,Martin-Lof84,Nord90,Luo94} in linguistics, 
the modern type theory appears to simulate 
such conversion. 
The idea is, taking an example from his work, as follows.
If {\small $\textsf{Man}$} is a type, and if the type of 
{\small $\textsf{Handsome}$} is 
{\small $\textsf{Man} \rightarrow Prop$},\footnote{We simply quote 
here from \cite{Luo13} that the type Prop is almost 
the type {\small $t$} in the simple type theory
by \cite{Church40} except that it reflects the fact that 
the modern type theory they have explicit proof terms 
of logical propositions.} then 
{\small $\Pi(\textsf{Man}, \textsf{Handsome})$} 
is another type of dependent pairs, which we may denote 
by {\small $\textsf{HandsomeMan}$}. In this manner, 
we can derive a useful entity in addition to 
{\small $\textsf{Handsome}(\textsf{Man})$} of type 
Prop. Furthermore, it can be coerced 
to be a sub-type of {\small $\textsf{Man}$} by 
coercive sub-typing, so that if we have 
an individual John as a member of {\small $\textsf{HandsomeMan}$}, 
then if we have an adjective {\small $\textsf{Wise}$} of 
type {\small $\textsf{Man} \rightarrow Prop$}, then 
{\small $\textsf{Wise}(John)$} is well-typed, 
since {\small $\textsf{HandsomeMan}$} is under 
{\small $\textsf{Man}$}, which, for the reason that 
it is recognising certain class of entities 
in an otherwise atomic entity, also achieves the principle 
of co-predication. \\
\indent However, 
suppose our species is to go through 
a dramatic genetic evolution by which 
some are to acquire wings to fly. Suppose that 
legs of theirs are to go defunct over the years as a 
consequence 
of their becoming innured to flying. We will be having

What then will happen 
is that, though the expression: {\small $\textsf{Walk}(John)$}, 
is presently well-typed, it could be that 
it really should not be well-typed in the new era 
if John is one of the winged, for he is not a man who is 
to be walking. It is reasonable that we obtain 
the following predicate: {\small $\textsf{Winged}: 
    \textsf{Man} \rightarrow Prop$}. 
We then gain {\small $\Pi(\textsf{Man}, \textsf{Winged})$}  
as the type {\small $\textsf{WingedMan}$}, which, 
by coercive sub-typing, is a sub-type of 
{\small $\textsf{Man}$}. 
Meanwhile, assume an adjective {\small $\textsf{Flying}: 
    \textsf{WingedMan} \rightarrow Prop$}. \\
\indent As this example illustrated, whatever definitions 
of man that were unconsidered when a type was given to 
the common noun

such that some of us have wings to fly, 
and that for those legs

man 
in some future such that {\small $\textsf{Unwise}: \textsf{Man}  
    \rightarrow Prop$} and that 
{\small $\Pi(\textsf{Man}, \textsf{Unwise}) := 
    \textsf{UnwiseMan}$}, 
    then because the new type is coerced to be a sub-type 
    of {\small $\textsf{Man}$}, it will follow 
    
will have 
For a difference, suppose 

if, in a remote future,  man ceases to walk 
but to fly for whatever reasons, 
then it can no longer be that {\small $walk(man)$} 
is reasonable, that is, well-typed. But 
{\small $\textsf{Walk}(\textsf{Man})$} 
will remain reasonable, for the {\small $\textsf{Man}$} 
without any explicated attribute only 
forces it to be some man for the recognition cut-off 
at {\small $\textsf{man}$} cannot 
force anything else, not all what man is. In that case, 
the typing will have turned out to be inadequate; 
expressions in gradual logic will not. \\
\indent Another, according to Section 1, is 
that the utterance of a handsome man may not tell 
if it is the adjective or the noun that is the main concept. 
Both {\small $\textsf{Man} \gtrdot \textsf{Handsome}$} 
and {\small $\textsf{Handsome} \gtrdot \textsf{Man}$} 
can be interpreted as a handsome man, the former
assuming man to be the main concept, 
whereas it is the quality of being handsome that is 
the main concept. As discussed, if a part of 
a concept is presupposed, negation only applies 
partially on an attribute\footnote{Nobody negates 
    what he/she is taking for granted.}, which 
is either {\small $\textsf{Man} \gtrdot \neg 
    \textsf{Handsome}$} or 
{\small $\textsf{Handsome} \gtrdot \neg \textsf{Man}$}. 
As questioned in \cite{ }, to say 
that an entity which is a type is not a type 
could be difficult in the current type theory. \\
\indent For a homonym, 

Write down examples. Illustrate 
the basic features. Illustrate what differ. Find a connection 
to suggest a feedback to the type theory.  That is it with 
this one.   
}
\bibliographystyle{asl} 
\bibliography{references} 
\section*{Appendix A - a Java file, and test cases for Proposition 7}   
The Java code (version 1.6) that is used for the tests of Proposition 7 and 
Proposition 9 is listed below. 
{\tiny 
\begin{lstlisting}
import java.util.Stack;
/**
 * Written just for proving two results on "Logic on 
 * Recognition Cut-Off: Objects, Attributes and Atomicity". 
 * This program does not perform any value comparisons, which must be manually done. 
 * Intended for a personal use, there are hardly any exception handling. In any case, 
 * the source is in public to see where errors are thrown. 
 * An argument should be provided. If it is 1, then the program checks the results 
 * for Proposition 7(Preliminary observation); otherwise, it checks the results 
 * for Proposition 9. 
 * The outputs look like this: <p>
 * "======TEST n, X ======== first expression === second expression"
 * on the first line where the 'n' indicates which test procedure that is being 
 * called; X is either true or false corresponding to the Boolean parameter of 
 * f_size function on the paper; the 'first expression' and the 'second 
 * expression' a formula whose value is being calculated. <p>
 * On the second line appears the calculated result of the 'first expression'.<p>
 * And on the third line that of the 'second expression'.<p>
 * On the fourth line are found for the 'first expression' the inverse of 
 * ((the number of occurrences of !) + 1), that of % (+ 1 on the denumerator), 
 * and, in case of the second test, also that of * (+ 1 on the denumerator). <p> 
 * ON the fifth line, same but for the 'second expression'. <p>
 * This basic structure repeats as many as the number of the test cases. <p>
 * Briefly remarking on the syntax, an expression is written in prefix form. 
 * A small alphabet which must be of length 1 denotes a literal. A capital 
 * alphabet of the length 1 denotes a general formula, on the other hand. 
 * Grammar (let us denote an expression by EXP): <p> 
 * 1. A formula is EXP.<p>
 * 2. !EXP is EXP. ! means not. <p>
 * 3. *(EXP)(EXP) is EXP. * means and. <p>
 * 4. +(EXP)(EXP) is EXP. + means or. <p>
 * 5. %(EXP)(EXP) is EXP. % means the object-attribute relation. <p>
 * Also remarking on the results, ^n is exponent to immediately preceding number. 
 * n/m denotes n divided by m. + denotes addition. 
 * (l=n) after a capital alphabet indicates the value to the second argument of 
 * the function f_size.  
 * @author 
 *
 */
public class Calculator {
	
	//========DATA========
        private static enum FType{ 
		LITERAL,    //a literal. 
		NONLITERAL, //not a literal.
		GFORMULA    //Sub-formula still to be processed.
	
	}	
	private static String f_left;
	private static String f_right;
	
	private static Stack<Integer> stack;
	
	//these three are used to get the number of occurrences of !,% and *.
	private static int neg_counter; 
	private static int obat_counter; 
	private static int and_counter; 
	//====================
	/**
	 * If args = "1", this procedure checks the results for Proposition 7. At the 
	 * same time, it tells the inverse of the number of occurrences of negations 
	 * (!) and the inverse of the number of occurrences of .> (%). 
	 * Otherwise, it checks the results for Proposition 9. At the same time, it
	 * tells the inverse of the number of the occurrences of negations (!), the 
	 * inverse of the number of the occurrences of .> (%),
	 * and the inverse of the number of the occurrences of conjunctions (*). 
	 * @param args
	 */
	public static void main(String[] args){
		
	   if(args[0].equals("1"))
	   {
		test1(); test2();  test3(); test4();test5();test6();test7();
		test8();test9();test10();test11();test12();test13();
	   }
	   else
	   {
		  test1B();
		  test2B();test3B();test4B();test5B();test6B();test7B();test8B();
		  test9B();test10B();test11B();test12B();test13B();
	   }
	}
	/**
	 * Test neg reduction 1. 
	 */
	private static void test1(){ 
		f_left = new String("!s"); 
		f_right = new String("s");  
		printOut(f_left,f_right,1);
		
   	}
	/**
	 * Test neg reduction 2. 
	 */
	private static void test2(){
		f_left = new String("!*(A)(B)"); 
		f_right = new String("+(!A)(!B)"); 
		printOut(f_left, f_right,2);
	}
	/**
	 * Test neg reduction 3. 
	 */
	private static void test3(){
		f_left = new String("!+(A)(B)");
		f_right = new String("*(!A)(!B)");
		printOut(f_left, f_right,3);
	}
	/**
	 * Test neg reduction 4. 
	 */
	private static void test4(){
		f_left = new String("!%(s)(A)");
		f_right = new String("+(s)(%(s)(!A))");
		printOut(f_left, f_right, 4);
	}
	/**
	 * Test neg reduction 5. 
	 */
	private static void test5(){
		f_left = new String("%(%(A)(B))(C)");
		f_right = new String("*(%(A)(C))(+(%(A)(B))(%(A)(%(B)(C))))");
		//(f_left, f_right, 5);
		printOut(f_left,f_right,5);
		
	}
	/**
	 * Test .> reduction 1. 
	 */
	private static void test6(){
		f_left = new String("%(*(A)(B))(C)");
		f_right = new String("*(%(A)(C))(%(B)(C))");
		printOut(f_left,f_right,6);
	}
	/**
	 * Test .> reduction 2. 
	 */
	private static void test7(){
		f_left = new String("%(+(A)(B))(C)");
		f_right = new String("+(%(A)(C))(%(B)(C))");
		printOut(f_left,f_right,7);
		
	}
	/**
	 * Test .> reduction 3. 
	 */
	private static void test8(){
		
		f_left = new String("%(A)(*(B)(C))");
		f_right = new String("*(%(A)(B))(%(A)(C))");
		printOut(f_left,f_right,8);
	}
	/**
	 * Test .> reduction 4.
	 */
	private static void test9(){
		f_left = new String("%(A)(+(B)(C))");
		f_right = new String("+(%(A)(B))(%(A)(C))");
		printOut(f_left,f_right,9);
	}
	/**
	 * Test * commutativity. 
	 */
	private static void test10(){
		f_left = new String("*(A)(B)");
		f_right = new String("*(B)(A)");
		printOut(f_left,f_right,10);
	}
	/**
	 * Test + commutativity. 
	 */
	private static void test11(){
		f_left = new String("+(A)(B)");
		f_right = new String("+(B)(A)");
		printOut(f_left,f_right,11);
	}
	/**
	 * Test * associativity. 
	 */
	private static void test12(){
		f_left = new String("*(*(A)(B))(C)");
		f_right = new String("*(A)(*(B)(C))");
		printOut(f_left,f_right,12);
	}
	/**
	 * Test + associativity.
	 */
	private static void test13(){
		f_left = new String("+(+(A)(B))(C)");
		f_right = new String("+(A)(+(B)(C))");
		printOut(f_left,f_right,13);
	} 
	/**
	 * Tests for Proposition 9. ! reduction 1. 
	 */
	private static void test1B(){ 
		test1(); 		
	}
	/**
	 * For ! reduction 2. 
	 */
	private static void test2B(){
		test2();
	}
	/**
	 * For ! reduction 3. 
	 */
	private static void test3B(){
		test3();
	}
	/**
	 * For ! reduction 4, for k =2. Association does not matter 
	 * due to other cases in the same proposition. 
	 */
	private static void test4B(){
		f_left = new String("!%(*(A)(*(B)(C)))(D)");
		f_right = new String("+(+(!A)(+(!B)(!C)))(%(*(A)(*(B)(C)))(!D))");
		printOut(f_left, f_right, 400);
	}
	/**
	 * For .> reduction 3, for k = 2. 
	 */
	private static void test5B(){
		f_left = new String("%(+(A)(+(B)(C)))(D)");
		f_right = new String("+(%(A)(D))(+(%(B)(D))(+(%(C)(D))(+(%(*(A)(B))(D))"+
		"(+(%(*(A)(C))(D))(+(%(*(B)(C))(D))(%(*(A)(*(B)(C)))(D)))))))");
		printOut(f_left,f_right,500);
	}
	/**
	 * For .> reduction 4.
	 */
	private static void test6B(){
		test8();
	}
	/**
	 * For .> reduction 5. 
	 */
	private static void test7B(){
		test9();
	}
	/**
	 * For obj distribution  1. 
	 */
	private static void test8B(){
		f_left = new String("%(*(A)(+(B)(C)))(D)");
		f_right = new String("%(+(*(A)(B))(*(A)(C)))(D)");
		printOut(f_left,f_right,800);
	}
	/**
	 * For obj distribution 2. 
	 */
	private static void test9B(){
		f_left = new String("%(*(+(A)(B))(C))(D)");
		f_right = new String("%(+(*(A)(C))(*(B)(C)))(D)");
		printOut(f_left,f_right,900);
	}
	/**
	 * For * commutativity. 
	 */
	private static void test10B(){
		test10();
	}
	/**
	 * For + commutativity. 
	 */
	private static void test11B(){
		test11();
	}
	/**
	 * For * associativity. 
	 */
	private static void test12B(){
		test12();
	}
	/**
	 * For + associativity. 
	 */
	private static void test13B(){
		test13();
	}
	/**
	 * As on the paper, save in the prefix form. XX indicates that the string is
	 *  not formulated according to the grammar. 
         * @param neg_depth
	 * @param l
	 * @param bool 
	 * @param f_str
	 * @return
	 */
	private static String f_size(int neg_depth, int l, boolean bool, String f_str){
		 String prefix; 
		String suffix; 
		if(getFType(f_str) == FType.LITERAL)
			return "1/4^" + (new Integer(l).toString()); 
		else if(getFType(f_str) == FType.NONLITERAL)
			return f_str+ "(l=" + new Integer(l).toString()+ ")"; 
		else if(f_str.charAt(0) == '!') 
		{
			negCounterIncrement();
			return "1/4^" + (new Integer(neg_depth).toString()) +  " + (" + 
					f_size(neg_depth,l, bool, f_str.substring(1))+ ")"; 
		}
		else if(f_str.charAt(0) == '%')
		{
			obatCounterIncrement();
			int index = getIndexParenthesis(f_str);
			prefix = f_str.substring(2, index);
			suffix = f_str.substring(index+2,f_str.length()-1);	
			return "(" + 
				f_size(neg_depth+1,l, bool, prefix) + " + " 
					+ f_size(neg_depth+1,l,bool, suffix) + ")";
			
		}

		else if(f_str.charAt(0) == '*')
		{  
			andCounterIncrement();
			int index = getIndexParenthesis(f_str);
			prefix = new String(f_str.substring(2, index));
			suffix = new String(f_str.substring(index+2,f_str.length()-1));
			if(bool) return "max(" + f_size(neg_depth +1,l+1,false,prefix) + 
					" , " + f_size(neg_depth+1,l+1,false,suffix) + ")"; 
			else return "max(" + f_size(neg_depth,l,false,prefix) + 
					" , " + f_size(neg_depth,l,false,suffix) + ")"; 
			
		}		
		else if(f_str.charAt(0)=='+'){
			int index = getIndexParenthesis(f_str);
			prefix = f_str.substring(2, index);
			suffix = f_str.substring(index+2,f_str.length()-1);
			if(bool) return "max(" + f_size(neg_depth+1,l+1,false,prefix) + 
		            " , " + f_size(neg_depth+1,l+1,false,suffix) + ")"; 
			else return "max(" + f_size(neg_depth,l,false,prefix) + 
					" , " + f_size(neg_depth,l,false,suffix) + ")"; 
		}

		return "XX"; 
	}
	/**
	 * Given a string, it tells if it is a literal or a general formula, or 
	 * otherwise.
	 * @param in_str
	 * @return
	 */
	private static FType getFType(String in_str)
	{
		if(in_str.length() >= 2)
		{
			//System.out.println("Getting GFORMULA which is: " + in_str);
			return FType.GFORMULA;
		}
		else if(in_str.charAt(0) >= 'a' && 
				in_str.charAt(0) <= 'z')
			return FType.LITERAL;
		else
		{
		//	System.out.println(in_str + " Is NONLITERAL.");
			return FType.NONLITERAL;
		}
			
	}
	/**
	 * Parser. -100 is an error.
	 * @param in_str
	 * @return
	 */
	private static Integer getIndexParenthesis(String in_str)
	{
		//with stack. 
		stack = new Stack<Integer>();
		String curStr = new String(in_str); 
		char curChar; 
		for(int i =0; i <curStr.length(); i++)
		{
			curChar = curStr.charAt(i); 
		if (curChar == '(')
				stack.push(i);
			else if (curChar == ')')
			{
				stack.pop();
				if(stack.isEmpty())
					return i;
			}
			
		}
		
		return -100;
		
	}
	/**
	 * Console printing. 
	 * @param left_str
	 * @param right_str
	 * @param n
	 */
	private static void printOut(String left_str, String right_str, int n)
	{
		int lNegCounter,lObatCounter,lAndCounter;
		counterReset();
		System.out.println("=====TEST" + new Integer(n).toString() +
                    ", false=======" + left_str + "===" + right_str); 
		System.out.println(f_size(1,1, false,f_left)); 
		//store the counter values for the first expression.
		lNegCounter = neg_counter; lObatCounter = obat_counter; lAndCounter = 
				and_counter;
		//and reset the counters.
		counterReset();
		System.out.println(f_size(1,1,false,f_right));  
		//print the counter values. 
		System.out.println("1/" + lNegCounter + ",1/" + lObatCounter + ",1/"
		+ lAndCounter);
		System.out.println("1/" + neg_counter + ",1/" + obat_counter + ",1/" 
		+ and_counter);

		counterReset();
		System.out.println("=====TEST" +  new Integer(n).toString() + ", " +
				"true======="); 
		System.out.println(f_size(1,1,true,f_left)); 
		lNegCounter = neg_counter; lObatCounter = obat_counter; 
		lAndCounter = and_counter;
		counterReset();
		System.out.println(f_size(1,1,true,f_right));  
		System.out.println("1/" + lNegCounter + ",1/" + lObatCounter + ",1/" + 
		lAndCounter);
		System.out.println("1/" + neg_counter + ",1/" + obat_counter + ",1/" +
		and_counter);
	}
	/**
	 * Reset the counters to 1. 1 means basically 0, but as mentioned in the 
	 * class description, 1/0 is bad. So the minimum is 1. 
	 */
	private static void counterReset()
	{
		obat_counter = 1; neg_counter = 1; and_counter = 1;
	}
	/**
	 * Increases the counter counting the occurrences of %. 
	 */
	private static void obatCounterIncrement()
	{
		obat_counter++; 
	}
	/**
	 * Increases the counter counting the occurrences of !. 
	 */
	private static void negCounterIncrement()
	{
		neg_counter++; 
	}
	/**
	 * Increases the counter counting the occurrences of *.
	 */
	private static void andCounterIncrement()
	{
		and_counter++;
	}
}
\end{lstlisting}  
       }  
\noindent And the test cases for Proposition 7 below. 
Please refer 
to the class description of the Java source code for the 
format. Test 1 tests {\small $\neg$} reduction 1, 
Test 2 tests {\small $\neg$} reduction 2, 
and so on until Test 4. 
Test 5 tests {\small $\gtrdot$} reduction 1, 
Test 6 does {\small $\gtrdot$} reduction 2, 
and so on until Test 9. Test 10 and 11 test 
commutativity of {\small $\wedge$} and {\small $\vee$}. 
Test 12 and Test 13 associativity. 
{\tiny 
\begin{lstlisting} 
=====TEST1, false=======!s===s
1/4^1 + (1/4^1)
1/4^1
1/2,1/1,1/1
1/1,1/1,1/1
=====TEST1, true=======
1/4^1 + (1/4^1)
1/4^1
1/2,1/1,1/1
1/1,1/1,1/1
=====TEST2, false=======!*(A)(B)===+(!A)(!B)
1/4^1 + (max(A(l=1) , B(l=1)))
max(1/4^1 + (A(l=1)) , 1/4^1 + (B(l=1)))
1/2,1/1,1/2
1/3,1/1,1/1
=====TEST2, true=======
1/4^1 + (max(A(l=2) , B(l=2)))
max(1/4^2 + (A(l=2)) , 1/4^2 + (B(l=2)))
1/2,1/1,1/2
1/3,1/1,1/1
=====TEST3, false=======!+(A)(B)===*(!A)(!B)
1/4^1 + (max(A(l=1) , B(l=1)))
max(1/4^1 + (A(l=1)) , 1/4^1 + (B(l=1)))
1/2,1/1,1/1
1/3,1/1,1/2
=====TEST3, true=======
1/4^1 + (max(A(l=2) , B(l=2)))
max(1/4^2 + (A(l=2)) , 1/4^2 + (B(l=2)))
1/2,1/1,1/1
1/3,1/1,1/2
=====TEST4, false=======!%(s)(A)===+(s)(%(s)(!A))
1/4^1 + ((1/4^1 + A(l=1)))
max(1/4^1 , (1/4^1 + 1/4^2 + (A(l=1))))
1/2,1/2,1/1
1/2,1/2,1/1
=====TEST4, true=======
1/4^1 + ((1/4^1 + A(l=1)))
max(1/4^2 , (1/4^2 + 1/4^3 + (A(l=2))))
1/2,1/2,1/1
1/2,1/2,1/1
=====TEST5, false=======%(%(A)(B))(C)===*(%(A)(C))(+(%(A)(B))(%(A)(%(B)(C))))
((A(l=1) + B(l=1)) + C(l=1))
max((A(l=1) + C(l=1)) , max((A(l=1) + B(l=1)) , (A(l=1) + (B(l=1) + C(l=1)))))
1/1,1/3,1/1
1/1,1/5,1/2
=====TEST5, true=======
((A(l=1) + B(l=1)) + C(l=1))
max((A(l=2) + C(l=2)) , max((A(l=2) + B(l=2)) , (A(l=2) + (B(l=2) + C(l=2)))))
1/1,1/3,1/1
1/1,1/5,1/2
=====TEST6, false=======%(*(A)(B))(C)===*(%(A)(C))(%(B)(C))
(max(A(l=1) , B(l=1)) + C(l=1))
max((A(l=1) + C(l=1)) , (B(l=1) + C(l=1)))
1/1,1/2,1/2
1/1,1/3,1/2
=====TEST6, true=======
(max(A(l=2) , B(l=2)) + C(l=1))
max((A(l=2) + C(l=2)) , (B(l=2) + C(l=2)))
1/1,1/2,1/2
1/1,1/3,1/2
=====TEST7, false=======%(+(A)(B))(C)===+(%(A)(C))(%(B)(C))
(max(A(l=1) , B(l=1)) + C(l=1))
max((A(l=1) + C(l=1)) , (B(l=1) + C(l=1)))
1/1,1/2,1/1
1/1,1/3,1/1
=====TEST7, true=======
(max(A(l=2) , B(l=2)) + C(l=1))
max((A(l=2) + C(l=2)) , (B(l=2) + C(l=2)))
1/1,1/2,1/1
1/1,1/3,1/1
=====TEST8, false=======%(A)(*(B)(C))===*(%(A)(B))(%(A)(C))
(A(l=1) + max(B(l=1) , C(l=1)))
max((A(l=1) + B(l=1)) , (A(l=1) + C(l=1)))
1/1,1/2,1/2
1/1,1/3,1/2
=====TEST8, true=======
(A(l=1) + max(B(l=2) , C(l=2)))
max((A(l=2) + B(l=2)) , (A(l=2) + C(l=2)))
1/1,1/2,1/2
1/1,1/3,1/2
=====TEST9, false=======%(A)(+(B)(C))===+(%(A)(B))(%(A)(C))
(A(l=1) + max(B(l=1) , C(l=1)))
max((A(l=1) + B(l=1)) , (A(l=1) + C(l=1)))
1/1,1/2,1/1
1/1,1/3,1/1
=====TEST9, true=======
(A(l=1) + max(B(l=2) , C(l=2)))
max((A(l=2) + B(l=2)) , (A(l=2) + C(l=2)))
1/1,1/2,1/1
1/1,1/3,1/1
=====TEST10, false=======*(A)(B)===*(B)(A)
max(A(l=1) , B(l=1))
max(B(l=1) , A(l=1))
1/1,1/1,1/2
1/1,1/1,1/2
=====TEST10, true=======
max(A(l=2) , B(l=2))
max(B(l=2) , A(l=2))
1/1,1/1,1/2
1/1,1/1,1/2
=====TEST11, false=======+(A)(B)===+(B)(A)
max(A(l=1) , B(l=1))
max(B(l=1) , A(l=1))
1/1,1/1,1/1
1/1,1/1,1/1
=====TEST11, true=======
max(A(l=2) , B(l=2))
max(B(l=2) , A(l=2))
1/1,1/1,1/1
1/1,1/1,1/1
=====TEST12, false=======*(*(A)(B))(C)===*(A)(*(B)(C))
max(max(A(l=1) , B(l=1)) , C(l=1))
max(A(l=1) , max(B(l=1) , C(l=1)))
1/1,1/1,1/3
1/1,1/1,1/3
=====TEST12, true=======
max(max(A(l=2) , B(l=2)) , C(l=2))
max(A(l=2) , max(B(l=2) , C(l=2)))
1/1,1/1,1/3
1/1,1/1,1/3
=====TEST13, false=======+(+(A)(B))(C)===+(A)(+(B)(C))
max(max(A(l=1) , B(l=1)) , C(l=1))
max(A(l=1) , max(B(l=1) , C(l=1)))
1/1,1/1,1/1
1/1,1/1,1/1
=====TEST13, true=======
max(max(A(l=2) , B(l=2)) , C(l=2))
max(A(l=2) , max(B(l=2) , C(l=2)))
1/1,1/1,1/1
1/1,1/1,1/1
\end{lstlisting} 
       }
\section*{Appendix B - test cases for Proposition 9}    
Test cases for Proposition 9, and associativity and 
commutativity cases of {\small $\wedge$} (* in the code) and 
{\small $\vee$} (+ in the code). Some of the lines are 
very long, and are split in two lines, which is indicated 
by SP. Test 1 tests {\small $\neg$} reduction 1, 
Test 2 {\small $\neg$} reduction 2, and so on 
until Test 400. Test 500 tests 
{\small $\gtrdot$} reduction 3, Test 8 
{\small $\gtrdot$} reduction 4, and Test 9
{\small $\gtrdot$} reduction 5.  
Test 800 tests obj distribution 1, Test 900 obj distribution 2. 
Test 10 - 13 test associativity and commutativity of 
{\small $\wedge$} and {\small $\vee$}, which are the same 
as for Proposition 7. 
{\tiny 
\begin{lstlisting}
=====TEST1, false=======!s===s
1/4^1+(1/4^1)
1/4^1
1/2,1/1,1/1
1/1,1/1,1/1
=====TEST1, true=======
1/4^1+(1/4^1)
1/4^1
1/2,1/1,1/1
1/1,1/1,1/1
=====TEST2, false=======!*(A)(B)===+(!A)(!B)
1/4^1+(max(A(l=1),B(l=1)))
max(1/4^1+(A(l=1)),1/4^1+(B(l=1)))
1/2,1/1,1/2
1/3,1/1,1/1
=====TEST2, true=======
1/4^1+(max(A(l=2),B(l=2)))
max(1/4^2+(A(l=2)),1/4^2+(B(l=2)))
1/2,1/1,1/2
1/3,1/1,1/1
=====TEST3, false=======!+(A)(B)===*(!A)(!B)
1/4^1+(max(A(l=1),B(l=1)))
max(1/4^1+(A(l=1)),1/4^1+(B(l=1)))
1/2,1/1,1/1
1/3,1/1,1/2
=====TEST3, true=======
1/4^1+(max(A(l=2),B(l=2)))
max(1/4^2+(A(l=2)),1/4^2+(B(l=2)))
1/2,1/1,1/1
1/3,1/1,1/2
=====TEST400, false=======SP
!%(*(A)(*(B)(C)))(D)===+(+(!A)(+(!B)(!C)))(%(*(A)(*(B)(C)))(!D))
1/4^1+((max(A(l=1),max(B(l=1),C(l=1)))+D(l=1)))
max(max(1/4^1+(A(l=1)),max(1/4^1+(B(l=1)),1/4^1+(C(l=1)))),SP
(max(A(l=1),max(B(l=1),C(l=1)))+1/4^2+(D(l=1))))
1/2,1/2,1/3
1/5,1/2,1/3
=====TEST400, true=======
1/4^1+((max(A(l=2),max(B(l=2),C(l=2)))+D(l=1)))
max(max(1/4^2+(A(l=2)),max(1/4^2+(B(l=2)),1/4^2+(C(l=2)))),(max(A(l=2),SP
max(B(l=2),C(l=2)))+1/4^3+(D(l=2))))
1/2,1/2,1/3
1/5,1/2,1/3
=====TEST500, false=======%(+(A)(+(B)(C)))(D)===+(%(A)(D))(+(%(B)(D))(+(%(C)(D))SP
(+(%(*(A)(B))(D))(+(%(*(A)(C))(D))(+(%(*(B)(C))(D))(%(*(A)(*(B)(C)))(D)))))))
(max(A(l=1),max(B(l=1),C(l=1)))+D(l=1))
max((A(l=1)+D(l=1)),max((B(l=1)+D(l=1)),max((C(l=1)+D(l=1)),SP
max((max(A(l=1),B(l=1))+D(l=1)),max((max(A(l=1),C(l=1))+D(l=1)),SP
max((max(B(l=1),C(l=1))+D(l=1)),(max(A(l=1),max(B(l=1),C(l=1)))+D(l=1))))))))
1/1,1/2,1/1
1/1,1/8,1/6
=====TEST500, true=======
(max(A(l=2),max(B(l=2),C(l=2)))+D(l=1))
max((A(l=2)+D(l=2)),max((B(l=2)+D(l=2)),max((C(l=2)+D(l=2)),SP
max((max(A(l=2),B(l=2))+D(l=2)),max((max(A(l=2),C(l=2))+D(l=2)),SP
max((max(B(l=2),C(l=2))+D(l=2)),(max(A(l=2),max(B(l=2),C(l=2)))+D(l=2))))))))
1/1,1/2,1/1
1/1,1/8,1/6
=====TEST8, false=======%(A)(*(B)(C))===*(%(A)(B))(%(A)(C))
(A(l=1)+max(B(l=1),C(l=1)))
max((A(l=1)+B(l=1)),(A(l=1)+C(l=1)))
1/1,1/2,1/2
1/1,1/3,1/2
=====TEST8, true=======
(A(l=1)+max(B(l=2),C(l=2)))
max((A(l=2)+B(l=2)),(A(l=2)+C(l=2)))
1/1,1/2,1/2
1/1,1/3,1/2
=====TEST9, false=======%(A)(+(B)(C))===+(%(A)(B))(%(A)(C))
(A(l=1)+max(B(l=1),C(l=1)))
max((A(l=1)+B(l=1)),(A(l=1)+C(l=1)))
1/1,1/2,1/1
1/1,1/3,1/1
=====TEST9, true=======
(A(l=1)+max(B(l=2),C(l=2)))
max((A(l=2)+B(l=2)),(A(l=2)+C(l=2)))
1/1,1/2,1/1
1/1,1/3,1/1
=====TEST800, false=======%(*(A)(+(B)(C)))(D)===%(+(*(A)(B))(*(A)(C)))(D)
(max(A(l=1),max(B(l=1),C(l=1)))+D(l=1))
(max(max(A(l=1),B(l=1)),max(A(l=1),C(l=1)))+D(l=1))
1/1,1/2,1/2
1/1,1/2,1/3
=====TEST800, true=======
(max(A(l=2),max(B(l=2),C(l=2)))+D(l=1))
(max(max(A(l=2),B(l=2)),max(A(l=2),C(l=2)))+D(l=1))
1/1,1/2,1/2
1/1,1/2,1/3
=====TEST900, false=======%(*(+(A)(B))(C))(D)===%(+(*(A)(C))(*(B)(C)))(D)
(max(max(A(l=1),B(l=1)),C(l=1))+D(l=1))
(max(max(A(l=1),C(l=1)),max(B(l=1),C(l=1)))+D(l=1))
1/1,1/2,1/2
1/1,1/2,1/3
=====TEST900, true=======
(max(max(A(l=2),B(l=2)),C(l=2))+D(l=1))
(max(max(A(l=2),C(l=2)),max(B(l=2),C(l=2)))+D(l=1))
1/1,1/2,1/2
1/1,1/2,1/3
=====TEST10, false=======*(A)(B)===*(B)(A)
max(A(l=1),B(l=1))
max(B(l=1),A(l=1))
1/1,1/1,1/2
1/1,1/1,1/2
=====TEST10, true=======
max(A(l=2),B(l=2))
max(B(l=2),A(l=2))
1/1,1/1,1/2
1/1,1/1,1/2
=====TEST11, false=======+(A)(B)===+(B)(A)
max(A(l=1),B(l=1))
max(B(l=1),A(l=1))
1/1,1/1,1/1
1/1,1/1,1/1
=====TEST11, true=======
max(A(l=2),B(l=2))
max(B(l=2),A(l=2))
1/1,1/1,1/1
1/1,1/1,1/1
=====TEST12, false=======*(*(A)(B))(C)===*(A)(*(B)(C))
max(max(A(l=1),B(l=1)),C(l=1))
max(A(l=1),max(B(l=1),C(l=1)))
1/1,1/1,1/3
1/1,1/1,1/3
=====TEST12, true=======
max(max(A(l=2),B(l=2)),C(l=2))
max(A(l=2),max(B(l=2),C(l=2)))
1/1,1/1,1/3
1/1,1/1,1/3
=====TEST13, false=======+(+(A)(B))(C)===+(A)(+(B)(C))
max(max(A(l=1),B(l=1)),C(l=1))
max(A(l=1),max(B(l=1),C(l=1)))
1/1,1/1,1/1
1/1,1/1,1/1
=====TEST13, true=======
max(max(A(l=2),B(l=2)),C(l=2))
max(A(l=2),max(B(l=2),C(l=2)))
1/1,1/1,1/1
1/1,1/1,1/1
\end{lstlisting}
}       
\end{document}